\def\eqref#1{equation~\ref{#1}}
\def\1{\bm{1}}
\def\rw{{\textnormal{w}}}
\def\rvp{{\mathbf{p}}}
\def\va{{\bm{a}}}
\def\vb{{\bm{b}}}
\def\vp{{\bm{p}}}
\def\vq{{\bm{q}}}
\def\vw{{\bm{w}}}
\def\vx{{\bm{x}}}
\def\vy{{\bm{y}}}
\def\mA{{\bm{A}}}
\def\mE{{\bm{E}}}
\def\mI{{\bm{I}}}
\def\mJ{{\bm{J}}}
\def\mM{{\bm{M}}}
\def\mP{{\bm{P}}}
\def\mX{{\bm{X}}}
\DeclareMathAlphabet{\mathsfit}{\encodingdefault}{\sfdefault}{m}{sl}
\SetMathAlphabet{\mathsfit}{bold}{\encodingdefault}{\sfdefault}{bx}{n}
\def\sI{{\mathbb{I}}}
\def\sK{{\mathbb{K}}}
\def\sX{{\mathbb{X}}}
\newcommand{\E}{\mathbb{E}}
\newcommand{\R}{\mathbb{R}}
\newcommand{\KL}{D_{\mathrm{KL}}}
\DeclareMathOperator*{\argmin}{arg\,min}
\theoremstyle{plain}
\newtheorem{theorem}{Theorem}[section]
\newtheorem{proposition}[theorem]{Proposition}
\newtheorem{lemma}[theorem]{Lemma}
\theoremstyle{definition}
\newtheorem{definition}[theorem]{Definition}
\newtheorem{assumption}[theorem]{Assumption}
\theoremstyle{remark}
\newtheorem{remark}[theorem]{Remark}
\newtheorem{Def}[theorem]{Definition}
\def\namea{NeuroMixGDP}
\def\nameb{NeuroMixGDP-HS}
\def\rw{RW-Mix}
\def\avg{Avg-Mix}
\newcommand{\yy}[1]{\textcolor{red}{[YY:~#1]}}
\newcommand{\ldh}[1]{\textcolor{blue}{[LDH:~#1]}}
\newcommand{\edited}[1]{#1}
\newcommand{\CY}[1]{\textcolor{magenta}{[CY:~#1]}}
\newcommand{\Prob}{\mathbb{P}}
\newcommand{\Expect}{\mathbb{E}}
\def\valpha{{\bm{\alpha}}}
\title{NeuroMixGDP: A Neural Collapse-Inspired Random Mixup for Private Data Release}
\author{%
  Donghao Li\textsuperscript{1}, ~Yang Cao\textsuperscript{1},~Yuan Yao \textsuperscript{1}\thanks{Correspondence to: Yuan Yao <yuany@ust.hk>.}  \\
  \textsuperscript{1}The Hong Kong University of Science and Technology \\
  \texttt{dlibf@connect.ust.hk, ycaoau@connect.ust.hk, yuany@ust.hk}
}
\begin{document}

\maketitle

\begin{abstract}
Privacy-preserving data release algorithms have gained increasing attention for their ability to protect user privacy while enabling downstream machine learning tasks. However, the utility of current popular algorithms is not always satisfactory. Mixup of raw data provides a new way of data augmentation, which can help improve utility. However, its performance drastically deteriorates when differential privacy (DP) noise is added.
To address this issue, this paper draws inspiration from the recently observed Neural Collapse (NC) phenomenon, which states that the last layer features of a neural network concentrate on the vertices of a simplex as Equiangular Tight Frame (ETF). We propose a scheme to  mixup the Neural Collapse features to exploit the ETF simplex structure and release noisy mixed features to enhance the utility of the released data. By using Gaussian Differential Privacy (GDP), we obtain an asymptotic rate for the optimal mixup degree.
To further enhance the utility and address the label collapse issue when the mixup degree is large, we propose a Hierarchical sampling method to stratify the mixup samples on a small number of classes. This method remarkably improves utility when the number of classes is large. Extensive experiments demonstrate the effectiveness of our proposed method in protecting against attacks and improving utility. In particular, our approach shows significantly improved utility compared to directly training classification networks with DPSGD on CIFAR100 and MiniImagenet datasets, highlighting the benefits of using privacy-preserving data release. We release reproducible code in \url{https://github.com/Lidonghao1996/NeuroMixGDP}.
\end{abstract}

\section{Introduction}\label{section:intro}
Private data publishing is a technique that involves releasing a modified dataset to preserve user privacy while enabling downstream machine learning tasks. While many private data publishing algorithms exist, traditional algorithms (e.g., DPPro \citep{dppro}, PrivBayes \citep{privbayes}, etc.) based on releasing tabular data are not suitable for modern machine learning tasks involving complex structures such as images, videos, and texts. To tackle this, a series of deep learning algorithms have emerged, such as DP-GAN \citep{xie2018DPGAN} and PATE-GAN \citep{yoon2018pategan}, which are based on training a Deep Generative Model (DGM) to generate data with complex structures, such as images, texts, and audios. These methods generate fake data based on the trained DGM and publish it instead of the raw data to respect users' privacy. However, as empirically observed by \citet{P3GM}, these DGM-based methods often suffer from training instability, such as mode collapse and high computational costs and lead to low utility, which is defined as the usefulness of the private data. For example, in the case of classification datasets, utility can be measured by classification accuracy.

DPMix --- a new data publishing technique proposed by \citet{dpmix} --- does not rely on training deep generative models and has the potential to improve utility.  DPMix, as opposed to DGM-based methods, directly adds noise to the raw dataset --- thereby taking into account users' privacy --- and publishes the noisy version of the dataset. Concretely, inspired by \citet{zhang2017mixup}, DPMix first mixes the data points by averaging groups of raw data (with group size $m$), then adds noise to each individual mixture of data points to respect privacy concerns, and finally publishes the noisy mixtures. 
The influence of such an averaging operation on utility is of two-fold: on the one hand, the averaging reduces injected noise that can improve utility under a given DP budget; on the other hand, averaging over large groups of data may obscure data features that cause a loss of utility. Therefore, a trade-off is necessary for the mixup degree $m$ and \citet{dpmix} empirically observe a ``sweet spot'' on $m$ that maximizes the utility under a given privacy budget. However, characterizing the ``sweet spot'' of the mixup degree remains an open problem, even for simple linear models.

{While DPMix \citep{dpmix} has the potential to improve upon existing DGM-based methods (e.g., \citep{xie2018DPGAN}), its full potential is not realized due to a limitation in the input space. To be specific, the input space may suffer the broken "Non-Approximate Collinearity" (NAC) condition, a sufficient condition for mixup training to minimize the original empirical risk \citep{gerong2021}. NAC requires that the Euclidean neighborhoods of samples in any class 
do not lie in the space spanned by samples from other classes. 
When NAC is not satisfied, the virtual samples generated through mixup are often close to raw samples with different labels, leading to label confusion in the training procedure. For example, such a failure is usually observed when there exist samples lying on the decision boundary, as their neighborhoods may directly include samples from other classes and therefore can be easily represented by samples from those classes.}

In contrast to the defect observed in the input space, the recently discovered Neural Collapse (NC) phenomenon \citep{donoho2020prevalence, suweijie2021exploring} reveals a desirable property in the last layer feature space, which motivates us to explore mixup in this space. NC states that when the training loss reaches zero, the last layer features of neural networks collapse to their class mean, which is located at the vertices of a simplex Equiangular Tight Frame (ETF). Consequently, when NC occurs, the last layer features are maximally separated from each other. As a result, in the feature space, the failure of the NAC occurs much more rarely compared to that in the input space. 

\edited{Furthermore, recent progress in foundational models has demonstrated the potential to create universal representations of data that can be utilized for various downstream tasks \citep{clip}. In real-world applications, publishing features is often regarded as a more privacy-preserving approach compared to sharing raw data. For example, in recent Kaggle competitions \citep{kaggle4,kaggle1,kaggle2,kaggle3,kaggle5}, anonymous features are shared for clinical and financial data.} Therefore, in this paper, we propose a new mixup framework based on features of late layers in neural networks, leveraging the desirable properties of the NC phenomenon. {\bf \emph{Our main contributions in this framework lie in the following aspects.}}

The original random weight mixup (\rw) was proposed by \citet{zhang2017mixup} as a data augmentation technique to improve utility without adding privacy protection noise. However, when differential privacy constraints are added, \rw~suffers from sensitivity blowup. To address this issue, we introduce arithmetic means on groups of samples, referred to as \avg, which outperforms various choices of random weight means, as discussed in \cref{sec:algorithm}. 

Moreover, when the mixup degree is large, \avg~with existing subsampling schemes like Poisson Sampling (PS) may experience another issue called \textbf{L}abel \textbf{C}ollapse (\textbf{LC}), where the mixed labels are concentrated on the average label, resulting in a loss of class identifiability after noise injection. To address this issue, we propose a novel \textbf{H}ierarchical \textbf{S}ampling (\textbf{HS}) technique in \cref{sec:algorithm}. HS selects and averages samples within a random subset of classes instead of all classes, which helps distinguish different classes and mitigates LC.

Last but not least, we introduce an asymptotically optimal mixup degree rate using GDP for our mixups. This allows us to optimize the mixup degree by minimizing the prediction error in linear models, thereby addressing an open problem in the literature, as discussed in \cref{sec:gdp}.

In summary, we shall call our algorithm as \textbf{\namea}, for the basic \avg~on neural features, and \textbf{\nameb}~for the one incorporating Hierarchical Sampling. Through extensive experiments in \cref{sec:experiments}, \edited{our framework has demonstrated a state-of-the-art privacy-utility trade-off compared to existing algorithms in feature publishing.   Additionally, \nameb~has shown superior performance to DPSGD on both CIFAR100 and MiniImagenet datasets, particularly in low-budget scenarios. This observation underscores the potential for data publishing algorithms to surpass task-specific algorithms and makes the publishing of neural features a valuable direction in terms of practical application.} Apart from tight DP guarantees, we also show \namea (-HS) could successfully defend against two attacks, namely model inversion attack and membership inference attack, as discussed in \cref{sec:attack}.

\section{NeuroMixGDP: Average Mixup and Hierarchical Sampling} \label{sec:algorithm}

This section discusses two challenges that can arise when using the feature mixup framework: the sensitivity blowup of \rw~and label collapse. We examine how \avg~and HS can be used to address these issues, respectively, and how these approaches informed the development of our novel designs, \namea(-HS). %In the next section, we provide the asymptotically optimal mixup degree rate using GDP.

% \textbf{Enhancing the utility by inducing NAC via feature mixup. } Recently, \citet{gerong2021} found that the absence of NAC can lead to high classification errors in mixup training, especially for large $\alpha$ setting.
% % \yy{You need more elaborations on NAC and why NC can help solve it. With this you can motivate the feature mixup.} 
% Inspired by the observation that last layer features of supervised or self-supervised neural networks converge to the vertices of a simplex ETF (see \cref{simplex} for definition), where NAC is naturally satisfied \citep{donoho2020prevalence,suweijie2021exploring}, we propose to perform mixup at the feature level. Additionally, group invariance learned by feature extractors, such as ScatteringNet or supervised/self-supervised models, can improve the utility of classification tasks. Even linear models with invariant features can achieve decent utility.

\textbf{Reducing sensitivity via \avg. }
First, we give a formal definition of sensitivity and DP mixup.
\begin{definition} The $l_2$-sensitivity $\Delta_f$ of a function $f$ is defined with $S$ and $S'$, which are neighboring datasets meaning that they differ by only one element:
$$\Delta_f:=\max_{S,S'} ||f(S)-f(S')||$$
% where $S$ and $S'$ are neighboring datasets, meaning they differ by only one element.    
\end{definition}
\begin{definition}{Differentially Private Mixup:} \label{def:mixup}
Given a index set $\sI_t=\{i_1,...,i_m \}$ with size $m$, mixup produces virtual feature-label vectors by linear interpolation:
  \begin{align*}
           \tilde \vx&=\sum_{j=1 }^{m}\vw_j \vx_{i_j}+\mathcal{N}(0,( C_wC_x \sigma_{x})^2\mI); 
        \quad \tilde \vy=\sum_{j=1}^{m}\vw_j \vy_{i_j}+\mathcal{N}(0,( C_wC_y \sigma_{y})^2\mI), 
  \end{align*}
  % \end{equation}
\end{definition}
\begin{wrapfigure}{r}{0.5\textwidth}
\begin{minipage}{0.50\textwidth}
\vskip -0.37in
\begin{algorithm}[H]
% \centering
\caption{\namea}\label{algorithm}
\begin{algorithmic}
   \STATE {\bfseries Input:} Dataset $S$ with $n$ samples and $K$ classes, feature extractor $f_1(\theta_1,\cdot)$, mixup degree $m$, class subsampling rate $\vp$, noise scale $\sigma_x$, $\sigma_y$, $l_2$ norm clipping bound $C_x$,$C_y$, output size $T$. \\
   \STATE {\bfseries Output:} DP feature dataset $S_{DP}$ for publishing.
%   \REPEAT
%   \STATE Initialize $noChange = true$.
       \FOR{$i=1$ {\bfseries to} $n$}
            \STATE {Extract feature and clip:} $\vx_i=f_1(\theta_1,\vx_i)$\\
            $ \vx_i=\vx_i/\max(1,||\vx_i||_2/C_x)$\\
            \STATE $ \vy_i=\vy_i/ \max(1,|| \vy_i||_2/C_y)$\
       \ENDFOR
    % \STATE  Expected mixup degree $m=n\sum_{k=1}^{K}\vp_k\vec{q_k}$.
   \FOR{$t=1$ {\bfseries to} $T$}
   \STATE { Generate $\sI_t$ with \textit{\textbf{Poisson subsampling.}}}   \\

    \STATE {\bfseries \avg~and perturb:}\\
 % \STATE \bfseries{Perturb with Gaussian noise:}\\
 $\tilde \vx_t = \frac{1}{m} \sum_{i \in \sI_t} \vx_i +\mathcal{N}(0,( \frac{1}{m}C_x \sigma_{x})^2\mI)  $\\
 $\tilde \vy_t = \frac{1}{m} \sum_{i \in \sI_t} \vy_i +\mathcal{N}(0,(\frac{1}{m} C_y \sigma_y)^2\mI) $\\
 $S_{DP}=S_{DP} \cup \{ (\tilde x_t,\tilde y_t) \}  $
   \ENDFOR
\STATE \hrulefill
% \caption{PS procedure  }
\STATE Procedure of \textit{\textbf{Poisson sampling :}}\\
\STATE {\bfseries Input:} Mixup degree $m$\\
\STATE {\bfseries Output:} Index set $\sI_t$\\
$\vb_i \sim Bernoulli(\frac{m}{n})$ for $i\in [n]$. \\
$\sI_t=\{i | \vb_{i} = 1 ,i \in [n]  \}$.\\
\end{algorithmic}
\end{algorithm}
\vskip -0.3in
\begin{algorithm}[H]
\floatname{algorithm}{Procedure}
% \SetNoFillComment
% \centering
\caption{Hierarchical sampling }\label{alg:bilevel}
% Procedure of  
\begin{algorithmic}
   \STATE {\bfseries Input:} $m$, class subsampling rate $\vp$. \\
   \STATE {\bfseries Output:} Index set $\sI_t$ for mixup.\\
   % \STATE {\bfseries Bi-level subsampling:}   \\
   $\va_k\sim Bernoulli(\vp_k)$ for $k\in [K]$.\\
   $\sK_t:=\{k | \va_k = 1 ,k \in [K]  \}$.\\
    $\vq_k=\frac{m}{n \vp_k}$ for $k\in [K]$.\\
  $\vb_i \sim Bernoulli(\vq_{y_i})$ for $i\in [n]$. \\
   $\sI_t=\{i | \vy_i\in\sK_t, \vb_{i} = 1 ,i \in [n]  \}$.
\end{algorithmic}
\end{algorithm}
    \end{minipage}
\vskip -0.25in
  \end{wrapfigure}
where $\vw_j=\min(C_w,\vw^\prime_j)$ for $1\leq j\leq m$, $\vw^\prime\sim Dir(\valpha)$, 
$Dir$ represents the Dirichlet distribution, $\valpha_j=\alpha$ for $1\leq j\leq m$. 
$C_w$,$C_x$,$C_y$ are parameters for clipping and $\sigma_x$ and $\sigma_y$ control DP noise scale. 
When $m=2$, the \cref{def:mixup} degenerates into two sample mixup by \cite{zhang2017mixup}.
The hyper-parameter $\alpha$ controls the concentration of the weight vector $\vw$: as $\alpha$ approaches 0, the virtual feature-target vectors are closer to one of the original samples, whereas as $\alpha$ approaches infinity, the virtual feature-target vectors are closer to the arithmetic mean of the original samples. We refer to mixup with $\alpha=\infty$ and $C_w=\frac{1}{m}$ as \avg, and for mixup with finite $\alpha$, we use the term Random Weight Mixup (\rw). 
\begin{wrapfigure}{r}{0.33\textwidth}
% \vskip -0.3in
\centering
  % \begin{wrapfigure}{r}{0.30\textwidth}
      % \vskip -0.1in
    % \includegraphics[width=0.40\textwidth]{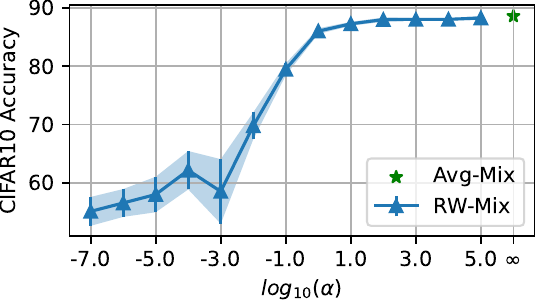}  \quad
    \includegraphics[width=0.33\textwidth]{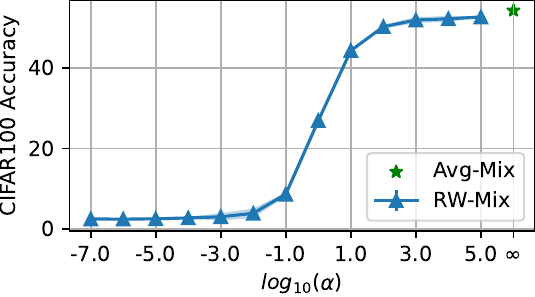} 
    % \caption{With DP $\epsilon=1$, {Avg-Mix outperforms RW-Mix}. }
    \caption{When DP noise is added, Avg-Mix consistently outperforms RW-Mix due to its lower sensitivity.}
    \label{fig:random_weight}
    \vskip -0.30in
\end{wrapfigure}
The choice of $\alpha$ controls the randomness of mixup weight that have a direct effect on the utility. On the one hand, \rw~with small $\alpha$, such as setting $\alpha\in [0.05,0.5]$ as data augmentation \citep{zhang2017mixup}, is widely known for improving the utility. 
On the other hand, \avg~enjoys better utility when DP is considered as shown in \cref{fig:random_weight}. 
This can be explained by the strength of DP noise, which is proportional to the sensitivity of the mixup operation. The sensitivity of \rw~is 1 (or $C_w$ after weight clipping). The sensitivity of \avg~is $1/m$, achieving the minimal for the convex combination of $m$ samples. Under DP contains, the strength of DP noise dominates the utility so we observe that \avg~is always better than \rw~with a grid search on $C_w$. {Based on this observation, we propose a new method called \namea, which performs \avg~operation in the feature space. The algorithm is presented in \cref{algorithm}.}

\textbf{Reducing Label Collapse via Hierarchical Sampling.} 
\cref{fig:my_label} Left displays the distribution of mixed labels for \avg~with and without DP noise when $m=1024$. When DP noise is not present, we observe that the mixed label generated by \avg~with Poisson Sampling (PS) becomes more concentrated on the averaged label $\frac{1}{K}$, where $K$ is the number of classes, as the mixup degree increases, and no dominant label appears. We refer to this phenomenon as \textbf{L}abel \textbf{C}ollapse (LC), as the mixed label collapses to $\frac{1}{K}\vec{1}$. This can be explained by the central limit theorem (CLT), that $\tilde y$ converges to a Gaussian distribution whose variance decreases with the increase of $m$, when the sample size is large enough. Existing sampling schemes fail to avoid LC because they do not take the class into account during sampling.  

\begin{definition} Poisson subsampling:
Given a dataset $S$ of size $n$, the
procedure Poisson subsampling outputs a subset of the data
$\{S_i | a_i=1, i \in [n] \}$ by sampling $ a_i \sim Bernoulli(p)$ independently for $i = 1, ..., n$.
\label{Poissonsampling}
\end{definition}

LC motivated us to develop a new mixup strategy that could generate strong class components while maintaining low sensitivity. \cref{def:bi-level} describes the novel subsampling method.
\begin{definition} \label{def:bi-level} 
{Hierarchical Sampling $HiSample_{\vp,\vq}$ :} 
Given subsampling probabilities $\vp\in [0,1]^K,\vq\in [0,1]^K$ and a dataset $S = \{ (\vx_1,\vy_1), . . . , (\vx_n,\vy_n)\}$ with size $n$ and $K$ classes, the procedure HS first generates a subset of the label: $\sK:=\{k | \va_k = 1 ,k \in [K]  \}$ by sampling $\va_k \sim Bernoulli(\vp_{k})$. Then outputs a subset of samples $\{(\vx_i,\vy_i)| \vy_i\in\sK, \vb_{i,k=\vy_i} = 1 ,i\in[n] \}$ by sampling $\vb_{i,k} \sim Bernoulli(\vq_{k})$ independently. 
\end{definition}

\begin{wrapfigure}{r}{0.45\textwidth}
% \begin{figure}
\vskip -0.5cm
    \centering
    \includegraphics[width=0.22\columnwidth]{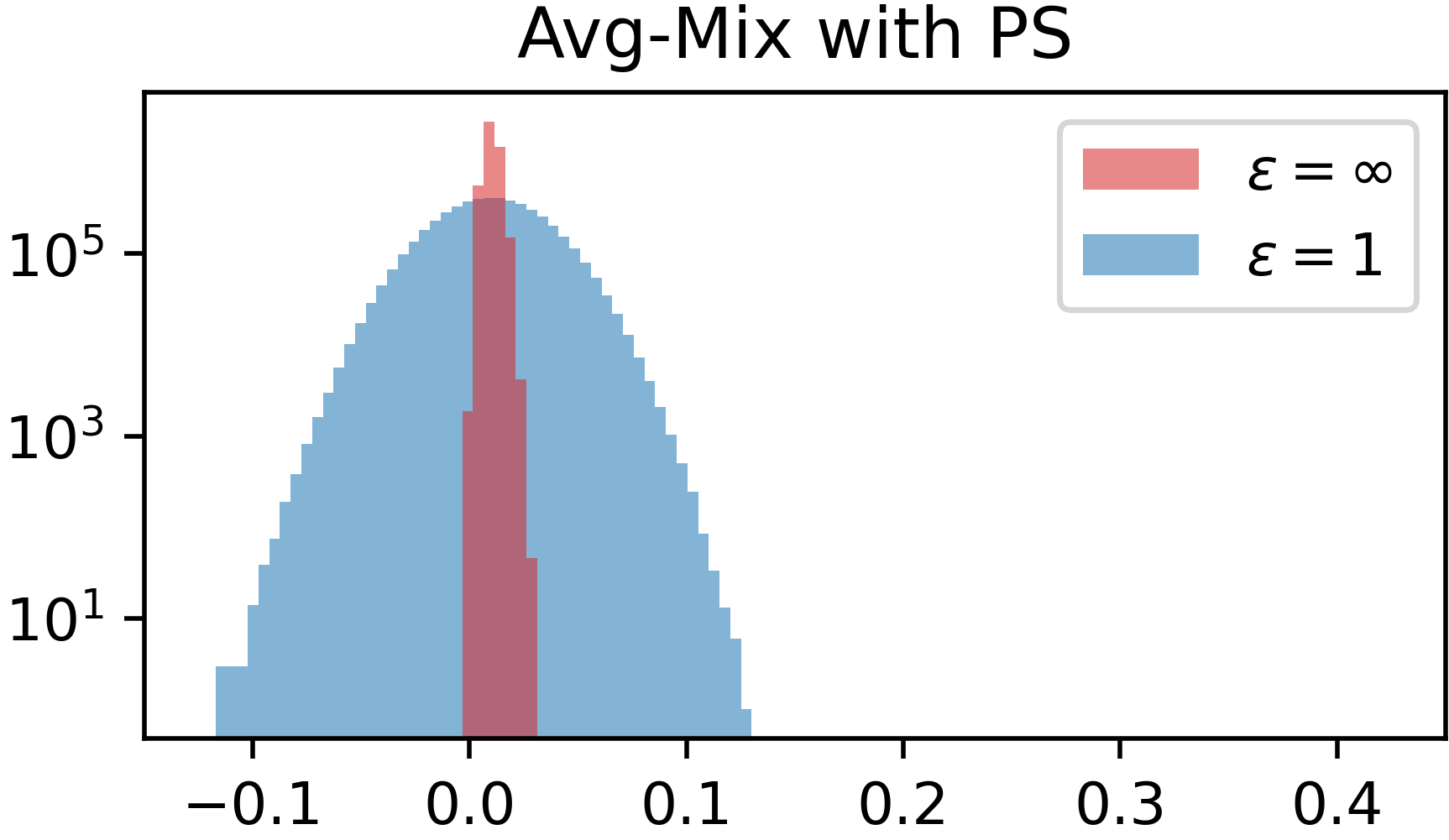} 
    \includegraphics[width=0.22\columnwidth]{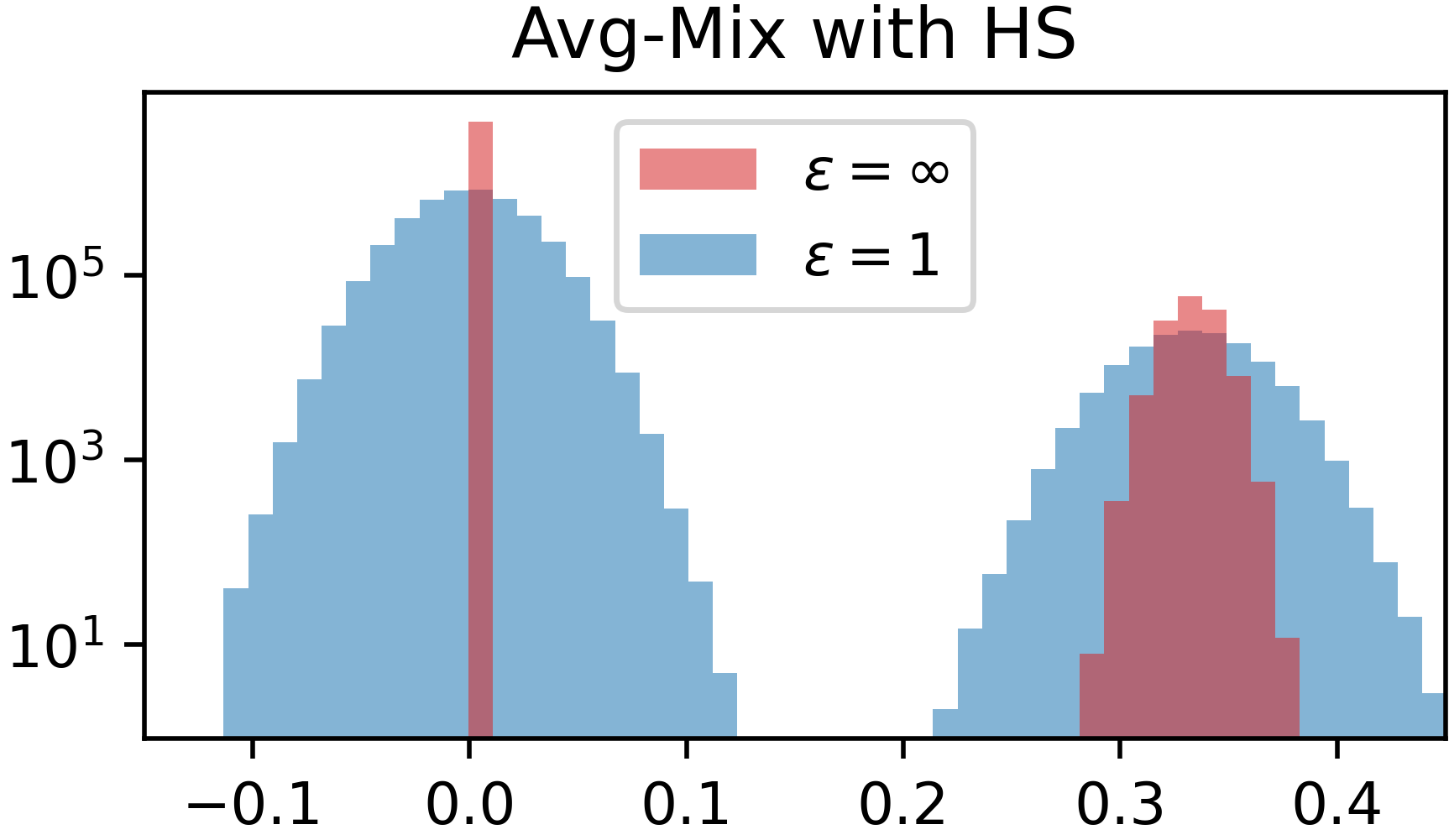}
    % \caption{The issue of Label Collapse, solved by HS.%Distribution of mixed label $\tilde y$.
    % }
    \caption{Histogram of $\tilde \vy$ for PS and HS strategies. $\tilde \vy$ from PS are concentrated on $\frac{1}{K}$ and overwhelmed by DP noise. HS produces strong class components, which are distinguishable from DP noises.}  
    \label{fig:my_label}
% \vskip -1.0cm
\end{wrapfigure}
As a result, we propose \nameb, which employs HS described in Procedure 
 \ref{alg:bilevel}. 
HS randomly outputs samples from a subset of classes, so that only a few classes dominate in the mixed label. HS's subsampling law is provided in \cref{prop:bilevel}.
\cref{fig:my_label} shows that the mixed labels generated by Avg-Mix with HS have two distinct groups. One group concentrates on zero and the other around $1/3$. Since the sensitivity remains $\frac{1}{m}$, DP noise can be reduced as $m$ increases, allowing the two groups to remain distinguishable after noise injection.

\section{Gaussian Differential Privacy and Optimal Mixup Degree}\label{sec:gdp}

In this section, we present the privacy guarantee of \cref{algorithm} within the framework of GDP. Specifically, we provide an asymptotically optimal rate of mixup degree $m=\nu n/\sqrt{T}$, where the constant $\nu$ could be further chosen to maximize the utility of \cref{algorithm} in linear models.

\iffalse
We leave the systematic introduction about the $f$-DP framework and GDP in \cref{sec:fdp_framework}.
%to make this paper self-contained.  
\cref{fdp_thm} offers a rigorous privacy guarantee for \cref{algorithm} under GDP; its proof is left in \cref{proof:gdp}.
\begin{theorem}
\label{fdp_thm}
If $m\sqrt{T}/n\to \nu$ for a constant $\nu>0$ as $T\to \infty$, \cref{algorithm} is asymptotically $\mu$-GDP with 
\begin{equation}\label{eq:mugdp}
    \mu=\nu\sqrt{e^{ 1/\sigma_x^2+1/\sigma_y^2 }-1 } .
\end{equation}
\end{theorem}

In \cref{fdp_thm}, $m\sqrt{T}/n\to \nu$ for a constant $\nu>0$ as $T\to \infty$ appears as a crucial requirement on the order of mixup degree. 

We offer an explicit characterization of GDP noise for given $\mu$ in \cref{coro: dp_noise} by inverting \cref{eq:mugdp}.
\begin{lemma}
\label{coro: dp_noise}
Let $\sigma_{y}=\lambda\sigma_x$, where $\lambda>0$ is a constant.  $\frac{C_x}{m} \sigma_x$ and $\frac{C_y}{m}\sigma_y$ are the standard deviation of noise injected in \cref{algorithm} after considering the sensitivity: 

\begin{align*}
\frac{C_x}{m} \sigma_x= \frac{C_x\sqrt{\lambda^2+1}}{\lambda m\sqrt{\ln\left(1+\frac{\mu^2n^2}{m^2T}\right)}},  \frac{C_y}{m}\sigma_y= \frac{C_y\sqrt{\lambda^2+1}}{m\sqrt{\ln\left(1+\frac{\mu^2n^2}{m^2T}\right)}}.
\end{align*}

Then, when $m\sqrt{T}/n\to \nu$ for a constant $\nu>0$ as $T\to \infty$, \cref{algorithm} is asymptotically  $\mu$-GDP.
\label{sigma_dp}
\end{lemma}
\fi

\subsection{Privacy Guarantee of \cref{algorithm}}

% We first present the privacy guarantee of \cref{algorithm}, which states that with a properly specified choice of $\sigma_x$ and $\sigma_y$, \cref{algorithm} is asymptotic $\mu$-GDP when $m\sqrt{T}/n\to \nu$ for a constant $\nu>0$ as $T\to \infty$.

We present the privacy guarantee of \cref{algorithm} within GDP framework in \cref{fdp_thm} , which states that with a properly specified choice of $\sigma_x$ and $\sigma_y$, \cref{algorithm} is asymptotic $\mu$-GDP when $m\sqrt{T}/n\to \nu$ for a constant $\nu>0$ as $T\to \infty$. Since \citep{dong2019gaussian} show that $\mu$-GDP corresponds to a series of $(\epsilon, \delta)$-DP, we can transfer $\mu$-GDP to $(\epsilon, \delta)$-DP using \cref{mu2eps} if needed.

\begin{theorem}
\label{fdp_thm}
For some $\lambda, \mu>0$, let $\sigma_x$ and $\sigma_y$ be specified as
\begin{align*}
    \sigma_x= \frac{\sqrt{\lambda^2+1}}{\lambda \sqrt{\ln\left(1+\frac{\mu^2n^2}{m^2T}\right)}},  \sigma_y= \frac{\sqrt{\lambda^2+1}}{\sqrt{\ln\left(1+\frac{\mu^2n^2}{m^2T}\right)}}.
\end{align*}
Then \cref{algorithm} is asymptotically $\mu$-GDP if $m\sqrt{T}/n\to \nu$ for a constant $\nu>0$ as $T\to \infty$.
\end{theorem}

The parameter $\lambda$ is used to balance the noise between the feature and label. In practical applications, it is often preferred to choose $\lambda \in [0.2, 4]$. The proof of \cref{fdp_thm} is provided in \cref{proof:gdp}, and a discussion on the relationship between the $f$-DP framework \citep{dong2019gaussian} and GDP is provided in \cref{sec:fdp_framework}. Additionally, by setting $\vq_k=\frac{m}{n\vp_k}$ for HS, \cref{fdp_thm} can also characterize the privacy guarantee of \nameb. The proof of this result is presented in \cref{appsec:bilevel}, which involves developing a subsampling rule for HS.

In \cref{fdp_thm}, the privacy guarantee is ensured only when $m\sqrt{T}/n\to \nu$ for a constant $\nu>0$. In \cref{sec:hypothesis}, we show through the following simple one-sample test example to illustrate that such a requirement is strictly necessary to achieve both statistical utility and $\mu$-GDP protection. When $q_0\sqrt{T}\to \infty$, the limiting hypothesis test is trivially distinguishable, and there will be no privacy protection. When $q_0\sqrt{T}\to 0$, the limiting hypothesis test is completely indistinguishable, which corresponds to the perfect protection of privacy. However, in this case, the information of any sample will be overwhelmed by the Gaussian noise, which makes the released dataset not desirable for data analysis. 

%\subsection{A Hypothesis Test Interpretation} \label{sec:hypothesistesting}
% \ldh{How about one sample test example with bi-sampling}

%In this section, we use an one sample test example to illustrate why the rate $m\sqrt{T}/n\to \nu$ for a constant $\nu>0$ is necessary.

\subsection{Characterizing the Sweet Spot in Linear Models} 
\label{sec:analysis}
\citet{dpmix} empirically observes that there exists a ``sweet spot'' choice of mixup degree $m$ which can maximize the utility of mixup algorithms. As a theoretical complement to the empirical observation, in this section, we further characterize such a ``sweet spot'' rigorously on the basic linear models.
Specifically, we show that 
the $\ell_2$ error on the least square estimator of the released private data is 
minimized by $m^*=\nu^* n\sqrt{T}$, where 
$\nu^*=\arg\max_\nu\nu\log(1+\mu^2/\nu^2)$.

\textbf{Problem Setup.} Consider linear regression model defined by
\begin{align*}
    \vy = \mX \beta^* + \bm\epsilon,
\end{align*}
where $\mX\in \R^{n\times p}$ is the design matrix satisfying $\lambda_{\min}<\mathrm{eigen}\left(\mX^T\mX/n\right)<\lambda_{\max}$ for some $\lambda_{\min}, \lambda_{\max}>0$, $\bm\beta^*\in \R^p$ is the true regression parameter, $\bm\epsilon\in \R^n$ is the i.i.d. mean zero Gaussian noise with variance $\sigma^2$ and $y\in\R^n$ is the response vector. Each column of $X$ and $Y$ are normalized to have zero mean before the mixup procedure.

Let $\tilde{\mX}$, $\tilde{\bm y}$ be the dataset generated after mixing up the features and adding DP noise, i.e.
\begin{align*}
    \tilde{\mX}  = \mM X + \mE_X, \quad \tilde{\vy}  = \mM \vy + \mE_Y,
\end{align*}
where $\mM\in \R^{T\times n}$ is the random mixup matrix. Elements of $\mM$ are i.i.d. random variables, which take $\frac{1}{m}$ with probability $\frac{m}{n}$ and take $0$ with probability $1-\frac{m}{n}$. $\mE_X\in\R^{T\times p}$, $\mE_Y\in \R^{T\times 1}$ are the random DP-noise matrices with their elements being i.i.d. Gaussian random variables respectively as defined in \cref{algorithm} and \cref{fdp_thm}. Here, for simplicity, we denote $C_X = {C_x\sqrt{\lambda^2+1}}/{\lambda}$ and $C_Y=C_y\sqrt{\lambda^2+1}$.

\textbf{Main Result.}
Our target is to give an analysis on the $\ell_2$ loss of the least square estimator for $\beta^*$ based on $\tilde{\mX}$, $\tilde{y}$. Consider the well-known least square estimator $\tilde{\beta}$ given by
\begin{align*}
    \tilde{\bm\beta} = \left[\tilde{\mX}^T\tilde{\mX}\right]^{-1}\tilde{\mX}^T\tilde{\vy}.
\end{align*}

\iffalse
The least square estimator $\tilde{\bm\beta}$ admits the following bias-variance decomposition
{
    \begin{align}
    \tilde{\bm\beta} - \bm\beta^* = & \underbrace{\Expect_\bm\epsilon [\tilde{\bm\beta}]-\bm\beta^* }_{bias} +  \underbrace{\tilde{\bm\beta} - \Expect_\bm\epsilon [\tilde{\bm\beta}]}_{variance}.\\
% \end{align}
% \begin{align}
%     \Expect_\bm\epsilon [\tilde{\beta}] - \beta^*
%     = & \left[(\mM\mX+\mE_X)^T(\mM\mX+\mE_X)\right]^{-1}(\mM\mX+\mE_X)^T (-E_X\beta^*+E_Y),\\
%     \tilde{\beta} - \Expect_\bm\epsilon [\tilde{\beta}]
%     = & \left[(\mM\mX+\mE_X)^T(\mM\mX+\mE_X)\right]^{-1}(\mM\mX+\mE_X)^TM\bm\epsilon.
% \end{align}
% \begin{align}
    \Expect_\bm\epsilon [\tilde{\beta}] - \beta^*
    = & \left[\tilde{X}^T\tilde{X}\right]^{-1}\tilde{X}^T (-E_X\beta^*+E_Y),\\
    \tilde{\beta} - \Expect_\bm\epsilon [\tilde{\beta}]
    = & \left[\tilde{X}^T\tilde{X}\right]^{-1}\tilde{X}^TM\bm\epsilon.
\end{align}
}

Thus if the DP noise is taken to be zero, the bias term $\Expect_\bm\epsilon [\tilde{\beta}] - \beta^*$ will be exactly zero and one can expect better performance in $\ell_2$ loss in $\tilde{\beta}$ in such case. 
\fi
% \CY{Yet the comparison between the random mixup matrix $M$ and $M=I_n$ remains unclear as taking $M=I_n$ will lead to $T=n$, thus makes RMT inapplicable for the random mixup matrix in this case.}

The following theorem characterizes $\|\tilde{\bm\beta}-\bm\beta^*\|_2$, whose proof is in \cref{sec: proof thm}.%, based on a random matrix theoretical characterization of extreme eigenvalues of $M^T M$, etc.
\begin{theorem}
    \label{thm: l_2 loss}
    Consider $n,m,T\to \infty$ with $\frac{n}{T}\to\alpha$ for some $0\le\alpha<1$ and $\frac{m^2T}{n^2}\to \nu^2$ for $\mu$-GDP. Then there holds the following with probability one when $n\to\infty$:
    \iffalse
    \begin{align}
        \|\tilde{\bm\beta} - \bm\beta^*\|_2
        \le \underbrace{\frac{2C_X\|\bm\beta^*\|_2+2C_Y}{\sqrt{\lambda_{\min}\sqrt{\alpha}}(1-\sqrt{\alpha})} \cdot  \frac{n^{-1/4}}{\sqrt{ \nu \ln \left(1+\frac{\mu^2}{\nu^2}\right)}} }_{bias}  +\underbrace{\frac{2\sigma (1+\sqrt{\alpha})}{(1-\sqrt{\alpha})}\sqrt{\frac{\lambda_{\max}}{\lambda_{\min}}}\cdot \sqrt{\frac{p}{n}}\ln n}_{variance},\label{eq: l_2 loss}
    \end{align}
    \fi
    \begin{align}
        \|\tilde{\bm\beta} - \bm\beta^*\|_2
        &\le \underbrace{\frac{2C_X\|\bm\beta^*\|_2+2C_Y}{\sqrt{\lambda_{\min}}(1-\sqrt{\alpha})} \cdot  \frac{1}{\sqrt{ \nu \ln \left(1+\frac{\mu^2}{\nu^2}\right)}}\cdot \frac{T^{\frac{1}{4}}}{\sqrt{n}} }_{bias}  
        +\underbrace{\frac{2\sigma (1+\sqrt{\alpha})}{(1-\sqrt{\alpha})}\sqrt{\frac{\lambda_{\max}}{\lambda_{\min}}}\cdot \sqrt{\frac{p}{n}}\ln n}_{variance}.\label{eq: l_2 loss}
    \end{align}    
    %\CY{When the condition $\frac{m^2T}{n^2}\to \nu$ holds, this theorem can be used to analysis the $\ell_2$ loss on $\tilde{\beta}$ when $m\to\infty$, yet the case $m=O(1)$ remains a problem.}
\end{theorem}

\begin{remark}
For fixed $p$ and $T\to\infty$,  \cref{eq: l_2 loss} suggests that the estimation error is dominated by the \emph{bias}, where the bias is minimized by choosing $\nu^* = \argmin_\nu \nu \ln \left(1+{\mu^2}/{\nu^2}\right)$. 
%Given $n$ and $\alpha=n/T$, there is an optimal $\nu^*$ such that the bound (bias) is minimized ($\nu \ln \left(1+{\mu^2}/{\nu^2}\right)$ is maximized). 
\cref{fig:simulation_error} validates by simulations with settings in \cref{sec:simulation} that Eq. \eqref{eq: l_2 loss} reflects the optimal choice of $\nu^*$ well. 
Furthermore, \cref{thm: l_2 loss} also suggests that $\nu^*$ increases when $\mu$ increases, which is confirmed by both simulation experiments (\cref{fig:simulation_error}) and real-world classification tasks (\cref{fig:m}). 
\end{remark}

\textbf{Comparison with Theorem 2 in \citet{dpmix}. }
\citet{dpmix} provided a training MSE bound with the help of the convergence theory of SGD by taking $\E[\mM^T \mM]=\mI/(mn)$ and $m=o(n)$, which ignores the random matrix effect of $\mM$. Theorem 2 in \citet{dpmix} states that given $T$, the training MSE of a linear regression model is a monotonically increasing function of $\sigma_{x}^2,\sigma_y^2$ both of which are decreasing as $m$ increase. Based on this observation, they suggest choosing the maximal $m=n$ for utility maximization since $\sigma_{x}^2,\sigma_y^2$ are minimized when $m=n$. This contradicts the empirical studies, e.g., \cref{fig:m}. This is because the analysis by \citet{dpmix} relies on the assumption $m\ll n$ and if it is not satisfied, the random matrix effect of $\mM$ can not be ignored. On the contrary, \cref{thm: l_2 loss} here shows that the random matrix effect on the extreme eigenvalue distributions of $\mM^T \mM$ is necessary, under which the least square estimation error bound could be minimized at $m^*=\nu^* n/\sqrt{T}$. %\ldh{While, we show a bias-variance trade-off for linear regression models with GDPMix and the bias term dominants the estimation error so that one could minimizing the estimation error with $\nu^*$. For given $n,T$, such $\nu^*$ corresponding to a sweet spot $m^*$. The main difference is we show $\|\tilde{\beta}-\beta^*\|_2$ will increase when $m>\nu^*n/T$.} 
%\ldh{Our \cref{thm: l_2 loss} also considers a more general case when $T=O(n^{\gamma})$, the rate of $m$ should be $O(n^\frac{2-\gamma}{2})$. When $m\gg O(n^\frac{2-\gamma}{2})$, $\nu\to0$, which means the data will be in perfect protection but no utility should be expected. When $m\ll O(n^\frac{2-\gamma}{2})$, $\nu\to\infty$ and no privacy guarantees could be achieved.   }

\section{Experiments}
\label{sec:experiments}
This section showcases the effectiveness of NeuroMixGDP(-HS) in improving utility and achieving state-of-the-art privacy-utility trade-offs among data-releasing algorithms. We also compare our method with DPSGD \citep{handcraft} and demonstrate comparable or better utility. 
Remarkably, in scenarios where the number of classes is large and the privacy budget is small, our method outperforms DPSGD by a significant margin, underscoring the potential of data-releasing algorithms.

\textbf{Experiment Setup}. Four widely used datasets are selected: MNIST \citep{mnist}, CIFAR10/100 \citep{cifar10}, and Miniimagenet \citep{imagenet}. A detailed description of datasets will be shown in \cref{sec:dataset}.  We set $\delta=10^{-5}\approx 1/2n$ and show utility with different $\epsilon$ from $0.1$ to $10$. We report the mean and standard deviation of the accuracy from the last epoch over five independent experiments. The details of the experiment setting and models are listed in \cref{sec:detailed_setting}. We also leave the discussion of the hyper-parameter setting in \cref{sec:hyperparameter}. 
% {All the experiments are conducted by a single server with 8 NVIDIA RTX2080TI GPUs. }

 \begin{figure*}[htb]
 % \tiny
    \centering
\includegraphics[width=1.0\textwidth]{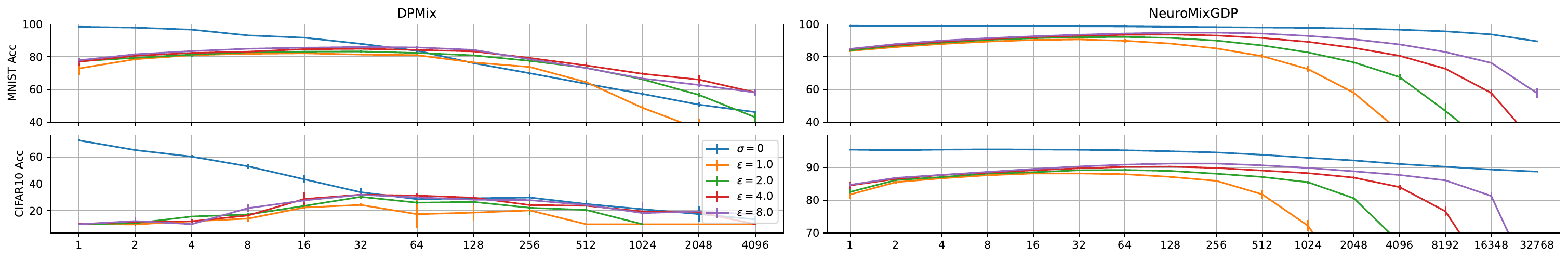}
    \caption{These figures show how utility changes w.r.t. $m$ and $\mu$ for DPMix and \namea. }
    \label{fig:m}
\end{figure*}

{\textbf{Improving Utility via Neural Collapsed Features. \label{benefit_feature_mixup_sec}} 
Using NC features in \namea~ can potentially improve NAC condition and utility, especially for high mixup degree and large models.

First, as the feature extractor is not trained on the private data, exact collapse may not hold in the transfer learning setting. To investigate this issue, we evaluated the NC metric \citep{donoho2020prevalence}, which measures the magnitude of within-class covariance relative to between-class covariance of the features. Our findings indicate that feature extraction of a pre-trained ResNet model by SimCLR reduces the NC metric from 45.34 and 164.09 to 0.49 and 3.03 for CIFAR10 and CIFAR100, respectively, suggesting that NC does occur in our transfer learning setting. To further validate if NC could induce NAC, we measure the Minimum Euclidean Distance (MED), as done by \citet{gerong2021}. A larger MED indicates a larger neighborhood satisfies NAC. The feature extraction increases MED values from 0.4003 and 0.3319 to 0.4874 and 0.4971 for CIFAR10 and CIFAR100 datasets, respectively (see \cref{sec:MED} for detailed experiment settings).

\begin{wraptable}{R}{0.45\textwidth}
\vskip -0.2in
% \begin{table}
\small
\centering
\caption{CIFAR10 test accuracy (\%) with different models. We show the convergence test accuracy before ``+'' and the improvement if we report the best accuracy during training after ``+''. Due to page limits, we leave results with $\epsilon=1,2,4$ to Appendix \cref{larger_model_full}.
\label{larger_model}
}
\setlength\tabcolsep{1.0pt}
% \vskip 0.1in
\begin{tabular}{@{}lcc@{}}
\toprule
                 & $\epsilon=8$ & Non-private    \\ \midrule
DPMix CNN        & 32.92+3.01   & 79.22 \\
DPMix ResNet-50   & 15.52+16.19  & 94.31 \\
DPMix ResNet-152  & 15.35+15.31  & 95.92 \\
Ours ResNet-50  & 80.75+0.04   & 94.31 \\
Ours ResNet-152 & 90.83+0.04   & 95.92 \\ \bottomrule
\end{tabular}
% \vskip -0.1in
% \end{table}
 \end{wraptable}
Next, in \cref{fig:m}, we observe that as $m$ increases, the noise-free utility ($\sigma=0$) of DPMix drops rapidly, which becomes the bottleneck for improving its utility. While the utility decay of \namea~is not as severe as DPMix. The reason is that mixup can fail when NAC is not satisfied, especially when the mixing coefficient $\alpha$ is large \citep{gerong2021}. The input spaces do not satisfied NAC and DPMix, corresponding to $\alpha=\infty$, is highly sensitive to NAC. However, by applying feature extractors, features with a simplex ETF structure are induced, where samples cannot be represented by an affine combination of samples from other classes, naturally satisfying NAC.

\cref{larger_model} summarizes that \namea~could benefit from large feature extractors and significantly improve utility. In contrast, using larger pre-trained networks leads to a degradation of utility for DPMix. These phenomena suggest that the image space may not be an interpolation space in the sense that the mixup of input images does not necessarily lead to natural images with good features. Conversely, feature space favors interpolation, making linear classifiers work better. This is not surprising since feature extractors have a variety of invariant properties, such as ScatteringNets being (locally) translation/rotation/deformation invariant, and supervised or self-supervised pre-trained models being class or instance invariant. The interpolation of such invariant features naturally fits the interpolation of labels in linear models.

\begin{wrapfigure}{r}{0.33\textwidth}
% \begin{figure}[H]
\vskip -0.15in
    \centering
    % \includegraphics[width=0.40\textwidth]{} 
    % \quad
    \includegraphics[width=0.330\textwidth]{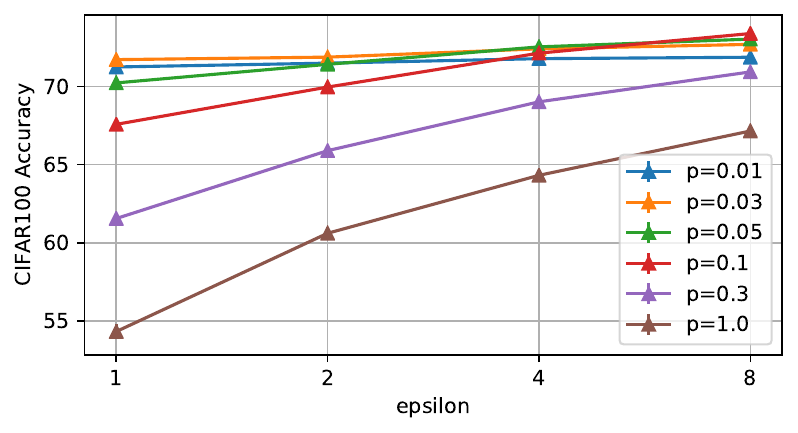}
    \caption{HS ($p<1.0$) enjoys better utility than PS ($p=1.0$).}
    \label{fig:bi-level}
% \vskip -0.5in
\end{wrapfigure}
% \end{figure}
\textbf{Improving Utility via Hierarchical Sampling.}
\cref{fig:bi-level} investigates the effect of HS. Since CIFAR10 and CIFAR100 are balanced datasets, we set $\vp_k=p$ for all $k$. Smaller values of $p$ correspond to mixup within fewer classes. When $p=1.0$, HS reduces to PS. 
There are two key observations. First, HS consistently achieves better utility than PS and the optimal choice of $p$ depends on the privacy budget. 
Secondly, HS enjoys larger improvement on datasets with more classes, for example, CIFAR100. 
This can be explained by the fact that the mixed labels collapse to $1/K$. So, for datasets with larger $K$, mixed samples become closer to 0 and more indistinguishable after adding DP noise, resulting in greater degradation of utility. Therefore, by making mixed labels concentrate on ${1}/({Kp})$ and solving the label collapse problem, HS is more effective for datasets with large $K$. 

\begin{wraptable}{R}{0.25\textwidth}
 % \begin{table}[t]
 \vskip -0.15in
  \centering
  \tiny
 \caption{MNIST test accuracy reported in literature.}
 \label{tab:mnist}
 \setlength\tabcolsep{1.0pt}
 \begin{tabular}{@{}lll@{}}
\toprule
Algorithm                            & { $\epsilon=1$} & { $\epsilon=10$} \\ \midrule
DP-GAN   \citep{xie2018DPGAN}       & 40.36              & 80.11               \\
PATE-GAN   \citep{yoon2018pategan}  & 41.68              & 66.67               \\
GS-WGAN   \citep{chen2020gswgan}    & 14.32              & 80.75               \\
G-PATE   \citep{long2021gpate}      & 58.1               & 80.92               \\
DataLens   \citep{wang2021datalens} & 71.23              & 80.66               \\
DP-Sinkhorn   \citep{DP-Sinkhorn}   & -                  & 83.2                \\
DP-MERF    \citep{Dp-merf}          & -                  & 68                  \\
PEARL \citep{2021pearl}             & 76.0               & 76.5                \\
P3GM \citep{P3GM}                   & 79.5               & 84.21               \\
DP-HP \citep{dphp}                  & 82                 & -                   \\
DP-MEPF \citep{dpmprf}              & 89.3               & 89.8                \\
DPMix \citep{dpmix}                 & 82.16              & 86.61               \\
\textbf{\namea}                           & \textbf{89.82}     & \textbf{94.48}      \\
\textbf{\namea-HS}                        & \textbf{92.71}     & \textbf{96.17}      \\ \bottomrule
\end{tabular}
 % \small
%  \centering
% \end{table}
\end{wraptable}
\textbf{Utility Comparisons with Data Releasing Algorithm. }
First, we present a comparison of 12 different methods reported in the literature on the MNIST test accuracy in \cref{tab:mnist}, and the P3GM, DP-HP, and DP-MERF methods demonstrate good utility. Therefore, we test the utility of these three methods for feature release on the CIFAR10/100 dataset. However, since the DP-MEPF algorithm is not suitable for feature data release, we will use DP-MERF as a substitute as DP-MEPF is a variant of DP-MERF\citep{Dp-merf}. The privacy-utility trade-offs on the CIFAR10/100 dataset with and without feature extraction are reported in \cref{main_res}.
For a fair comparison, we employ the same SimCLR feature extractor for all methods. Our observations reveal two important findings. First, data releasing without feature extraction fails to provide reasonable utility, particularly for the CIFAR100 dataset, where the test accuracies are close to random guessing. While feature release significantly enhances utility.
Second, our method outperforms other methods in the feature releasing setting, indicating its potential to become the new state-of-the-art approach. Furthermore, the advantage of our method is more pronounced in complex data distributions, such as CIFAR100. Unlike noisy DGMs used by other methods, which often struggle to capture the distribution due to insufficient training and low statistical utility, our method directly releases data without relying on such models.

\begin{table*}[t]
\centering
\small
\caption{Test accuracy (\%) of data releasing algorithms with and without feature extraction. }\label{main_res}
% \vskip 0.15in
\setlength\tabcolsep{1.5pt}
\begin{tabular}
{@{}ccccccccc@{}}
\toprule
             & \multicolumn{4}{c|}{CIFAR10 raw image   releasing}                     & \multicolumn{4}{c}{CIFAR10 SimCLR feature   releasing}                    \\ \midrule
             & P3GM           & DP-MERF        & DP-HP          & DPMix          & P3GM           & DP-MERF         & DP-HP          & Ours           \\
$\epsilon=1$ & 17.47$\pm$3.27 & 16.08$\pm$0.96 & 16.63$\pm$1.63 & 24.36$\pm$1.25 & 23.25$\pm$2.37 & 59.21$\pm$5.35  & 40.16$\pm$2.64 & \textbf{90.46$\pm$0.15} \\
$\epsilon=2$ & 15.00$\pm$2.82 & 26.13$\pm$0.73 & 15.35$\pm$0.53 & 30.37$\pm$1.30 & 29.22$\pm$2.92 & 63.68$\pm$13.32 & 82.88$\pm$1.45 & \textbf{91.19$\pm$0.16} \\
$\epsilon=4$ & 14.86$\pm$6.37 & 28.22$\pm$0.76 & 15.32$\pm$0.09 & 31.97$\pm$0.71 & 38.39$\pm$3.21 & 65.62$\pm$9.94  & 86.37$\pm$3.90 & \textbf{91.88$\pm$0.12} \\
$\epsilon=8$ & 18.20$\pm$4.16 & 28.68$\pm$0.82 & 13.88$\pm$0.38 & 32.92$\pm$0.94 & 44.02$\pm$6.29 & 67.48$\pm$10.48 & 87.17$\pm$3.08 & \textbf{92.53$\pm$0.06} \\
            \midrule
             & \multicolumn{4}{c|}{CIFAR100 raw image   releasing}                    & \multicolumn{4}{c}{CIFAR100 SimCLR feature   releasing}                   \\
            \midrule
             & P3GM           & DP-MERF        & DP-HP          & DPMix          & P3GM           & DP-MERF         & DP-HP          & Ours           \\
$\epsilon=1$ & 1.00$\pm$0.02  & 0.96$\pm$0.17  & 1.40$\pm$0.16  & 4.5$\pm$0.47   & 4.50$\pm$1.26  & 3.24$\pm$0.96   & 1.89$\pm$0.29  & \textbf{71.72$\pm$0.09} \\
$\epsilon=2$ & 1.01$\pm$0.03  & 0.97$\pm$0.13  & 1.56$\pm$0.19  & 4.2$\pm$0.46   & 4.21$\pm$1.41  & 4.68$\pm$1.08   & 1.60$\pm$0.18  & \textbf{71.88$\pm$0.14} \\
$\epsilon=4$ & 0.99$\pm$0.01  & 1.27$\pm$0.22  & 2.37$\pm$0.05  & 6.58$\pm$0.23  & 6.58$\pm$1.96  & 5.54$\pm$1.16   & 3.55$\pm$0.81  & \textbf{72.54$\pm$0.08} \\
$\epsilon=8$ & 1.02$\pm$0.02  & 1.31$\pm$0.14  & 2.25$\pm$0.03  & 7.26$\pm$1.25  & 7.26$\pm$2.95  & 6.72$\pm$0.92   & 18.72$\pm$3.73 & \textbf{73.39$\pm$0.20} \\ 
\bottomrule
\end{tabular}
% \vskip -0.1in
\end{table*}

\begin{figure}[h]
\centering
\begin{subfigure}[h]{1.0\linewidth}
\centering
\includegraphics[width=0.3\columnwidth]{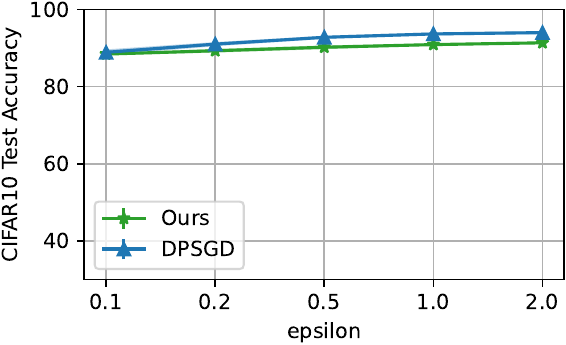}
\includegraphics[width=0.3\columnwidth]{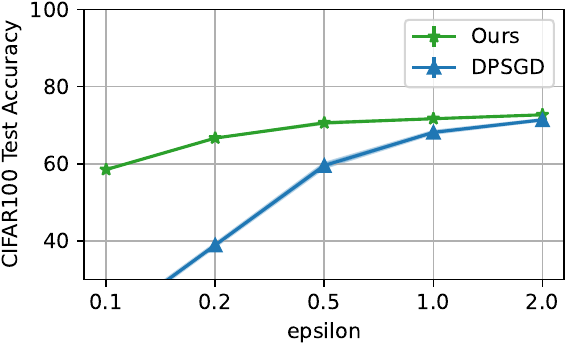}
\includegraphics[width=0.3\columnwidth]{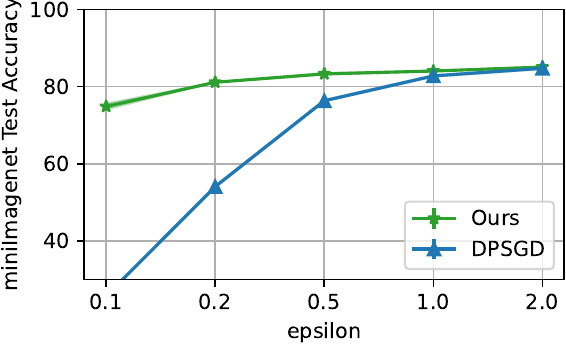}
\vskip -0.2cm
\caption{Using SimCLR feature.}
\end{subfigure}
\begin{subfigure}[h]{1.0\linewidth}
\centering
\includegraphics[width=0.3\columnwidth]{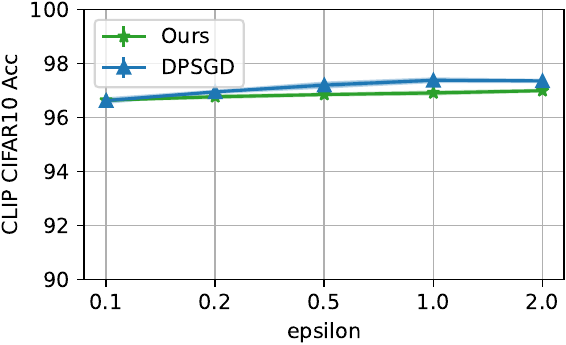}
\includegraphics[width=0.3\columnwidth]{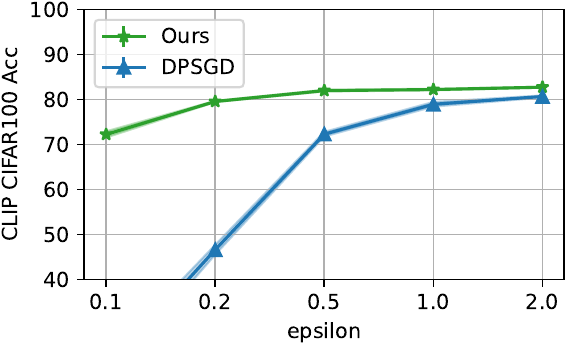}
\includegraphics[width=0.3\columnwidth]{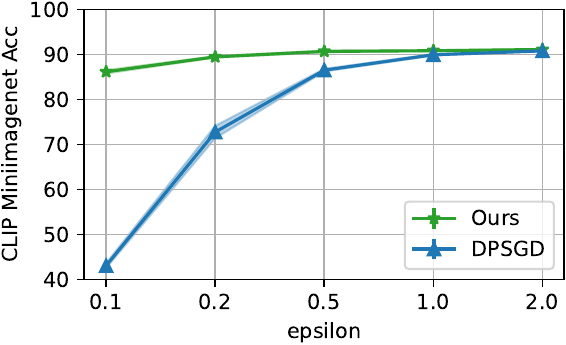}
\vskip -0.2cm
\caption{Using CLIP feature.}
\end{subfigure}
\caption{ Test accuracy of DPSGD and \nameb~on CIFAR10/100 and Mini-Imagenet. }
\label{fig:dpsgd}
% \vskip -0.2in
\end{figure}
\textbf{Utility Comparisons with DPSGD. } To ensure a fair comparison, we use GDP accountant for both methods and conduct a grid search to determine the optimal hyper-parameters for DPSGD, as it is known to be sensitive to hyper-parameter \citep{handcraft}. Moreover, to demonstrate the potential of utilizing the multimodal foundation models, we also tried CLIP \citep{clip}, as feature extractor, which has been shown that a fixed-weight feature extractor combined with a linear prob classifier surpassed the accuracy of end-to-end supervised training \citep{clip}.

As shown in \cref{fig:dpsgd}, \nameb, achieves comparable utility to DPSGD on the CIFAR10 dataset, and the gap between the two methods decreases as the privacy budget becomes tighter. When evaluating CIFAR100 and miniImagenet, which have more classes than CIFAR10, our method demonstrates a clear advantage, especially in the small-budget region. Besides, using CLIP feature further improves the utility, demonstrating the potential of utilizing foundational models.

The poor performance of DPSGD can be attributed to two factors. First, DP methods face a curse of dimensionality \citep{papernot2021tempered}, as each dimension is injected with noise, and DPSGD has much larger noise dimensions. In the linear classifier setting, DPSGD injects noise to gradients with size $d_x \times K$, where $d_x$ is the dimension of the features and $K$ is the number of classes. In contrast, the noise dimension is $d_x + K$ for our method. Therefore, as $K$ increases, our method should achieve better utility than DPSGD.
Furthermore, noisy gradients and limited iterations make it challenging for DPSGD to converge. In contrast, the optimization with privacy-preserving data released by our method can always obtain converged models without worrying about further leakage.
% When the privacy budget is small and the DP noise is larger, and the number of iterations is limited, the convergence of DPSGD can be significantly affected.

\section{Attacking \namea}\label{sec:attack}
\begin{wrapfigure}{R}{0.33\textwidth}
      \centering
    \setlength\tabcolsep{1pt}
    \begin{tabular}{cccc}
        \includegraphics[height = 2.0cm]{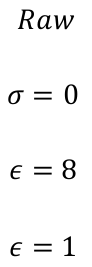}&\includegraphics[height = 2.0cm]{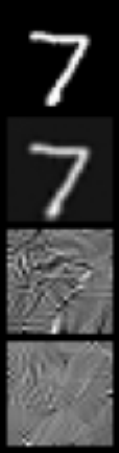} &\includegraphics[height = 2.0cm]{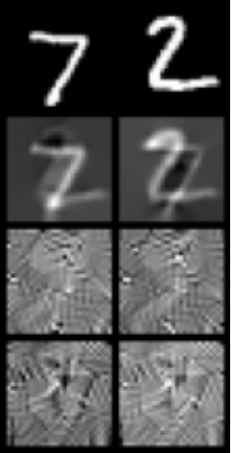}&
        \includegraphics[height = 2.0cm]{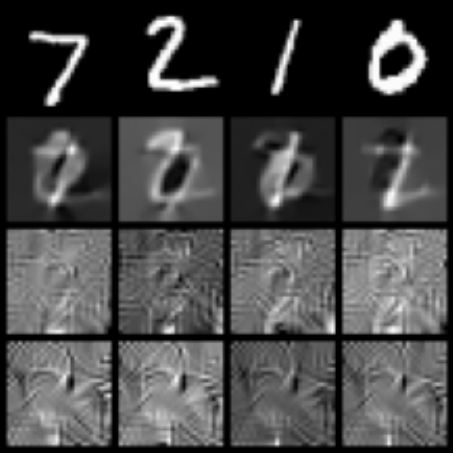}\\
        % $m=1$ & $m=2$ & $m=4$  \\
        \includegraphics[height = 2.0cm]{figure/attack/labels4MIA.pdf}&\includegraphics[height = 2.0cm]{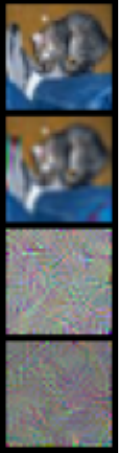} &\includegraphics[height = 2.0cm]{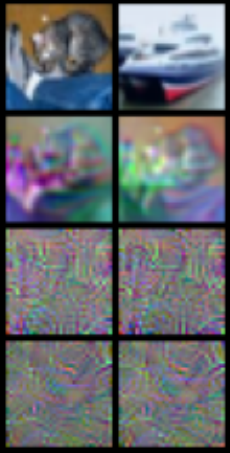}&
        \includegraphics[height = 2.0cm]{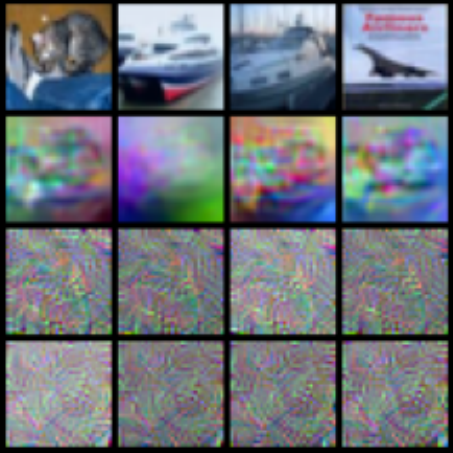}\\
        &$m=1$ & $m=2$ & $m=4$ \\
    \end{tabular}
        \caption{MIA against \namea.}
    \label{fig:mixed_1}
\end{wrapfigure}
\textbf{Model Inversion Attack (MIA).} \citep{he2019modelinversion} has successfully recovered the raw data from released features. We want to know whether our method could defend against this type of attack, i.e. MIA. We modified the white-box attack in \citet{he2019modelinversion} to attack \namea. The details about the attack algorithm and more attacking results ($m=8,16,32,64$) are shown in \cref{sec:modelinversion} due to page limit. The visualization results are shown in \cref{fig:mixed_1}. 
% \begin{wrapfigure}{r}{0.35\textwidth}

The protection of~\namea~is three-fold. First, MIA on very deep models is weak, for example, Appendix \cref{fig:mia_resnet} shows inverting ResNet-152 is quite hard in practice. However, this protection is weak if feature extractors are small and shallow. For example, when $m=1$ and $\sigma=0$, MIA could perfectly recover the raw images from the ScatteringNet features. Second, mixup makes the feature contains information from different images. When $m$ is large enough, for example, $m=64$, the recovered images look entangled, and individual information is hidden. However, when $m\leq 4$, we can still recognize digits and objects from the entangled images. Finally, DP gives us a unified trade-off between $m$ and the noise scale and ensures the noise is strong enough to perturb any individual feature. For $\epsilon=1$ or $8$, the recovered images look like pure noise with some random pattern, which might be caused by the Gaussian noise and total variation prior. This protection can be reliable even when the privacy budget is quite loose in theory.

% In this section, we conduct membership inference on 
\begin{wraptable}{R}{0.30\textwidth}
\tiny
\begin{tabular}{@{}ccc@{}}
\toprule
               & Test Acc       & AUC               \\ \midrule
Non-private    & 84.45$\pm$0.02 & 0.5735$\pm$0.0000 \\
$\sigma=0$     & 74.57$\pm$0.04 & 0.5160$\pm$0.0002 \\
$\epsilon=8$   & 73.39$\pm$0.20 & 0.5117$\pm$0.0001 \\
$\epsilon=4$   & 72.54$\pm$0.08 & 0.5117$\pm$0.0004 \\
$\epsilon=2$   & 71.88$\pm$0.14 & 0.5107$\pm$0.0003 \\
$\epsilon=1$   & 71.72$\pm$0.09 & 0.5090$\pm$0.0003 \\
% $\epsilon=0.5$ & 70.23$\pm$0.04 & 0.5096$\pm$0.0004 \\
% $\epsilon=0.2$ & 65.78$\pm$0.41 & 0.5066$\pm$0.0004 \\
% $\epsilon=0.1$ & 57.53$\pm$0.64 & 0.5056$\pm$0.0009 \\
\bottomrule
\end{tabular}
\caption{Membership inference on CIFAR100.   }\label{tab:membership}
\end{wraptable}
\textbf{Membership inference on the CIFAR100 dataset.}
We follow \citet{yeom2018gap,salem2018mlleaks} and focus on algorithm-independent metrics: instance loss-based AUC. AUC without membership leakage should be 0.5. The AUC of the non-private baseline is significantly larger than 0.5, suggesting severe overfitting and membership leakage. \nameb~with $\sigma=0$, which represents pure mixup with $m=64$ without DP noise, reduces the AUC a lot, showing the membership protection from mixup. Finally, adding DP noise could further reduce privacy leakage. For example, the AUC of \nameb~with $\epsilon=8$, which offers very loose protection in theory, is very close to perfect protection. 
%These empirical results show that GDPMix-F is a good candidate when we care about membership protection in practice. 

\section{Conclusion} \label{conclusion}
\edited{This paper introduces a novel private data-releasing framework, which leverage feature mixup with desirable simplex ETF structures to achieve a state-of-the-art privacy-utility trade-off. Notably, we demonstrate that the general-purpose algorithms can outperform task-specific DP algorithms, such as DPSGD, in transfer learning scenarios where feature extractors are learned from public datasets.
We hope that our work will inspire further research and applications of privacy-preserving data publishing, not only in the feature space but also beyond deep generative models.  }

\section{Acknowledgement}
The authors gratefully acknowledge National Natural Science Foundation of China \/ Research Grants Council Joint Research Scheme Grant N\_HKUST635\/20, Hong Kong Research Grant Council (HKRGC) Grant 16308321, and ITF UIM\/390. This research made use of the computing resources of the X-GPU cluster supported by the HKRGC Collaborative Research Fund C6021-19EF. The authors would like to thank Feng Ruan for helpful discussions and comments. 

\bibliography{example_paper}

%%%%%%%%%%%%%%%%%%%%%%%%%%%%%%%%%%%%%%%%%%%%%%%%%%%%%%%%%%%%

\appendix

\section{Related Works}

\subsection{Differential Private Data Releasing Algorithms}
$(\epsilon,\delta)$-differential privacy \citep{dwork2006dp} is a widely accepted rigorous definition of privacy. Under the differential privacy framework, we want to prevent attackers that want to determine whether some user exists in the private dataset by querying the dataset. 
\begin{Def}
A randomized algorithm $M$ satisfies $(\epsilon,\delta)$-differential privacy if for any two neighboring datasets $S$, $S^\prime$ and any event $E$, 
\begin{align*}P(M(S)\in E) \leq e^\epsilon P (M(S^\prime) \in E  ) + \delta\end{align*}
\end{Def}

There are two appealing properties for DP. First, DP allows analysis privacy loss of composition of multiple DP mechanisms, for example, stochastic gradient descent. 
Second, DP is robust to post-process, which means that any function based on the output of $(\epsilon,\delta)$-DP mechanism is still $(\epsilon,\delta)$-DP. 

\edited{Releasing private data with DP guarantee \cite{dppro,privbayes,mistmst,xie2018DPGAN,zhang2018dpgan} receive more attention from machine learning community since downstream analysis could be regarded as post-process and cause no further privacy leakage. 
Current works focuse on releasing raw data, for example, tabular data \citep{privbayes,mistmst} or image data \citet{P3GM,dphp}. However, recent advancements in foundational models have showcased the potential for constructing universal representations of data \citep{clip,textemb}. Besides, releasing features is considered to be privacy-preserving in real-world applications \cite{kaggle1,kaggle2,kaggle3,kaggle4,kaggle5}.
In this paper, we explore the potential of releasing private features with mixup.} 

\subsection{Neural Collapse}\label{neuralcollapse}
Recently, \citet{donoho2020prevalence} revealed a phenomenon named Neural Collapse, which has the following four properties of the last-layer features and classifiers.  \cref{figure:nc} provides a visualization to better understand NC.

(NC1) The within-class variation of the last-layer features converges to 0, which means that these features collapse to their class means.
(NC2) The class means collapse to the vertices of a simplex equiangular tight frame (ETF) up to scaling.
(NC3)  Up to scaling, the last-layer classifiers each collapse to the corresponding class means.
(NC4) The network’s decision rule collapses to simply choosing the class with the closest Euclidean distance between its class mean and the feature of the test example.

\begin{Def}\label{simplex}
A $K$-simplex ETF is a collection of points in $\mathbb{R}^p$
specified by the columns of the matrix 
\begin{align*}\mathbf{A}^*=\sqrt{\frac{K}{K-1}} \mathbf{P} \left( \mI_{K}-1\frac{1}{K} \mathbf{1}_K \mathbf{1}_K^T\right)\end{align*}
where $\mathbf{1}_K$ is the ones vector, and $\mP \in \mathbb{R}^{p\times K} (p \leq K)$ is a partial orthogonal matrix such that $\mP^T\mP = \mI_K$.
\end{Def}

\begin{figure}
    \centering
    \includegraphics[height = 4.5cm]{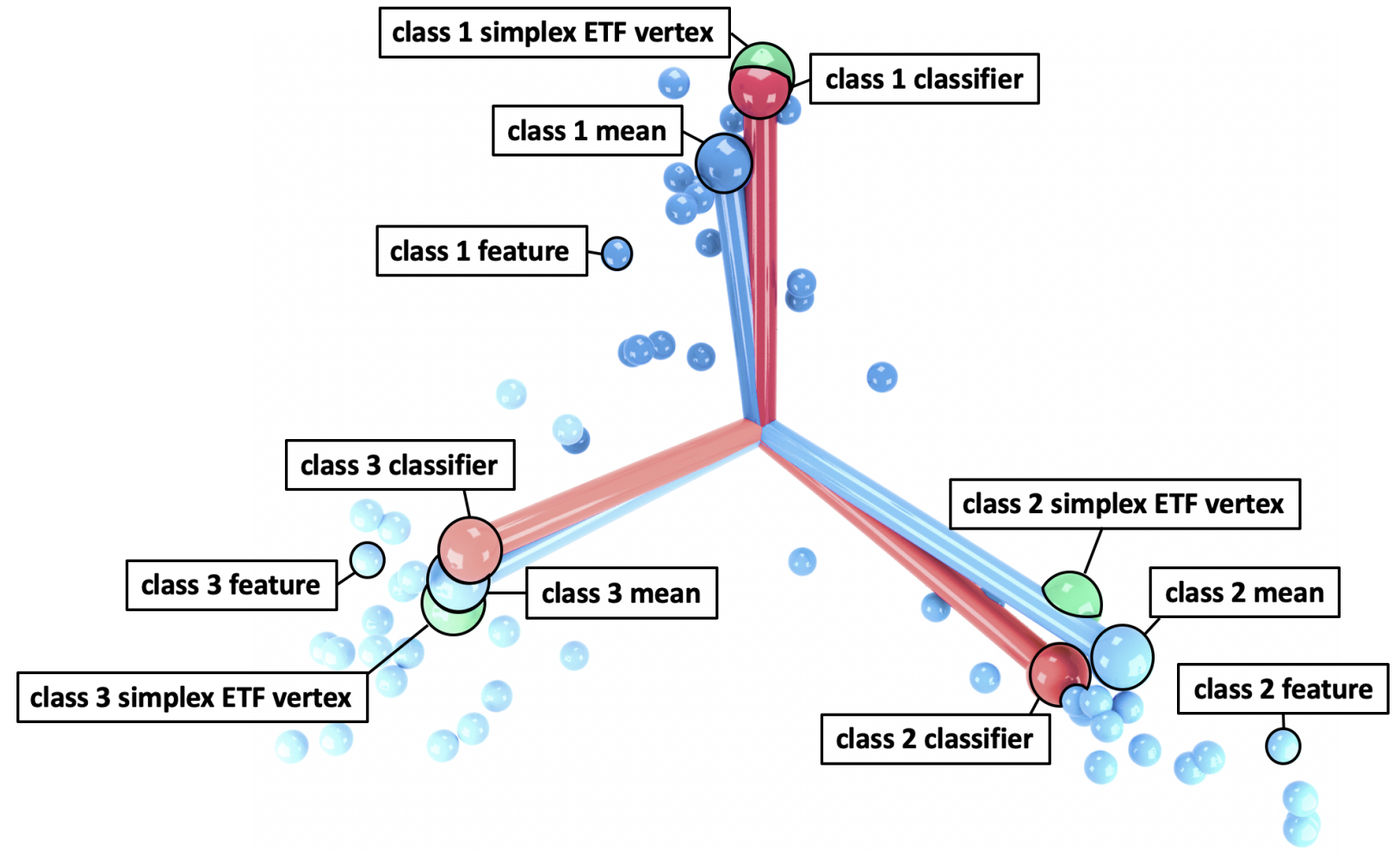}
    \caption{Diagram of NC from \citet{han2021neural}. }
    \label{figure:nc}
\end{figure}

The NC phenomena has been shown in self-supervised deep neural networks \citep{suweijie2021exploring} and neural networks trained with MSE-loss\citet{han2021neural}. Moreover, the NC properties extend beyond test sets and also encompass adversarial robustness \citep{donoho2020prevalence} and transfer learning \citep{galanti2022role}. 

Recently, researchers have conducted numerous studies on the theory of Neural Collapse. \citep{nc_ref_2} provided the first analysis of the global optimization landscape of NC, while \citep{nc_ref_1} introduced a surrogate model called the unconstrained layer-peeled model (ULPM) to investigate NC. They discovered that when ULPM is combined with the cross-entropy loss, it exhibits a benign global landscape for its loss function. Studying the landscape with MSE loss, \citep{nc_ref_4} demonstrated that neural collapse solutions are the only global minimizers. Furthermore, \citep{nc_ref_5} demonstrated that the benign global landscape can be extended to include label smoothing and focal loss. In their work, \citep{nc_ref_6} proposed a surrogate model known as the unconstrained features model, while \citep{nc_ref_7} revealed that the features of the minimizers can have a more intricate structure compared to the cross-entropy case.

\subsection{Mixup}\label{}

Mixup, proposed by \citet{zhang2017mixup}, is a data augmentation technique that involves randomly linearly interpolating images and their corresponding one-hot labels. By encouraging neural networks to exhibit linear behavior, mixup enhances the generalization capabilities of different neural networks. 
In a study by \citet{gerong2021}, they first examined a case where mixup failed and then provided a sufficient condition for mixup training to minimize the original empirical risk. They proved that NAC is a condition that enables mixup to achieve the same training error as empirical risk minimization training. This observation inspired the design of \namea, which utilizes neural feature extraction to induce NAC and enhances utility after the Avg-Mix process.
% Mixup proposed by \citet{zhang2017mixup} is a data augmentation scheme that randomly linear interpolates images and corresponding one-hot labels. By encouraging the linear behavior of neural networks, mixup improves the generalization of different neural networks. Besides, mixup reduces the memorization of the training dataset, and the success rate of fast gradient sign method \cite{goodfellow2014FGSM} adversarial attack. \citet{gerong2021} first studies a failed case of mixup and then give
% a sufficient condition for mixup training to minimize the original empirical risk. Then they proved that no collinearity between points of different classes (NAC) is a sufficient condition that mixup achieves the same training error as ERM training. This observation inspired us to design \namea, which employs neural feature extraction to induce NAC and improves the utility after Avg-Mix.
 \begin{assumption}
NAC: (Assumption 2.9. in \citet{gerong2021})
Consider $K$-class classification dataset where the supports $\sX_1, ..., \sX_K$ are finite sample sets of class $1,...,K$. For any point $x \in \sX_i$, 
there do not exist $u \in \sX$ and $v \in \sX_j$ 
for $j \neq i$ such that there is a $\lambda>0$
for which $x=\lambda u + (1-\lambda)v$.
\end{assumption} 

\section{Limitation and broader impact}\label{app"limitation}

\textbf{Limitations:} 
This paper introduces \namea(-HS) and presents numerous experiments to showcase its improved utility. However, there are some limitations that need to be addressed in future research.

Firstly, \namea(-HS) relies on existing feature extractors. Nonetheless, recent advancements in deep learning have led to the development of powerful foundational models. These models ``are trained on broad data (generally using self-supervision at scale) that can be adapted (e.g., fine-tuned) to a wide range of downstream tasks''\citep{zhou2023comprehensive}. Additionally, utilizing general-purpose feature extractors and constructing downstream tasks using private data has emerged as a new trend in private learning \citep{handcraft,dpsgd_sota}. In future work, we plan to explore the creation of feature extractors from private data, enabling the application of our method in a wider range of scenarios.

Secondly, this paper primarily focuses on computer vision classification tasks and lacks empirical results on other applications, such as classification in NLP datasets, where numerous pretrained models are also available. To address this, we aim to systematically investigate the performance of \namea(-HS) in other domains in future research.

% This paper provides \namea(-HS) and conducts many experiments to demonstrate its improved utility, while there are some limitations. We will continue on this topic and address them in future works. 

% First, \namea(-HS) relies on the existing feature extractors. However, recent development of deep learning provides many powerful foundation models that ``are trained on broad data (generally using self-supervision at scale) that can be adapted (e.g., fine-tuned) to a wide range of downstream tasks'' \citep{fmodel}. Moreover, using general propose feature extractors and building downstream tasks using private data is a new trend for private learning \citep{handcraft,dpsgd_sota}. In future work, we will explore build feature extractors from private data so that our method can be applied in more scenarios.  

% Second, this paper mainly focus on computer vision classification tasks and do not show empirical results on other applications like classification on NLP dataset, where many pretrained models are also available. In the future, we will explore \namea(-HS) on other domain systematically.  

\textbf{Broader impact:} This paper focuses on the release of private datasets for downstream tasks and demonstrates that constructing machine learning models on the released dataset for general purposes can yield better utility compared to training models directly on the private dataset using DPSGD in certain scenarios. This observation is expected to inspire further research on differential privacy data-releasing algorithms, exploring the potential benefits of leveraging released datasets for improved utility in various applications.
% This paper studies  releasing private dataset for downstream tasks. And we show that  building machine learning models on the released dataset for general propose can have better utility than directly training models on the private dataset with DPSGD in some scenarios. This observation will inspire more works on DP data releasing algorithms.  

\section{Training with Noisy Mixup Labels} \label{sec:loss_function}

After getting the DP dataset, we can train a differentially private classifier on the DP features. One problem in building a classification model is the noisy mixup label. We solve this problem by clipping the negative values of $\bar \vy$ to 0 and then minimizing the generalized KL divergence for non-negative vectors $D(\vp \| \vq)$, where $\vp$ represents noisy mixup label and $\vq$ represents the output of classifier after the Softmax layer. 

\begin{Def} The generalized KL divergence for non-negative vectors $\vp$ , $\vq$ is defined as: \label{def:kl}
\begin{equation} 
\KL(\vp \Vert \vq)=\sum_{i}\left(\vp_{i} \log \frac{\vp_{i}}{\vq_{i}}-\vp_{i}+\vq_{i}\right)
\end{equation}
with the conventions $0/0=0$, $0\log 0=0$ and $\vp_i/0=\infty$ for $\vp_i>0$.
\end{Def}

\section{An Introduction to $f$-DP Framework} \label{sec:fdp_framework}
This section gives a self-contained introduction to $f$-DP framework. We will first introduce the definition of $f$-DP and $\mu$-GDP. Then show how $f$-DP handles subsampling and composition tightly. Since computing exact $f$-DP can be hard when $T$ is large, we introduce the ``central limit theorem'' phenomena and $\mu$-GDP approximation. To compare with other DP accountants, we also show the connection between $\mu$-GDP and $(\epsilon,\delta)$-DP. 

\subsection{$f$-Differential Privacy}
Under the differential privacy framework, we want to prevent an attacker who is well-informed about the dataset except for one single individual from knowing the presence or absence of the unknown individual. \citet{wasserman2010statistical} first interpret such attack as a hypothesis testing problem:
\begin{align*}H_0\text{: the true dataset is } S \quad \text{ versus } \quad H_1 \text{: the true dataset is } S^\prime\end{align*}
We can defend against such an attack by injecting randomness to constrain the Type I error and Type II error of the hypothesis test over all possible rejection rules $\phi$. Let $P$ and $Q$ denote the distributions of $M(S)$ and $M(S^\prime)$ and \citet{dong2019gaussian} define the \emph{trade-off function} between $P$ and $Q$ as:
\begin{align*} 
T(P,Q):[0,1]&\mapsto[0,1]\\
\alpha&\mapsto \inf_{\phi}\{ 1-E_Q[\phi]:E_P[\phi]\leq\alpha \},
\end{align*}
where $E_P[\phi]$ and $1-E_Q[\phi]$ are type I and type II errors of the rejection rule $\phi$. $T(P,Q)(\alpha)$ is thus the minimal type II error given type I error no more than $\alpha$. Proposition 2.2 in \citet{dong2019gaussian} shows that a function $f$ is a trade-off function if and only if $f$ is convex, continuous, non-increasing, and $f(x)\leq 1-x$ for $x\in[0,1]$.
Then $f$-DP is defined by bounding from the lower by $f$, the trade-off function between Type I and Type II error.
%\yy{No. I think $T(P,Q)(\alpha)$ is the minimal type II error given type I error no more than $\alpha$. To define a trade-off function $f$, $f$ must be a convex, continuous, non-increasing, and below $1-x$. Check their paper.}
\begin{Def} \citep{dong2019gaussian} Let $f$ be a trade-off function. A (randomized) algorithm M is $f$-DP if:
\begin{align*}
    T(M(S),M(S^\prime))\geq f
\end{align*}
for all neighboring datasets $S$ and $S^\prime$.
\end{Def}

A particular case of $f$-DP is the $\mu$-Gaussian Differential Privacy (GDP) \citep{dong2019gaussian}, which is a single-parameter family of privacy definitions within the $f$-DP class. 

\begin{Def} \citep{dong2019gaussian}\label{def:gdp} A (randomized) algorithm $M$ is $\mu$-GDP if
\begin{align*}
T\left(M(S), M\left(S^{\prime}\right)\right) \geqslant G_{\mu}
\end{align*}
for all neighboring datasets $S$ and $S^{\prime}$, where $G_\mu(\alpha)=\Phi(\Phi^{-1}({1-\alpha})-\mu)$ and $\Phi$ is the Gaussian cumulative distribution function.
\end{Def}

Intuitively, GDP is to $f$-DP as normal random variables to general random variables. Let $G_\mu=T(\mathcal{N}(0,1),\mathcal{N}(\mu,1))$ be the trade-off function. Intuitively, GDP uses the difficulty of distinguishing $\mathcal{N}(0,1)$ and $\mathcal{N}(\mu,1)$ with only one sample to measure the difficulty of the above attack. Therefore larger $\mu$ means easier to distinguish whether a sample is in the dataset and larger privacy leakage. 
The most important property of GDP, as will be introduced below, is a central limit theorem that the limit of the composition of many ``private'' mechanisms converge to GDP.

% The following theorem tells how to achieve a GDP guarantee:
% \begin{theorem} (Theorem 1 in \citet{bu2020deep})
% Define the Gaussian mechanism that operates on a statistic $\theta$ as $M(S)=\theta(S)+\xi$, where $\xi\sim N(0,\Delta(\theta)^2/\mu^2)$. Then $M$ is $\mu$-GDP
% \end{theorem}

To compare with other DP accountants, we can  transform $\mu$-GDP to some $(\epsilon,\delta)$-DP using the following lemma by \citet{dong2019gaussian}.
% shows that $\mu$-GDP corresponds to an infinite collection of $(\epsilon,\delta)$-DP. 
%\end{remark}
\begin{lemma}
\label{mu2eps}
(Corollary 1 in \citet{dong2019gaussian}) A mechanism is $\mu$-$GDP$ if and only if it is $(\epsilon, \delta(\epsilon))$-$DP$ for all $\epsilon \geqslant 0$, where
\begin{align*}
\delta(\epsilon)=\Phi\left(-\frac{\epsilon}{\mu}+\frac{\mu}{2}\right)-\mathrm{e}^{\epsilon} \Phi\left(-\frac{\bm\epsilon}{\mu}-\frac{\mu}{2}\right).
\end{align*}
%and $\Phi(t)$ is the Gaussian cumulative distributive function.
\end{lemma}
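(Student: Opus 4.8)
The statement is a primal--dual duality: it converts the single trade-off constraint defining $\mu$-GDP into the infinite family of $(\epsilon,\delta(\epsilon))$-DP constraints. The plan is to reduce everything to trade-off functions and then to compute one Gaussian testing problem explicitly. Recall from the $f$-DP framework that $\mu$-GDP means the trade-off function of every neighbouring pair dominates $G_\mu$, and that $G_\mu$ is itself the trade-off function of testing $N(0,1)$ against $N(\mu,1)$; recall also that $(\epsilon,\delta)$-DP is equivalent to the trade-off function dominating the canonical piecewise-linear function $f_{\epsilon,\delta}(\alpha)=\max\{0,\,1-\delta-e^{\epsilon}\alpha,\,e^{-\epsilon}(1-\delta-\alpha)\}$. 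Because domination of trade-off functions only strengthens privacy, it suffices to determine, for the canonical Gaussian pair, the exact relationship between $\epsilon$ and the smallest admissible $\delta$, and then to transfer this to any mechanism whose trade-off function dominates $G_\mu$.

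For the forward direction I would compute the tightest privacy loss of the Gaussian pair by the Neyman--Pearson lemma. The worst rejection region for the constraint $\Prob_{N(0,1)}(A)-e^{\epsilon}\Prob_{N(\mu,1)}(A)\le\delta$ is the likelihood-ratio half-line $A^{*}=\{x:x<c\}$ with $c=\tfrac{\mu}{2}-\tfrac{\epsilon}{\mu}$, read off from $\phi(x)/\phi(x-\mu)=\exp(\tfrac{\mu^{2}}{2}-\mu x)>e^{\epsilon}$. Evaluating on $A^{*}$ gives
\begin{equation}
\delta(\epsilon)=\Phi(c)-e^{\epsilon}\Phi(c-\mu)=\Phi\!\left(-\tfrac{\epsilon}{\mu}+\tfrac{\mu}{2}\right)-e^{\epsilon}\Phi\!\left(-\tfrac{\epsilon}{\mu}-\tfrac{\mu}{2}\right),
\end{equation}
which is exactly the claimed formula. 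The reflection $x\mapsto\mu-x$ swaps the two Gaussians, so the symmetric constraint yields the same $\delta$, and both one-sided inequalities hold at this value. Since any $\mu$-GDP mechanism has trade-off function $\ge G_\mu$, it inherits these $(\epsilon,\delta(\epsilon))$-DP guarantees for every $\epsilon\ge0$.

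The converse---that satisfying $(\epsilon,\delta(\epsilon))$-DP for \emph{all} $\epsilon\ge0$ already forces $\mu$-GDP---is the delicate part, and I expect it to be the main obstacle. Here I would prove $G_\mu=\sup_{\epsilon\ge0}f_{\epsilon,\delta(\epsilon)}$, i.e. that $G_\mu$ is the upper envelope of the dual family, which is a convexity argument. Differentiating gives $G_\mu'(\alpha)=-\exp\!\big(\mu\,\Phi^{-1}(1-\alpha)-\tfrac{\mu^{2}}{2}\big)$, whose value is $-1$ at the symmetry point $\alpha=\Phi(-\mu/2)$ and which sweeps through every slope $-e^{\epsilon}$, $\epsilon\ge0$, as $\alpha$ ranges over $(0,\Phi(-\mu/2)]$; for each such $\alpha$ the left piece of $f_{\epsilon,\delta(\epsilon)}$ is precisely the supporting line of the convex curve $G_\mu$, and its right piece supports $G_\mu$ at the reflected point. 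Since a convex function equals the supremum of its supporting lines, the family $\{f_{\epsilon,\delta(\epsilon)}\}$ reconstructs $G_\mu$, so domination of all these piecewise-linear functions yields domination of $G_\mu$, which is $\mu$-GDP; equivalently this is the biconjugate identity $\delta(\epsilon)=1+G_\mu^{*}(-e^{\epsilon})$ together with $G_\mu=G_\mu^{**}$, where $G_\mu^{*}$ denotes the convex conjugate. The care required is in matching the slope parametrisation to $\epsilon$, handling the two linear pieces and the clipping at $0$ of $f_{\epsilon,\delta}$ so that the supporting lines cover the whole curve on both sides of its symmetry point, and checking the boundary behaviour as $\epsilon\to0$ and $\epsilon\to\infty$.
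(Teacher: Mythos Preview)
The paper does not supply its own proof of this lemma: both in the main text and in the appendix (\cref{sec:gdp2dp}) it is simply quoted as Corollary~1 of \cite{dong2019gaussian}, with no argument beyond the citation. So there is nothing in the paper to compare your proposal against.

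That said, your sketch is correct and is essentially the argument in \cite{dong2019gaussian}: the forward direction computes the optimal $\delta$ for the Gaussian pair via Neyman--Pearson (the likelihood-ratio threshold $c=\mu/2-\epsilon/\mu$ is exactly right), and the converse is the convex-duality statement that $G_\mu$ equals the upper envelope of the family $\{f_{\epsilon,\delta(\epsilon)}\}_{\epsilon\ge0}$, equivalently $G_\mu=G_\mu^{**}$ together with $\delta(\epsilon)=1+G_\mu^{*}(-e^{\epsilon})$. Your identification of the supporting-line parametrisation and the symmetry about $\alpha=\Phi(-\mu/2)$ is the right way to handle both pieces of $f_{\epsilon,\delta}$. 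There is no gap.
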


\subsection{Subsampling with $f$-Differential Privacy\label{subsampling_sec}}

The subsampling technique will select a random subset of data to conduct private data analysis. A natural sampling scheme is PS, which selects each record with probability $p$ independently and outputs a subset with random size.

\begin{Def} (Poisson subsampling in \citet{zhu2019poission})
Given a dataset $S$ of size $n$, the
procedure Poisson subsampling outputs a subset of the data
$\{S_i | a_i=1, i \in [n] \}$ by sampling $ a_i \sim Bernoulli(p)$ independently for $i = 1, ..., n$.
\label{Poissonsampling}
\end{Def}

\begin{Def} (Uniform subsampling in \citet{wang2019subsampled})
Given a dataset $S$ of size $n$, the
procedure Uniform subsampling outputs a random sample from the uniform distribution over all subsets of $S$ of size $m$.
\label{Uniformsampling}
\end{Def}

Unlike \citet{dpmix}, who choose Uniform sampling, we will choose the PS scheme in this paper for two reasons. First, compared with Uniform subsampling, PS scheme could enjoy a tighter privacy analysis. Theorem 6 in \citet{zhu2019poission} shows a lower bound for $\epsilon$ under Uniform sampling scheme. While \citet{wang2019subsampled} prove such lower bound is also an upper bound for Gaussian mechanism when we switch to PS. That shows the Poisson sub-sampling scheme is at least as tight as Uniform sub-sampling.
Second, Poisson subsampling under GDP has a closed-form expression for privacy budget and noise.

Next, we introduce how to deal with subsampling in $f$-DP framework and why the analysis is tight. Realizing that the un-selected data is not released and thus in perfect protection, Proposition A.1 in \citet{bu2020deep} states the trade-off function of sub-sampled mechanism satisfies the following property. Let $f_p:=pf+(1-p)Id$ and $M\circ Sample_{p}$ denotes the subsampled mechanism, we have:

\begin{align*}T(M \circ Sample_{p}(S), M \circ Sample_{p}(S^{\prime})) \geq f_{p}\end{align*}

\begin{align*}T(M \circ Sample_{p}(S^{\prime}),M \circ Sample_{p}(S)) \geq f_{p}^{-1}.\end{align*}

The two inequalities imply the trade-off function is lower bounded by $\min\{f_{p},f_{p}^{-1} \}$. The trade-off function of $M \circ Sample_{p}(S)$ should be $\min\{f_{p},f_{p}^{-1} \}^{**}$ , since $\min\{f_{p},f_{p}^{-1} \}$ is not convex and can not be a trade-off function in general. Where $\min\{f_{p},f_{p}^{-1} \}^{**}$ is the double conjugate of $\min\{f_{p},f_{p}^{-1} \}$, which is the greatest convex lower bound of $\min\{f_{p},f_{p}^{-1} \}$ and can not be improved in general. This suggests $f$-DP framework could handle subsampling tightly. Moreover, when $M$ is $(\epsilon,\delta)$-DP, \citet{bu2020deep} show that the above privacy bound strictly improves on the subsampling theorem in \citet{li2012sampling}.

\subsection{Composition with $f$-Differential Privacy\label{composition_sec}}

Another building block for DP analysis of the mixup-based method is composition. When analyzing iterated algorithms, composition laws often become the bottleneck.  Fortunately, the $f$-DP framework provides the exact privacy bound for the composed mechanism. Specifically, Theorem 4 in \citet{dong2019gaussian} states that composition is closed under the $f$-DP framework and the composition of $f$-DP offers the tightest bound. In contrast to $f$-DP, \citet{dong2019gaussian} show the exact privacy can not be captured by any pair of $\epsilon,\delta$. As for moment accountant, which is widely used for analysis of DPSGD, Theorem 1 and Theorem 2 in section 3.2 of \citep{bu2020deep} show that GDP offers an asymptotically sharper privacy analysis for DPSGD than MA in both $f$-DP and $(\epsilon,\delta)$-DP framework. \citet{dong2019gaussian,bu2020deep} also give a numerical comparison showing the advantage of GDP under the realistic setting of DPSGD.

\subsection{CLT Approximation with $\mu$-GDP} \label{clt_sec}

However, calculating the exact $f$ can be hard, especially when $T$ is large. Fortunately, there exists a central limit theorem phenomenon (Theorem 5 in \citet{bu2020deep}). Intuitively, if there are many ``very private'' mechanisms, which means the trade-off function is closed to $Id(\alpha)=1-\alpha$, the accumulative privacy leakage can be described as some $\mu$-GDP when $T$ is sufficiently large. Apart from asymptotic results, Theorem 5 in \citet{dong2019gaussian} shows a Berry-Esseen type CLT that gives non-asymptotic bounds on the CLT approximation.

\section{Privacy Analysis for \namea} \label{sec:privacy_DPFMix}
\subsection{Proof for \cref{fdp_thm} \label{proof:gdp}} Here we provide omitted proofs for \cref{fdp_thm}. 
The following content consists of three parts. First, we need to consider one single step of \cref{algorithm}. Next, we use the refined composition theorem to bound the trade-off function of the composed mechanism. However, getting the exact solution is computationally expensive. Thanks to the central limits theorem of $f$-DP, we can approximate the trade-off function of the composed mechanism with GDP. We will also conduct simulation experiments to show that the $\mu$-GDP approximation can be quite precise when $T$ is finite in this section.

\subsubsection{Single-step of \cref{algorithm}}\label{proof:single_step}

% \begin{theorem} 
% Let $Sample_{p}(S)$ denotes the PS process and $M$ be the averaging and perturbing process. Then the trade-off function of one step of \cref{algorithm} $M \circ Sample_{p}$ satisfies: 
% $$
% T(M \circ Sample_{\frac{m}{n}}(S), M \circ Sample_{\frac{m}{n}}(S^{\prime})) \geq \frac{m}{n} G_{1/\sigma}+(1-\frac{m}{n})Id
% $$
% \end{theorem}

\begin{theorem} \label{thm:single_step} 
Let $Sample_{p}(S)$ denotes the Poisson sampling process and $M$ be the averaging and perturbing process. Then the trade-off function of one step of \cref{algorithm} $M \circ Sample_{p}$ satisfies: 
\begin{align*}
&T(M \circ Sample_{\frac{m}{n}}(S), M \circ Sample_{\frac{m}{n}}(S^{\prime}))  \geq \frac{m}{n} G_{\sqrt{ 1/\sigma_x^2+1/\sigma_y^2}}+(1-\frac{m}{n})Id
\end{align*}
\end{theorem}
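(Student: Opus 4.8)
The plan is to decompose the single step into two independent pieces: the \emph{base mechanism} $M$ that averages and perturbs a \emph{fixed} collection of clipped samples, and the \emph{Poisson subsampling} operator $Sample_{m/n}$ that precedes it. First I would establish that $M$ alone is $\mu_0$-GDP with $\mu_0=\sqrt{1/\sigma_x^2+1/\sigma_y^2}$, i.e. its trade-off function is bounded below by $G_{\mu_0}$, and then invoke the subsampling bound $T(M\circ Sample_p(S),M\circ Sample_p(S'))\ge pf+(1-p)Id$ recorded in \cref{subsampling_sec} (Proposition A.1 of \citep{bu2020deep}) with $p=m/n$ and $f=G_{\mu_0}$, which yields exactly the claimed lower bound $\frac{m}{n}G_{\mu_0}+(1-\frac{m}{n})Id$.

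For the base-mechanism step I would work in the add/remove neighboring model natural to Poisson sampling, so that two neighboring (sub)datasets differ by the presence of a single record $z=(x_z,y_z)$. After clipping we have $\|x_z\|_2\le C_x$ and $\|y_z\|_2\le C_y$, so adding or removing $z$ shifts the averaged feature $\bar x=\frac{1}{m}\sum_i x_i$ by at most $C_x/m$ in $\ell_2$ norm and the averaged label $\bar y$ by at most $C_y/m$. The mechanism releases $(\bar x,\bar y)$ corrupted by independent Gaussian noise of scales $\frac{C_x}{m}\sigma_x$ and $\frac{C_y}{m}\sigma_y$, so jointly the output is a single multivariate Gaussian whose mean is shifted by $v=(x_z/m,\,y_z/m)$ under covariance $\Sigma=\mathrm{diag}\big((\tfrac{C_x\sigma_x}{m})^2 I,\ (\tfrac{C_y\sigma_y}{m})^2 I\big)$. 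Using the fact that distinguishing two Gaussians with common covariance $\Sigma$ and mean gap $v$ has trade-off function $G_{\|\Sigma^{-1/2}v\|_2}$, I would compute
\begin{align*}
\|\Sigma^{-1/2}v\|_2^2=\frac{\|x_z\|_2^2}{C_x^2\sigma_x^2}+\frac{\|y_z\|_2^2}{C_y^2\sigma_y^2}\le\frac{1}{\sigma_x^2}+\frac{1}{\sigma_y^2},
\end{align*}
where the clipping constraints give the inequality. Since $G_\mu$ is monotonically decreasing in $\mu$, the worst neighboring pair (attaining the clipping bounds) produces the smallest trade-off function, namely $G_{\mu_0}$ with $\mu_0=\sqrt{1/\sigma_x^2+1/\sigma_y^2}$; hence $M$ is $\mu_0$-GDP.

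Finally I would feed $f=G_{\mu_0}$ into the Poisson subsampling inequality. Because the statement only asserts the bound for the single ordering $T(M\circ Sample_{m/n}(S),M\circ Sample_{m/n}(S'))$, the directional bound $\frac{m}{n}G_{\mu_0}+(1-\frac{m}{n})Id$ suffices and no convex-hull / double-conjugate correction is needed at this stage; that refinement enters only when the two orderings are merged to obtain the symmetric guarantee in \cref{fdp_thm}. The step I expect to be the crux is the joint Gaussian computation: one must verify that simultaneously releasing the perturbed feature and label behaves as a single Gaussian mechanism whose GDP parameter is the quadrature sum $\sqrt{1/\sigma_x^2+1/\sigma_y^2}$, which hinges on the independence of the two noise channels and on reducing the multivariate trade-off function to the Mahalanobis norm of the mean gap. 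Correctly identifying the sensitivities under clipping and confirming that the supremum over neighboring datasets is attained, so the bound is tight, is where the care is required.
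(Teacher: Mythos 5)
Your proposal is correct, and it reaches the paper's bound by a genuinely different route at the crux step. The paper decomposes the single step as a \emph{sequential} composition: it views the release of $\bar{x}$ and $\bar{y}$ as two Gaussian mechanisms $M_x$ and $M_y$ with sensitivities $C_x/m$ and $C_y/m$, applies Theorem 1 of \citep{dong2019gaussian} to get $\tfrac{1}{\sigma_x}$-GDP and $\tfrac{1}{\sigma_y}$-GDP separately, and then invokes the exact GDP composition law (Corollary 2 in \citep{dong2019gaussian}) to obtain $\sqrt{1/\sigma_x^2+1/\sigma_y^2}$-GDP, before finishing with Proposition A.1 of \citep{bu2020deep} exactly as you do. You instead treat the joint release $(\bar{x},\bar{y})$ as a \emph{single} multivariate Gaussian mechanism with block-diagonal covariance and reduce its trade-off function to $G_{\|\Sigma^{-1/2}v\|_2}$ via the Mahalanobis norm of the mean shift; the clipping bounds then give $\|\Sigma^{-1/2}v\|_2\le\sqrt{1/\sigma_x^2+1/\sigma_y^2}$ directly. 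The two arguments coincide in their conclusion because GDP composition is tight precisely on Gaussian mechanisms, but they buy different things: the paper's route is shorter given the cited machinery and would extend to the case where the two channels are not both Gaussian, while yours is more self-contained and elementary, makes the origin of the quadrature sum transparent, sidesteps any question of whether sequential (adaptive) composition is the right model for a simultaneous release, and exhibits the worst-case neighboring pair explicitly, showing the bound is attained. Your observation that only the one-directional subsampling bound $pf+(1-p)Id$ is needed here, with the $\min\{f,f^{-1}\}^{**}$ symmetrization deferred to \cref{fdp_thm}, is also accurate and matches the paper's structure.
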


\begin{proof}
For simplicity, we now focus on a single
step and drop the subscript $t$. \cref{algorithm} releases both features and labels, and this process can be regarded as first releasing $\bar x$ (denoted as $M_x$) then releasing $\bar y$ (denoted as $M_y$). Therefore we can use the naive composition law of exact GDP (Corollary 2 in \citet{dong2019gaussian}) to get trade-off function $f$ for generated $(\bar x, \bar y)$ (denoted as $M=M_x\circ  My$). 

First we consider $\bar x$, which has  $l_2$-norm at most $C_x$ after clipping. Thus removing or adding one record in $S$ will change $||\bar x||_2$ by at most $\frac{C_x}{m}$, which means its sensitivity is $\frac{C_x}{m}$. So, the Gaussian mechanism $M$, which adds Gaussian noise $\mathcal{N}\left(0,(\frac{C\sigma_x}{m})^2 \right)$ to each element of $(\bar x_t,\bar y_t)$ is $\frac{1}{\sigma_x}$-GDP by Theorem 1 of \citet{dong2019gaussian}. The same analysis could be applied to $M_y$. Then for $M$, we can apply naive composition law for exact GDP (Corollary 2 in \citet{dong2019gaussian}) and get it is $\sqrt{1/\sigma_x^2+1/\sigma_y^2 }$-GDP.
Finally, the trade-off function of $M \circ Sample_{p}$ can be described by Proposition A.1 in \citet{bu2020deep}.
\end{proof}

% \begin{proof}
% % Let $Sample_{p}(S)$ denotes the PS process and $M$ be the averaging and perturbing process. One step of \cref{algorithm} can be write as $M \circ Sample_{p}$.
% \cref{algorithm} release both features and labels and this process can be regarded as releasing a vector composed of $(\bar x_t,\bar y_t)$, which have $l_2$-norm at most $C=C_x+C_y$. Thus removing or adding one record in $S$ will change the output of $M \circ Sample_{p}$ by at most $\frac{C}{m}$ in the $l_2$-norm, which means the sensitivity is $\frac{C}{m}$. So, the Gaussian mechanism $M$, which adds Gaussian noise $N\left(0,(\frac{C\sigma}{m})^2 \right)$ to each elements of $(\bar x_t,\bar y_t)$ is $\frac{1}{\sigma}$-GDP by Theorem 1 of \citep{dong2019gaussian}. Then, the trade-off function of $M \circ Sample_{p}$ can be described by Proposition A.1 in \citet{bu2020deep}.
% \end{proof}

% \begin{theorem} The trade-off function of one step of \cref{algorithm} $M \circ Sample_{p}$ satisfies: 
% $$
% T(M \circ Sample_{\frac{m}{n}}(S), M \circ Sample_{\frac{m}{n}}(S^{\prime})) \geq \frac{m}{n} G_{1/\sigma}+(1-\frac{m}{n})Id
% $$
% \end{theorem}

% \begin{Prop}
% Proposition A.1 in \citet{bu2020deep}. If $M$ is $f$-DP, and $S^{\prime}=S \cup\left\{x_{0}\right\}$, then
% $$
% T(M \circ Sample_{p}(S), M \circ Sample_{p}(S^{\prime})) \geq p f+(1-p)Id
% $$
% \end{Prop}
% \subsubsection{Subsampling}

\subsubsection{Composition}
\label{proof:composition}
% \begin{theorem} \LDH{Maybe we can remove this.}
% (Refined Composition, Theorem 4 in \citet{bu2020deep}). Suppose $M_{1}: X \rightarrow Y, M_{2}: X \times Y \rightarrow Z$ satisfy the following conditions for any $S, S^{\prime}$ such that $S^{\prime}=S \cup\left\{x_{0}\right\}$ :\\
% 1. $T\left(M_{1}(S), M\left(S^{\prime}\right)\right) \geqslant f$\\
% 2. $T\left(M_{2}(S, y), M_{2}\left(S^{\prime}, y\right)\right) \geqslant g$ for any $y \in Y$.\\
% Then the composition $M_{2} \circ M_{1}: X \rightarrow Y \times Z$ satisfies
% $$
% T\left(M_{2} \circ M_{1}(S), M_{2} \circ M_{1}\left(S^{\prime}\right)\right) \geqslant f \otimes g
% $$
% for any $S, S^{\prime}$ such that $S^{\prime}=S \cup\left\{x_{0}\right\}$.
% \end{theorem}

 The refine composition law of $f$-DP (Theorem 4 in \citet{bu2020deep}) can be used directly for considering the composition in \cref{algorithm}. Here we could apply it and give proof for \cref{fdp_thm}.
 Let $M^\prime$ denotes \cref{algorithm}.
\begin{proof}
The size of the DP feature dataset is $T$, which means we conduct $T$ copies of  mechanism $M \circ Sample_{\frac{m}{n}}$. 
% The refined composition theorem can be used to characterize the trade-off function of the composed mechanism. 
Applying the above  refined composition theorem, we have:
\begin{align*}&T(M^\prime(S),M^\prime(S^\prime) ) \geq \left(\frac{m}{n}G_{\sqrt{1/\sigma_x^2+1/\sigma_y^2}} + (1-\frac{m}{n})Id \right)^{\otimes T}:=f \end{align*}

Moreover, if $S$ can be obtained by removing one record in $S^\prime$, we will also have \begin{align*}T(M^\prime(S),M^\prime(S^\prime) )\geq f^{-1}\end{align*}.

Combining the two inequalities, the trade-off function is lower bounded by $\min\{f,f^{-1}\}$. However, in general, $\min\{f,f^{-1}\}$ is not convex, thus not a trade-off function. We could use the double conjugate $\min\{f,f^{-1}\}^{**}$ as its trade-off function. Note that $\min\{f,f^{-1}\}^{**}$ is larger than $\min\{f,f^{-1}\}$ and can not be improved in general. 
\end{proof}

\subsubsection{CLT Approximation} 
\label{proof:gdp_clt}
% \LDH{TO BE adjusted }
Although we can have the exact trade-off function $\min\{f,f^{-1}\}^{**}$, it is hard to compute. Fortunately, there exists a central limit theorem phenomenon. The composition of many ``very private'' mechanisms can behave like some $\mu$-GDP (see Theorem 5 in \citet{bu2020deep}).

\begin{theorem}  (Theorem 5 in \citet{bu2020deep})
Suppose sample rate $p$ depends on $T$ and $p \sqrt{T} \rightarrow \nu$. Then we have the following uniform convergence as $T \rightarrow \infty$
\begin{align*}
\left(p G_{1 / \sigma}+(1-p) \mathrm{Id}\right)^{\otimes T}=G_{\mu}
\end{align*}
where $\mu=\nu \cdot \sqrt{\mathrm{e}^{1 / \sigma^{2}}-1}$
(Note that there is a typo in \citet{bu2020deep}). 
% \yy{Should be $\mu=\nu \cdot \sqrt{\left(\mathrm{e}^{1 / \sigma^{2}}-1\right)}$? How can $T\to \infty$ appear here?}\LDH{I think it is a typo of \citep{bu2020deep}}
\end{theorem}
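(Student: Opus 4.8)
The plan is to deduce this statement from the general central limit theorem for trade-off functions of \citep{dong2019gaussian}, so that the only genuine work is a moment computation on a single composition factor. That CLT asserts that a composition $f_1\otimes\cdots\otimes f_T$ converges uniformly to $G_\mu$ whenever the factors obey a Lindeberg-type condition expressed through three functionals of each trade-off function $f$: a first-order term $\mathrm{kl}(f):=-\int_0^1\log|f'(x)|\,dx$, a second moment $\kappa_2(f):=\int_0^1\log^2|f'(x)|\,dx$, and a centered third moment $\bar\kappa_3(f):=\int_0^1\big|\log|f'(x)|+\mathrm{kl}(f)\big|^3\,dx$. Writing the single-step factor as $f_p:=pG_{1/\sigma}+(1-p)\mathrm{Id}$, I would reduce the theorem to checking that, under the calibration $p\sqrt{T}\to\nu$, the sums of these functionals over the $T$ identical copies converge to the values that identify $G_\mu$, together with the two smallness conditions $\max_t\mathrm{kl}(f_t)\to0$ and $\sum_t\bar\kappa_3(f_t)\to0$.

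First I would realize $f_p$ as a trade-off function $T(P,Q)$ between an explicit pair of distributions, which makes the functionals interpretable as moments of a log-likelihood ratio. The natural choice is $P=\mathcal{N}(0,1)$ and the subsampling mixture $Q=(1-p)\,\mathcal{N}(0,1)+p\,\mathcal{N}(1/\sigma,1)$: the $\mathrm{Id}$ branch records the probability-$(1-p)$ event that the sensitive record is not drawn, under which the two hypotheses coincide, while the $G_{1/\sigma}$ branch is exactly the Gaussian location test that arises when it is drawn. Setting $\ell:=\log(dQ/dP)$, the functionals become $\mathrm{kl}(f_p)=\mathbb{E}_Q[\ell]=D_{\mathrm{KL}}(Q\|P)$, $\kappa_2(f_p)=\mathbb{E}_Q[\ell^2]$, and $\bar\kappa_3(f_p)=\mathbb{E}_Q|\ell-\mathbb{E}_Q\ell|^3$.

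The heart of the argument is the small-$p$ expansion of these moments, and this is where the constant $e^{1/\sigma^2}-1$ is born. Since $Q$ perturbs $P$ only on a mass-$p$ component, a direct computation gives the $\chi^2$-divergence $\chi^2(Q\|P)=p^2\,\chi^2\big(\mathcal{N}(1/\sigma,1)\,\|\,\mathcal{N}(0,1)\big)=p^2\big(e^{1/\sigma^2}-1\big)$, using $\int \phi(x-a)^2/\phi(x)\,dx=e^{a^2}$ with $a=1/\sigma$. Because $P$ and $Q$ are close, $\mathrm{kl}(f_p)\approx\tfrac12\chi^2(Q\|P)$ and $\kappa_2(f_p)\approx\chi^2(Q\|P)$ to leading order, so that
$$\mathrm{kl}(f_p)=\tfrac12 p^2\big(e^{1/\sigma^2}-1\big)+o(p^2),\qquad \kappa_2(f_p)=p^2\big(e^{1/\sigma^2}-1\big)+o(p^2),$$
with $\bar\kappa_3(f_p)=O(p^3)$. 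Summing over the $T$ identical factors and using $Tp^2\to\nu^2$ then yields
$$\sum_{t=1}^T\mathrm{kl}(f_p)\to\tfrac12\nu^2\big(e^{1/\sigma^2}-1\big)=\tfrac{\mu^2}{2},\qquad \sum_{t=1}^T\kappa_2(f_p)\to\nu^2\big(e^{1/\sigma^2}-1\big)=\mu^2,$$
which pins down $\mu=\nu\sqrt{e^{1/\sigma^2}-1}$, and the negligibility conditions are immediate since $\max_t\mathrm{kl}(f_p)=O(p^2)\to0$ and $\sum_t\bar\kappa_3(f_p)=O(Tp^3)=O(\nu^2 p)\to0$.

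With these limits in hand, the general trade-off-function CLT delivers the uniform convergence $(pG_{1/\sigma}+(1-p)\mathrm{Id})^{\otimes T}\to G_\mu$, finishing the proof. The main obstacle is the moment expansion above: the functionals are integrals of $\log|f_p'|$ over $[0,1]$, where $f_p'$ is governed by the Gaussian branch near the endpoints, so one must expand the mixture likelihood ratio carefully and control the $o(p^2)$ remainders uniformly in order to extract the exact coefficient $e^{1/\sigma^2}-1$ rather than merely its order. A cleaner, equivalent route that isolates exactly this difficulty is to bypass the integral functionals and argue directly: express the composed trade-off function as $T(P^{\otimes T},Q^{\otimes T})$, note that testing reduces to the sum $\sum_t \ell_t$ of i.i.d.\ log-likelihood ratios, and invoke the classical Lindeberg CLT to show this sum is asymptotically $\mathcal{N}$ with the mean and variance of the $\mu$-GDP location test; everything then collapses to computing $\mathbb{E}_Q[\ell]$ and $\mathrm{Var}_Q[\ell]$ for the single explicit mixture, which is precisely where the $\chi^2$-divergence, and hence $e^{1/\sigma^2}-1$, enters.
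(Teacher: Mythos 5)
You should first note that the paper itself contains no proof of this statement: it is imported verbatim (with a typo correction) as Theorem 5 of \citep{bu2020deep} and then applied as a black box to establish the GDP limit in \cref{fdp_thm}, so the only meaningful comparison is with the proof in the cited literature. Against that benchmark, your proposal is essentially the standard argument: Bu et al.\ obtain this result by applying the Lindeberg-type composition CLT for trade-off functions of \citep{dong2019gaussian} to the i.i.d.\ array $f_p = pG_{1/\sigma}+(1-p)\mathrm{Id}$, and the entire content of the proof is the small-$p$ expansion of the functionals $\mathrm{kl}$, $\kappa_2$, $\bar\kappa_3$ of the single factor, where the coefficient $\chi^2\bigl(\mathcal{N}(1/\sigma,1)\,\|\,\mathcal{N}(0,1)\bigr)=e^{1/\sigma^2}-1$ emerges exactly as in your computation; your realization of $f_p$ as $T\bigl(\mathcal{N}(0,1),\,(1-p)\mathcal{N}(0,1)+p\,\mathcal{N}(1/\sigma,1)\bigr)$ is also the correct starting point, since Neyman--Pearson applied to the mixture reduces to the same likelihood-ratio test family as the pure Gaussian pair.

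Two caveats. First, with the convention $\mathrm{kl}(f)=-\int_0^1\log|f'(x)|\,dx$, the integration variable is the type-I error, so these functionals are moments of $\ell=\log(dQ/dP)$ under $P$ (e.g.\ $\mathrm{kl}(f)=D_{\mathrm{KL}}(P\|Q)$), not under $Q$ as you wrote. Here the slip is harmless: since $Q_{\mathrm{mix}}$ differs from $P$ by an $O(p)$ perturbation, the $P$- and $Q$-moments agree at the order $p^2$ that matters, and both KL directions equal $\tfrac12 p^2\bigl(e^{1/\sigma^2}-1\bigr)+o(p^2)$; still, the CLT hypotheses should be verified under the correct measure. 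Second, the remainder control you flag at the end is genuinely the crux: the likelihood ratio $r(x)=e^{x/\sigma-1/(2\sigma^2)}$ is unbounded, so $\log\bigl(1+p(r(x)-1)\bigr)$ cannot be Taylor-expanded pointwise for all $x$; one handles the region where $p\,r(x)$ is not small by the super-polynomially small Gaussian tail probability, which keeps all remainders $o(p^2)$. Modulo these points, your limits $\sum\mathrm{kl}\to\mu^2/2$, $\sum\kappa_2\to\mu^2$, $\max\mathrm{kl}\to0$, $\sum\bar\kappa_3\to0$ identify the uniform limit as $G_\mu$ with $\mu=\nu\sqrt{e^{1/\sigma^2}-1}$, as claimed, and your alternative route via the classical CLT for the i.i.d.\ log-likelihood-ratio sum is the same argument viewed at the level of limit experiments rather than through the functional conditions.
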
 
Here we could use Theorem 5 in \citet{bu2020deep} to prove \cref{fdp_thm}.

\begin{proof} \label{proof_for_dp_thm}
% We can apply Theorem 5 in \citet{bu2020deep} directly.
% When $T \to \infty$ and $\frac{m}{n} \sqrt{T} \to \nu$, $\left(\frac{m}{n}G_{\frac{1}{\sigma}} + (1-\frac{m}{n})Id \right)^{\otimes T}\to G_\mu$ with $\mu=\nu \cdot \sqrt{\mathrm{e}^{1 / \sigma^{2}}-1}$. 
Under the condition of \cref{fdp_thm}, we can apply Theorem 5 in \citet{bu2020deep} directly. It gives:
\begin{align*}\left(\frac{m}{n}G_{\sqrt{1/\sigma_x^2+1/\sigma_y^2}} + (1-\frac{m}{n})Id \right)^{\otimes T}\to G_\mu\end{align*}, where $\mu=\nu \cdot \sqrt{\mathrm{e}^{1/\sigma_x^2+1/\sigma_y^2 }-1}$. 
Since $G_\mu$ is already a trade-off function, $\min\{ G_\mu,G_\mu^{-1} \}^{**}=G_\mu$ and finish the proof of \cref{fdp_thm}
\end{proof}

 We also compare the asymptotic guarantee with exact numerical results with a typical setting of \namea~and find the approximation is quite accurate even with $T\approx200$. In \cref{fig:numericalVSclt}, we numerically calculate the exact $f$-DP (red solid) given by \cref{fdp_thm} and compare it with its $\mu$-GDP approximation (blue dashed). We follow a typical setting of \namea~we introduced in \cref{sec:hyperparameter}. Specifically, $\mu=0.5016$ (corresponding to $(2,10^{-5})$-DP) and $\sigma=0.8441$. In the original setting, $T=n=50000$ is quite large. To best illustrate the fast convergence, we show the comparison of $T=n=10,50,200$, respectively in  \cref{fig:numericalVSclt}. We also adjust $\frac{m}{n}$ to make all three cases satisfies $0.5016$-GDP.
From the left figure, we can see that the two curves are getting closer when $T$ increases. The approximation $l_2$ and $l_\infty$ error shown in  \cref{fig:numericalVSclt_error} also have the same trend. That suggests that when $T$ is sufficiently large, the CLT yields a very accurate approximation, which validates the Berry-Esseen type CLT (Theorem 5 in \citet{dong2019gaussian}). In practice, we often deal with very large $T$, for example, $60000$ and $50000$ for MNIST and CIFAR10, respectively and the CLT approximation should be quite accurate. 

\begin{figure}[h!]
    \centering
    % \begin{tabular}{ccc}
        \includegraphics[width=0.3\textwidth]{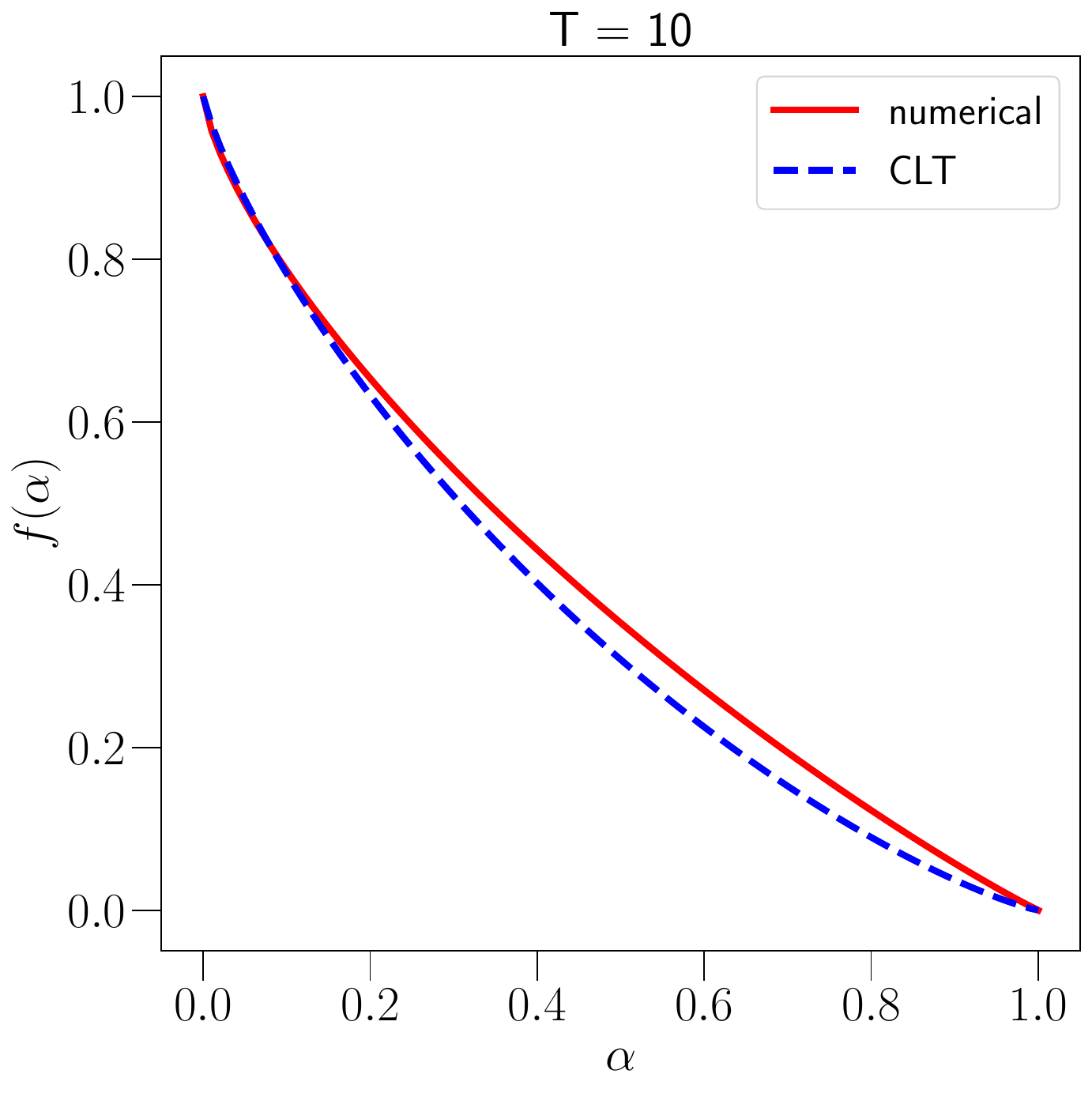}  
        \includegraphics[width=0.3\textwidth]{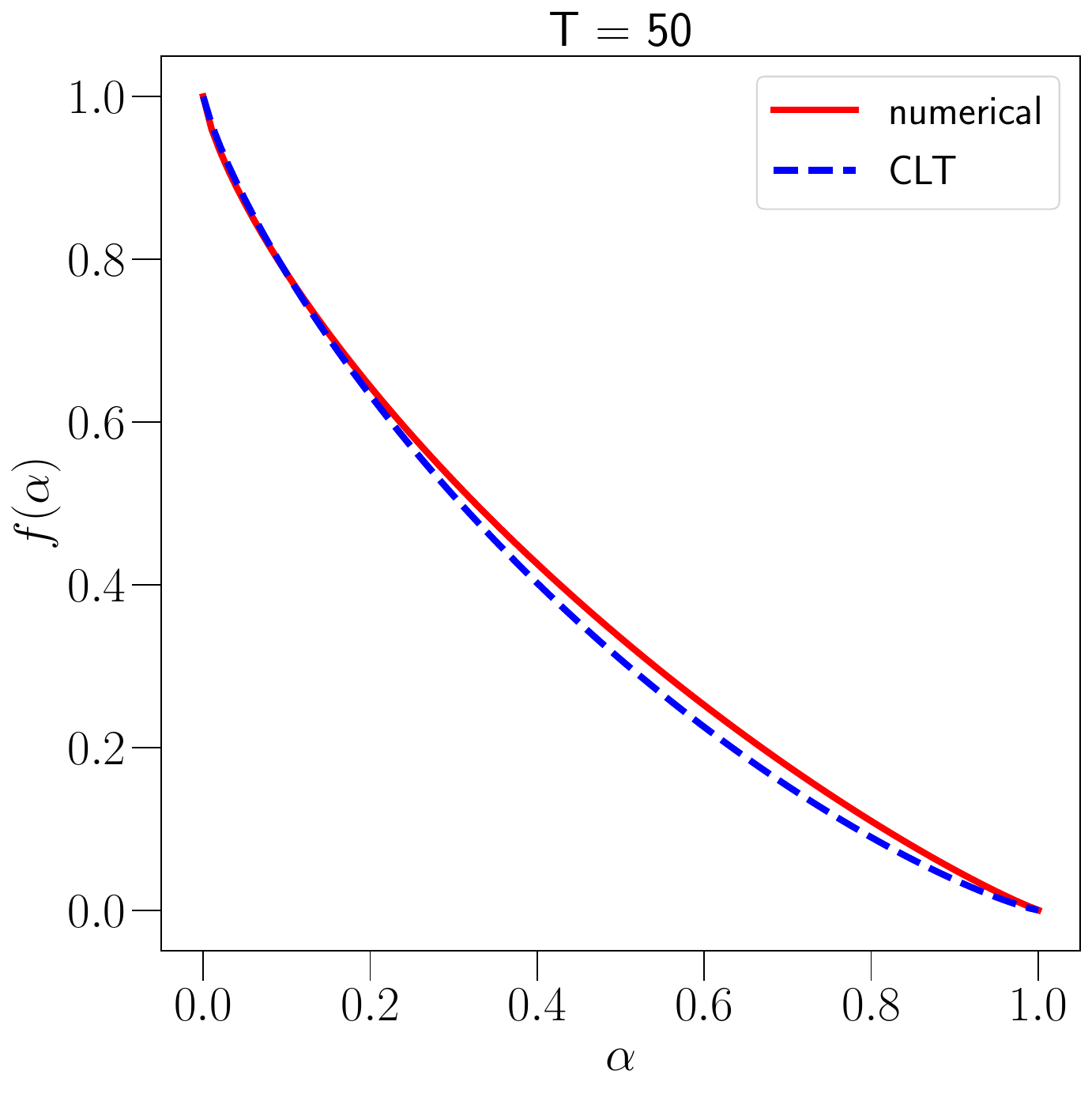}  
        \includegraphics[width=0.3\textwidth]{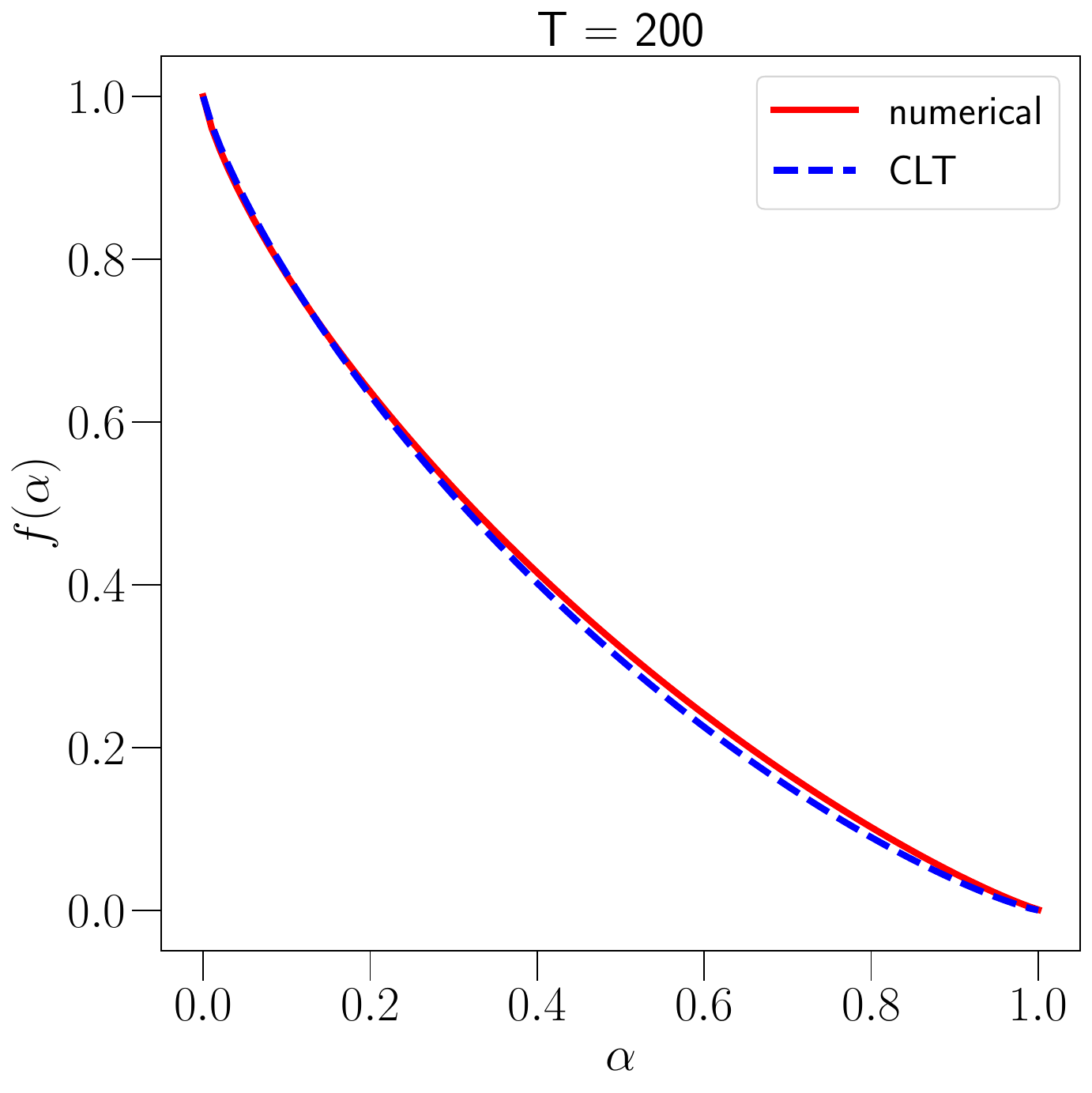}
        % \includegraphics[width=0.3\textwidth]{figure/numericalVSclt_T=200_1201.pdf} 
    % \end{tabular}
    
    \caption{ We compare the numerical results of $\left(\frac{m}{n}G_{\sqrt{1/\sigma_x^2+1/\sigma_y^2}} + (1-\frac{m}{n})Id \right)^{\otimes T}$ with asymptotic GDP approximation. When $T$ increases, the two curves get closer to each other. We also show the approximation error w.r.t. $T$ in the following \cref{fig:numericalVSclt_error}. That suggests the CLT approximation will be increasingly accurate as $T$ increase. }
    \label{fig:numericalVSclt}
\end{figure}
\begin{figure}[h!]
    \centering
    \begin{tabular}{cc}
        \includegraphics[width=0.23\textwidth]{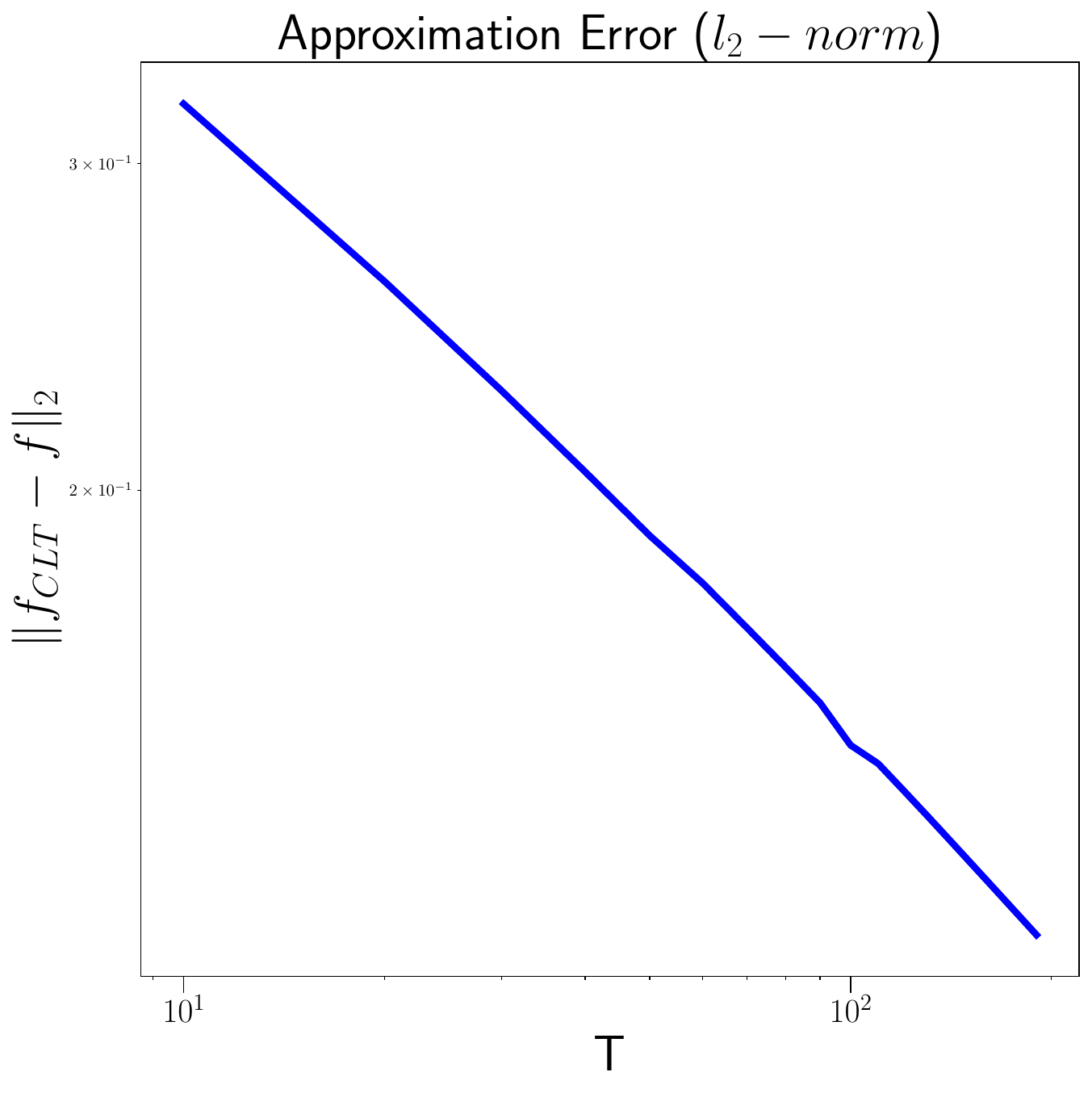}  &
        \includegraphics[width=0.23\textwidth]{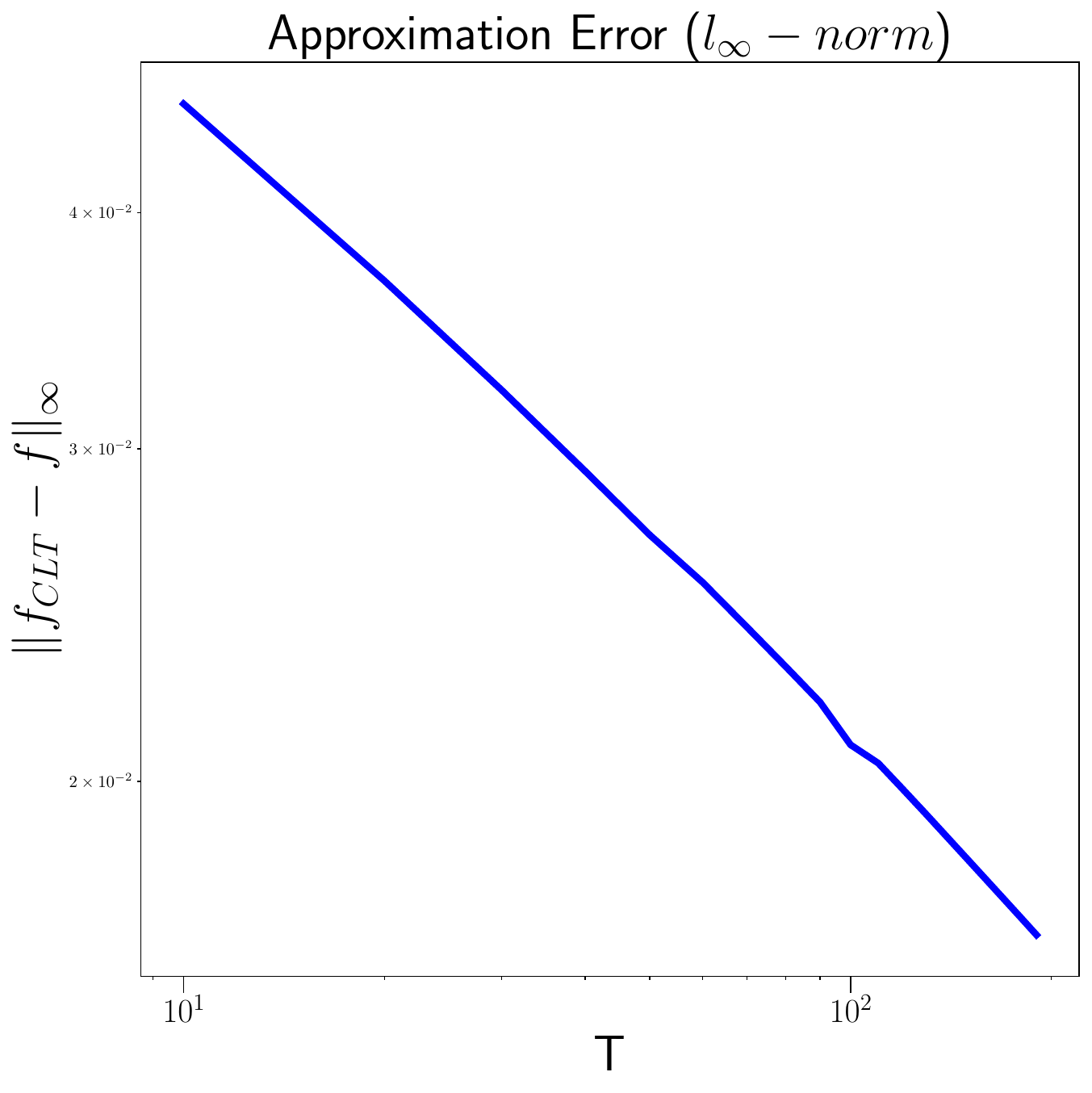}  \\
    \end{tabular}
    \caption{The figures shows the CLT approximation $\ell_\infty$ and $\ell_2$ error decrease when $T$ increase.  }
    \label{fig:numericalVSclt_error}
\end{figure}

\subsection{Proof for Theorem 3 for \nameb}
\label{appsec:bilevel}
{\nameb~employs a different subsampling scheme so that different subsampling law should be used.  The following proof show a novel subsampling law with its proof. }
\begin{proposition}
\label{prop:bilevel}
if $M$ is $f$-DP, and $S^\prime=S \cup \{ x_0 , y_0\} $, then:
\begin{align*}&T\left( M \circ HiSample_{\vp,\vq}(S), M \circ HiSample_{\vp,\vq}(S^\prime) \right)  \leq pf+(1-p)Id,\end{align*}
where $\vq\in[0,1]^{K}$,$\vq\in[0,1]^{K}$, $p=\max_{k\in [K]}\vp_k\vq_k.$
\end{proposition}
\begin{proof}\label{proof:bi--level}
Without loss of generality, we can assume $S = \{ (x_1,y_1), . . . , (x_n,y_n)\}$ and $S^\prime = \{(x_0,y_0=k),(x_1,y_1), . . . , (x_n,y_n)\}$. The  outcome of the process $HiSample_{\vp,\vq}$ when applied to $S$ is a bit string defined as $\vb=(b_1, . . . , b_n) \in \{0,1\}^n$. 
Bit $b_i$ decides whether $(x_i,y_i)$ is selected into the subset, denoted as $S_{\vb}$. 
Let $\theta_{\vb}$ be the probability that $\vb$ appears. 
With this notation, $ M \circ HiSample_{\vp,\vq}(S)$ can be written as the following mixture:
\begin{align*} M \circ HiSample_{\vp,\vq}(S) = \sum_{\vb} \theta_{\vb} \circ M(S_{\vb})\end{align*}
Similarly, $M \circ HiSample_{\vp,\vq}(S^\prime)$ can also be written as a mixture, with an additional bit indicating the presence of $(x_0,y_0)$. Alternatively, we can divide the components into two groups: one with $(x_0,y_0)$ present, and the other with $(x_0,y_0)$ absent.

\begin{align*}M \circ HiSample_{\vp,\vq}(S) &= \sum_{\vb} \vp_k \vq_k \theta_{\vb} \circ M(S_{\vb}\cup \{ x_0 , y_0\}) + \sum_{\vb} (1-\vp_k \vq_k) \theta_{\vb} \circ M(S_{\vb})\end{align*}

Then applying Lemma A.2. of \cite{bu2020deep}, for $k\in[K]$, we have: 
\begin{align*}&T\left( M \circ HiSample_{\vp,\vq}(S), M \circ HiSample_{\vp,\vq}(S^\prime) \right)  \leq \vp_k \vq_kf+(1-\vp_k \vq_k)Id.\end{align*}
Finally, since \cite{dong2019gaussian} proved the monotonicity of subsampling operator, we can take the maximum over $k\in[K]$ and finish the proof.
\end{proof}

Then applying the subsampling law of HS could finish the privacy analysis of \nameb. 
The HS is controlled by $\vp$ and $\vq$, and 
$\max_{k\in [K]}\vp_k\vq_k.$ determines the largest probability of selecting one sample. 
% In order to maximize the selected samples with a fixed $p$, we set $\vp_k=\frac{n_k}{n}$ and $\vq_k=\frac{m}{n\vp_k}$ for HS, where $n_k$ is the number of samples of class $k$.
\nameb~set $\vq_k=\frac{m}{n\vp_k}$for HS. Intuitively, this setting makes the probability of selecting every sample is equal to $\frac{m}{n}$, where $m$ is the mixup degree. Then replacing  Proposition A.1 of \citep{bu2020deep} used in \label{proof:single_step} with \cref{prop:bilevel} will show that \cref{fdp_thm} also gives the GDP guarantee of \nameb.

% \ldh{In \cref{fdp_thm}, the parameter $\lambda$ is used to balance the noise between the feature and label. In practical applications, it is often preferred to choose $\lambda \in [0.2, 4]$. The proof of \cref{fdp_thm} is provided in \cref{proof:gdp}, and a discussion on the relationship between the $f$-DP framework \citep{dong2019gaussian} and GDP is provided in \cref{sec:fdp_framework}. Additionally, by setting $\vp_k=\frac{n_k}{n}$ for HS, where $n_k$ is the number of samples of class $k$, \cref{fdp_thm} can also characterize the privacy guarantee of \nameb. The proof of this result is presented in \cref{appsec:bilevel}, which involves developing a subsampling rule for HS.}

\subsection{ Hypothesis Testing interpretation of GDP}
\label{sec:hypothesis}
{Consider the case that $T$ mixed samples are generated from $S = (x_1, x_2, \cdots, x_n)$ or $S' =(x_1, x_2, \cdots, x_n, x_{n+1})$. Let $H_0$ be the hypothesis that the dataset is $S$ and $H_1$ be the hypothesis that the dataset is $S'$. %Without loss of generality, we consider releasing one-dimensional data with Poisson sampling scheme, a simplified version of GDPMix. 
Let $\rvp_{i, j}$ are some independent Bernoulli random variables with parameter $q_0=\frac{m}{n}\to 0$. The mixed samples $z_i$ for are generated by:
% \begin{enumerate}
%     \item $H_0$:$z_i = \sum_{j=1}^n p_{i, j} x_j  + \mathcal{N}(0, \sigma_z^2).$,
%     \item $H_1$:
%     $z_i = \sum_{j=1}^n p_{i, j} x_j + p_{i, n+1}x_{n+1} + \mathcal{N}(0, \sigma_z^2).$.
% \end{enumerate}
\begin{align*}
    \mbox{under $H_0$: } z_i = \sum\nolimits_{j=1}^n \rvp_{i, j} x_j  + \mathcal{N}(0, \sigma_z^2), \quad
    \mbox{under $H_1$: } z_i = \sum\nolimits_{j=1}^n \rvp_{i, j} x_j + \rvp_{i, n+1}x_{n+1} + \mathcal{N}(0, \sigma_z^2).
\end{align*}
% Under $H_0$, 
% \begin{align*}
%     z_i = \sum_{j=1}^n p_{i, j} x_j  + \mathcal{N}(0, \sigma_z^2).
% \end{align*}
% Under $H_1$, the mixed sample $z_i$ for are generated by
% \begin{align*}
%     z_i = \sum_{j=1}^n p_{i, j} x_j + p_{i, n+1}x_{n+1} + \mathcal{N}(0, \sigma_z^2).
% \end{align*}
Since DP offers the worst-case protection, %without loss of generality, we assume 
it should necessarily offer protection for the case that $x_{j} = 0$ for $j\leq n$ and $x_{n+1} = 1$, where in this case :  $z_i = q_{i}  + \mathcal{N}(0, \sigma_z^2)$
%Thus the data generating process in this case can be simplified as:
\begin{align*}
    &\mbox{under $H_0$: $q_{i}\overset{i.i.d.}{\sim}\mathrm{Bernoulli}(q)$ and $q = 0$},\quad
    &\mbox{under $H_1$: $q_{i}\overset{i.i.d.}{\sim}\mathrm{Bernoulli}(q)$ and $q = q_0$}.
\end{align*}
%The attacker is actually testing $H_0: q = 0$ v.s. $H_1: q = q_0$ for the mixture model. 
%: with probability $q$, the samples $z_i$ are drawn from $\mathcal{N}(1, \sigma_z^2)$; and with probability $1-q$, the samples $z_i$ are drawn from $\mathcal{N}(0, \sigma_z^2)$. 
As DP offers protection against any attack on membership, it should necessarily offer protection against the attack which conducts the hypothesis test through maximum likelihood estimators (MLEs).  \cref{prop:mle} present the different asymptotic normality on the MLE of $q$ under $H_0$ and $H_1$. The proof of \cref{prop:mle} is in \cref{proof:mle}.}
% We show that the MLE of $q$ are asymptotically normal as $T\to\infty$ under both $H_0$ and $H_1$. 

% \textcolor{red}{
% \begin{enumerate}
%     \item $H_0$: $\sqrt{T}\hat{q}\left[\exp\left(\frac{1}{\sigma_z^2}\right)-1\right]^{-1/2} \sim\mathcal{N}\left(0, 1 \right)$,
%     \item $H_1$:
%     $\sqrt{T}\hat{q} \left[\exp\left(\frac{1}{\sigma_z^2}\right)-1\right]^{-1/2} \sim\mathcal{N}\left(q_0 \sqrt{T}\left[\exp\left(\frac{1}{\sigma_z^2}\right)-1\right]^{-1/2}, 1\right)$.
% \end{enumerate}
% }
\begin{proposition}\label{prop:mle}
With $T\to\infty$ and $q_0\to 0$, the distributions of the MLE $\hat{q}$ satisfy  
% \begin{align*}
%     &\mbox{Under $H_0$: } \sqrt{T}\hat{q}\left[\exp\left(\frac{1}{\sigma_z^2}\right)-1\right]^{-1/2} \sim\mathcal{N}\left(0, 1 \right),\\
%     &\mbox{Under $H_1$: } \sqrt{T}\hat{q} \left[\exp\left(\frac{1}{\sigma_z^2}\right)-1\right]^{-1/2}\sim\mathcal{N}\left(A, 1\right),
% \end{align*}
\begin{align*}
    &\mbox{$H_0$: } \sqrt{T}\hat{q}\left[\exp\left(\frac{1}{\sigma_z^2}\right)-1\right]^{-1/2} \sim\mathcal{N}\left(0, 1 \right),\quad
    &\mbox{$H_1$: } \sqrt{T}\hat{q} \left[\exp\left(\frac{1}{\sigma_z^2}\right)-1\right]^{-1/2}\sim\mathcal{N}\left(A, 1\right),
\end{align*}
where $A := q_0 \sqrt{T}\left[\exp\left(\frac{1}{\sigma_z^2}\right)-1\right]^{-1/2}$.
\end{proposition}

 By \cref{prop:mle}, the hypothesis test based on $\hat q$ is equivalent to distinguishing two normal distributions with one observation.  
{When $q_0\sqrt{T}\to \infty$, $H_0$ v.s. $H_1$ is trivially distinguishable, and there will be no privacy protection. When $q_0\sqrt{T}\to 0$, $H_0$ v.s. $H_1$ is completely indistinguishable, which corresponds to the perfect protection of privacy. 
% However, the existence of any single sample in the private dataset will not make distinguishable changes to the released mixup dataset. 
However, in this case, the information of any sample will be overwhelmed by the Gaussian noise, which makes the released dataset not desirable for data analysis. Therefore, the condition $q_0\sqrt{T}\to \nu$ ($\nu>0$) in Theorem \ref{fdp_thm} is necessary to achieve both statistical utility and $\mu$-GDP protection.}
    
\subsection{DP Guarantee with Renyi-DP \label{proof:rdp}}

To compare Renyi-DP with the PS scheme, we also provide privacy bound for \cref{algorithm} here. In experiments, we calculate the RDP privacy budgets using the autodp python package, which is officially released software of \citet{zhu2019poission}.

\begin{theorem}
\label{rdp_thm}
Given $n,m,\sigma_x,\sigma_y,C_x,C_y,T$, \cref{algorithm} satisfies $(\alpha,\epsilon_{RDP}(\alpha))$-Renyi-DP for integer $\alpha\geq 2$, where
\begin{equation}
\begin{array}{l}
\epsilon_{RDP}(\alpha) = T \frac{1}{\alpha-1} \log \left\{(1-\frac{m}{n})^{\alpha-1}(\alpha \frac{m}{n}-\frac{m}{n}+1) +\sum_{\ell=2}^{\alpha}\left(\begin{array}{c}
\alpha \\
\ell
\end{array}\right)(1-\frac{m}{n})^{\alpha-\ell} \frac{m}{n}^{\ell} e^{(\ell-1) (\frac{\alpha}{2\sigma_x^2}+\frac{\alpha}{2\sigma_y^2}) }\right\}
\end{array}
\end{equation}
\end{theorem}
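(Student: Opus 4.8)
The plan is to reduce the statement to two well-understood building blocks: the R\'enyi divergence of a Poisson-subsampled Gaussian mechanism for a single step, and the additive composition law of R\'enyi-DP over the $T$ independent repetitions. Throughout, recall that a mechanism satisfies $(\alpha,\epsilon)$-R\'enyi-DP if $D_\alpha(M(S)\|M(S'))\le\epsilon$ for all neighboring $S,S'$, where $D_\alpha(P\|Q)=\frac{1}{\alpha-1}\log\E_{Q}[(dP/dQ)^\alpha]$ is the R\'enyi divergence of order $\alpha$.

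First I would isolate a single iteration of \cref{algorithm}, which is exactly the Poisson-subsampled Gaussian mechanism analyzed in \cref{thm:single_step}. As established there, after clipping and averaging the feature sensitivity is $C_x/m$ with added noise $\mathcal{N}(0,(C_x\sigma_x/m)^2)$, so the feature release alone is $(\ell,\frac{\ell}{2\sigma_x^2})$-R\'enyi-DP, and likewise the label release is $(\ell,\frac{\ell}{2\sigma_y^2})$-R\'enyi-DP. Since R\'enyi-DP composes additively at each fixed order, releasing the pair $(\bar x,\bar y)$ makes the base (unsubsampled) mechanism $(\ell,\frac{\ell}{2}(\frac{1}{\sigma_x^2}+\frac{1}{\sigma_y^2}))$-R\'enyi-DP for every integer order $\ell$.

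Next I would invoke the Poisson-subsampling amplification theorem for R\'enyi-DP \cite{zhu2019poission}, which for a base mechanism with R\'enyi function $\epsilon(\cdot)$, sampling rate $p=m/n$, and integer order $\alpha$ bounds the single-step divergence by
\begin{equation*}
\frac{1}{\alpha-1}\log\left\{(1-p)^{\alpha-1}\big(1+(\alpha-1)p\big)+\sum_{\ell=2}^{\alpha}\binom{\alpha}{\ell}(1-p)^{\alpha-\ell}p^{\ell}\,e^{(\ell-1)\epsilon(\ell)}\right\}.
\end{equation*}
Substituting $p=m/n$ (so that $1+(\alpha-1)p=\alpha\frac{m}{n}-\frac{m}{n}+1$) and the base function $\epsilon(\ell)=\frac{\ell}{2}(\frac{1}{\sigma_x^2}+\frac{1}{\sigma_y^2})$, and using $\ell\le\alpha$ to replace $\ell$ by $\alpha$ in the exponent so as to obtain the stated (slightly looser) form $e^{(\ell-1)(\frac{\alpha}{2\sigma_x^2}+\frac{\alpha}{2\sigma_y^2})}$, yields the per-step bound.

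Finally, since \cref{algorithm} runs $T$ independent copies of this subsampled mechanism and R\'enyi-DP composes additively at each fixed order, the single-step divergence is multiplied by $T$, producing the factor $T$ in front of the logarithm and hence the claimed $\epsilon_{RDP}(\alpha)$. The main obstacle is the subsampling amplification step: deriving the binomial bound requires expanding the $\alpha$-th moment of the privacy-loss random variable of the mixture $(1-p)Q+p\,(\text{shifted }Q)$ and controlling every cross term $e^{(\ell-1)\epsilon(\ell)}$, which is precisely the technical heart of \cite{zhu2019poission}; the sensitivity bookkeeping and the additive composition are routine by comparison.
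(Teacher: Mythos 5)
Your proposal is correct and follows essentially the same route as the paper: treat one iteration as the additive composition of the two Gaussian mechanisms $M_x$ and $M_y$, apply the Poisson-subsampling amplification bound of \cite{zhu2019poission} with rate $m/n$, and then compose the $T$ independent steps additively. If anything, you are slightly more careful than the paper's own write-up, since you explicitly justify the exponent $e^{(\ell-1)(\frac{\alpha}{2\sigma_x^2}+\frac{\alpha}{2\sigma_y^2})}$ as a valid (looser) relaxation of the amplification bound's $e^{(\ell-1)\epsilon(\ell)}$ via monotonicity of R\'enyi-DP in the order, a point the paper glosses over by writing $\epsilon_M$ without an argument.
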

\begin{remark}
    We can convert RDP to $(\epsilon,\delta)$-DP by Proposition 3 in \citet{rdp}. By  minimizing over $\alpha$ , we can get \cref{algorithm} is $(\epsilon_{RDP}^*,\delta)$-DP:

\begin{align*}\epsilon_{RDP}^*= \min_{\alpha \geq 2} \epsilon_{RDP}(\alpha)+ \frac{\log(1/\delta)}{1-\alpha} \end{align*}
\end{remark}

\begin{proof}
As shown in \cref{sec: proof thm}, one step of  \cref{algorithm} is composed of $M_x$ and $M_y$. For $M_x$, it satisfies $\epsilon_{M_x}(\alpha)=\frac{\alpha}{2\sigma_x^2}$ by Gaussian mechanism in \citet{rdp}. For $M_y$, it satisfies $\epsilon_{M_y}(\alpha)=\frac{\alpha}{2\sigma_y^2}$. Therefore according to Proposition 1 of \citet{rdp}, $M$ satisfies:
\begin{align*}\epsilon_{M}(\alpha)=\frac{\alpha}{2\sigma_x^2}+\frac{\alpha}{2\sigma_y^2}.\end{align*}

% Poisson subsampled Gaussian mechanism with sensitivity $C/m$ and Gaussian noise $N\left(0,(\frac{C}{m}\sigma)^2\right)$ . 

Then we consider the composition of subsampled mechanisms.  According to Theorem 6 and Proposition 10 in \citet{zhu2019poission}, Gaussian mechanism applied on a subset from PS satisfies $(\alpha,\epsilon_{RDP}^\prime(\alpha))$-Renyi-DP for any integer $\alpha \geq 2$, where,  

\begin{equation}
\begin{array}{l}
\epsilon_{RDP}^\prime(\alpha) = \frac{1}{\alpha-1} \log \left\{(1-\frac{m}{n})^{\alpha-1}(\alpha \frac{m}{n}-\frac{m}{n}+1) +\sum_{\ell=2}^{\alpha}\left(\begin{array}{c}
\alpha \\
\ell
\end{array}\right)(1-\frac{m}{n})^{\alpha-\ell} \frac{m}{n}^{\ell} e^{(\ell-1) \epsilon_{M} }\right\}
\end{array}
\end{equation}.

Then the overall privacy leakage can be calculated by the composition law of Renyi-DP (Proposition 1 in \citet{rdp}). Therefore, \cref{algorithm} satisfies $(\alpha,T \epsilon_{RDP}^\prime(\alpha))$-RDP.
\end{proof}

\section{Proof of \cref{prop:mle}}
\label{proof:mle}

\begin{proof}
First, we show that the MLE of $q$ are asymptotically normal as $T\to\infty$ under both $H_0$ and $H_1$.The log likelihood will be given by
\begin{align}
    \ell(q) & =  \sum_{i=1}^T\ln\left[q\exp\left(-\frac{(z_i-1)^2}{2\sigma_z^2}\right)+(1-q)\exp\left(-\frac{z_i^2}{2\sigma_z^2}\right)\right],\\
    & = \sum_{i=1}^T\ln\left(1+q\left[\exp\left(\frac{2z_i-1}{2\sigma_z^2}\right)-1\right]\right)+\sum_{i=1}^T\ln\left[\exp\left(-\frac{z_i^2}{2\sigma_z^2}\right)\right],\\
    & \overset{q\to 0}{\approx}  q\sum_{i=1}^T\left[\exp\left(\frac{2z_i-1}{2\sigma_z^2}-1\right)\right] +\sum_{i=1}^T\ln\left[\exp\left(-\frac{z_i^2}{2\sigma_z^2}\right)-1\right],
\end{align}
with the approximation
\begin{align}
    \frac{d\ell(q)}{dq}\overset{q\to 0}{\approx} \sum_{i=1}^T\left[\exp\left(\frac{2z_i-1}{2\sigma_z^2}\right)-1\right].
\end{align}

The Fisher information matrix with a single observation $z_i$ is given by:
\begin{align}
I_{i}&=E\left[\left(\frac{d\ell(q)}{dq}\right)^2\right] \\
&= \int \left( \exp\left(\frac{2z_i-1}{2\sigma_z^2}\right)-1 \right)^2 \frac{1}{\sigma_z\sqrt{2\pi}}\exp\left(  -\frac{z_i^2}{2\sigma_z^2}  \right) dz_i\\
&=1+\int \frac{1}{\sigma_z\sqrt{2\pi}}\exp\left(  \frac{-z_i^2+4z_i-2}{2\sigma_z^2}  \right) dz_i \\
&-2\int \frac{1}{\sigma_z\sqrt{2\pi}}\exp\left(  \frac{-z_i^2+2z_i-1}{2\sigma_z^2}  \right) dz_i\\
&=\left[\exp\left(\frac{1}{\sigma_z^2}\right)-1\right]\\
\end{align}

Then with $q_0\to 0$, the maximum likelihood estimator $\hat{q}$ will be asymptotically 
\begin{enumerate}
    \item Under $H_0$, $\sqrt{T}\hat{q} \sim\mathcal{N}\left(0, \left[\exp\left(\frac{1}{\sigma_z^2}\right)-1\right]^{-1}\right)$,
    \item Under $H_1$, %$\sqrt{T}(\hat{q}-q_0) \sim\mathcal{N}\left(0, q_0\left[\exp\left(\frac{3}{\sigma_z^2}\right)-2\exp\left(\frac{1}{\sigma_z^2}\right)+1\right]^{-1}+(1-q_0)\left[\exp\left(\frac{1}{\sigma_z^2}\right)-1\right]^{-1}\right)$. Specially, with $q_0\to 0$, 
    $\sqrt{T}\hat{q} \sim\mathcal{N}\left(q_0 \sqrt{T}, \left[\exp\left(\frac{1}{\sigma_z^2}\right)-1\right]^{-1}\right)$.
\end{enumerate}

After normalization, we have:

\begin{enumerate}
    \item Under $H_0$, $\sqrt{T}\hat{q}\left[\exp\left(\frac{1}{\sigma_z^2}\right)-1\right]^{-1/2} \sim\mathcal{N}\left(0, 1 \right)$,
    \item Under $H_1$,
    $\sqrt{T}\hat{q} \left[\exp\left(\frac{1}{\sigma_z^2}\right)-1\right]^{-1/2} \sim\mathcal{N}\left(q_0 \sqrt{T}\left[\exp\left(\frac{1}{\sigma_z^2}\right)-1\right]^{-1/2}, 1\right)$.
\end{enumerate}
\end{proof}

\section{Proof of  \cref{thm: l_2 loss}}

\label{sec: proof thm}
Since we consider the linear regression model without intercept, which means $\mX$ and $\vy$ are normalized (the sum of each column is zero), we know $\frac{1}{n} \mJ\mX=0$ and $\frac{1}{n} \mJ\vy=0$, where $\mJ\in \R^{T\times n}$ is a matrix with all elements are ones. Therefore, we could rewrite the dataset generating process as 
% Let $\tilde{X}$, $\tilde{Y}$ be the dataset generated after mixing up the features and adding DP noise, i.e.
\begin{align*}
    \tilde{\mX}  = (\mM-\frac{1}{n} \mJ ) \mX + \mE_X, \quad \tilde{\vy}  = (\mM-\frac{1}{n} \mJ ) \vy + \mE_Y,
\end{align*}
This modification makes each element of $(\mM-\frac{1}{n} \mJ)$ a mean zero random variable for applying random matrix theory \citep{bai93}. Abusing the notation, we denote $(\mM-\frac{1}{n} \mJ)$ as $\mM$ in this section.

%\yy{Analysis should show: 1) bias-variance decomposition under GDP condition $p\sqrt{T}\to \nu>0$ ($p=m/n$); 2) special case bounds, e.g. $m\to \infty$ and $m=O(1)$.}
\begin{proof}
By definition, there holds
\begin{align*}
    \tilde{\bm\beta} & = \left[(\mM\mX+\mE_X)^T(\mM\mX+\mE_X)\right]^{-1}(\mM\mX+\mE_X)^T (\mM\vy+\mE_Y),\\
    & = \left[(\mM\mX+\mE_X)^T(\mM\mX+\mE_X)\right]^{-1}(\mM\mX+\mE_X)^T [\mM(\mX\bm\beta^*+\bm\epsilon)+\mE_Y],\\
    & = \left[(\mM\mX+\mE_X)^T(\mM\mX+\mE_X)\right]^{-1}(\mM\mX+\mE_X)^T \left[(\mM\mX+\mE_X)\bm\beta^*-\mE_X\bm\beta^*+\mM\bm\epsilon+\mE_Y\right],\\
    & = \bm\beta^* + \left[(\mM\mX+\mE_X)^T(\mM\mX+\mE_X)\right]^{-1}(\mM\mX+\mE_X)^T (-\mE_X\bm\beta^*+\mM\bm\epsilon+\mE_Y).
\end{align*}
Therefore, there holds
\begin{align}
    \|\tilde{\bm\beta} - \bm\beta^*\|_2 & = \left\|\left[(\mM\mX+\mE_X)^T(\mM\mX+\mE_X)\right]^{-1}(\mM\mX+\mE_X)^T (-\mE_X\bm\beta^*+\mM\bm\epsilon+\mE_Y)\right\|_2,\nonumber\\
    & \le \left\|\left[(\mM\mX+\mE_X)^T(\mM\mX+\mE_X)\right]^{-1}(\mM\mX+\mE_X)^T\right\|_2 \|-\mE_X\bm\beta^*+\mE_Y\|_2+\cdots\nonumber\\
    & \ \ \ \ \cdots+\left\|\left[(\mM\mX+\mE_X)^T(\mM\mX+\mE_X)\right]^{-1}(\mM\mX+\mE_X)^T \mM\bm\epsilon\right\|_2,\nonumber\\
    & = \sqrt{\left\|\left[(\mM\mX+\mE_X)^T(\mM\mX+\mE_X)\right]^{-1}\right\|_2}\|-\mE_X\bm\beta^*+\mE_Y\|_2+\cdots\nonumber\\
    &\ \ \ \ \cdots+\left\|\left[(\mM\mX+\mE_X)^T(\mM\mX+\mE_X)\right]^{-1}(\mM\mX+\mE_X)^T \mM\bm\epsilon\right\|_2,\label{eq: eq3}
\end{align}
where the last step is by the fact that $(\mA^T\mA)^{-1}\mA^T [(\mA^T\mA)^{-1}\mA^T]^T = (\mA^T\mA)^{-1}$ for all suitable matrix $\mA$. %Below, we will consider the case where for $n\to\infty$, $n/T \to \alpha_1<1$. 

We first estimate the minimal eigenvalue of $(\mM\mX+\mE_X)^T(\mM\mX+\mE_X)$. 
First of all, for the case that $n, T\to\infty$, and $0< \lim\frac{n}{T}=\alpha<1$, by \citet{bai93} on extreme eigenvalues of random matrices, there holds
\begin{align} \label{eq:M:alpha01}
    T\frac{n-m}{m n^2}(1-\sqrt{\alpha})^2\mI_n\preceq \mM^T\mM\preceq T\frac{n-m}{m n^2}(1+\sqrt{\alpha})^2\mI_n,
\end{align}
with probability going to one when $n\to\infty$. %\yy{As $m^2 T/n^2 \to \nu^2$, $T(n-m)/(mn^2) = \nu^2(n-m)/m^3$?}

Secondly, in the case that $n, T\to\infty$, but $\lim\frac{n}{T} = \alpha = 0$, we will consider an expansion of $\mM$, marked as $\tilde{\mM} \in \R^{T\times(n+\tilde{n})}$, where $\tilde{n}\in\mathbb{N}$ satisfy $\lim\frac{n+\tilde{n}}{T}=0.01$. Applying \citet{bai93} again, and there holds for $\tilde{\mM}$ that
\begin{align}\label{eq:tM}
    0.81T\frac{n-m}{m n^2}\mI_n\preceq \tilde{\mM}^T\tilde{\mM}\preceq 1.21T\frac{n-m}{m n^2}\mI_n,
\end{align}
with probability one when $n\to\infty$. Note that by definition, $\mM^T\mM$ is a main diagonal sub-matrix of $\tilde{\mM}^T\tilde{\mM}$. Thus the maximum and minimum eigenvalue of $\mM^T\mM$ is bounded above and below by respective eigenvalues of $\tilde{\mM}^T\tilde{\mM}$. Therefore, there further holds
\begin{align}\label{eq:M:alpha0}
    0.81T\frac{n-m}{m n^2}\mI_n\preceq \mM^T\mM\preceq 1.21T\frac{n-m}{m n^2}\mI_n,
\end{align}
with probability one when $n\to\infty$, for the case that $\lim\frac{n}{T} = \alpha = 0$.
%let $n^*\in \mathbb{N}$ satisfy $\lim_{n^*\to\infty}\frac{n^*}{T}\to 0.01$\yy{why such $n^*$?}, let $M^*\in \R^{T\times p^*}$ be the random matrix with i.i.d. elements, where each elements follows the same distribution with the elements in $M$. Thus $M$ can be viewed as the matrix that takes the first $n$ columns of $M^*$ for each outcome, and $M^TM$ can be viewed as a main diagonal sub-matrix of $(M^*)^TM^*$. Then the maximum and minimum eigenvalue of $M^TM$ is bounded above and below by respective eigenvalues of $(M^*)^TM^*$ for each outcome. \yy{statements here are not clear to me}Therefore when $n, T\to\infty$, there holds with probability one\yy{is the following $M^{*T} M^*$?}
%\begin{align*}
    %0.81T\frac{n-m}{m n^2}I_n\preceq M^TM\preceq 1.21T\frac{n-m}{m n^2}I_n,
%\end{align*}

Therefore, combining the two cases above in Eq. \eqref{eq:M:alpha01} and \eqref{eq:M:alpha0}, for all $0\le\alpha<1$, there holds 
\begin{align}
    0.81T\frac{n-m}{m n}(1-\sqrt{\alpha})^2\lambda_{\min}\mI_p\preceq \mX^T\mM^T\mM\mX\preceq 1.21T\frac{n-m}{m n}(1+\sqrt{\alpha})^2\lambda_{\max}\mI_p, \label{eq: bound mx}
\end{align}
with probability going to one when $n\to\infty$, where as a reminder, $\lambda_{\min}$ and $\lambda_{\max}$ are defined respectively to be the lower and upper bound for eigenvalues of $\mX^T\mX$. %Similarly, let $p\in \mathbb{N}$ satisfy $\lim_{n\to\infty}\frac{p^*}{n}\to 0.01$, let $E^*\in \R^{T\times p^*}$ be the random matrix with i.i.d. elements, where each elements follows the same distribution with the elements in $E_X$. Then $E_X^TE_X$ can be viewed as a main diagonal sub-matrix of $(E^*)^TE^*$, therefore when $n\to\infty$, there holds
Apply a similar matrix expansion argument as shown above on $\mE_X\in \R^{T\times p}$, where $\frac{p}{T}\to 0$%\yy{OK. But this is a little bit weird, perhaps also think about $p/T=c\geq 0$ with RMT?}
, there holds
\begin{align}
    0.81\left(\frac{C_X}{m}\sigma_x\right)^2 \mI_p \preceq \mE_X^T\mE_X\preceq 1.21T\left(\frac{C_X}{m}\sigma_x\right)^2 \mI_p,\label{eq: ex}
\end{align}
with probability one when $n\to\infty$. This further indicates that, with probability going to one when $n\to\infty$, there holds
\begin{align*}
    &\|\mE_X^T\mM\mX\|_2\le \|\mE_X\|_2\|\mM\mX\|_2\le 1.1(1+\sqrt{\alpha})T\sqrt{\frac{n-m}{mn}}\frac{C_X}{m}\sigma_x.
\end{align*}
Note that for $m\to \infty$, there holds the following comparison on $\|\mE_X^T\mM\mX\|_2$ and the minimal eigenvalue of $\mX^T\mM^T\mM\mX$, 
\begin{align*}
   \frac{\|\mE_X^T\mM\mX\|_2^2}{\frac{0.81(1-\sqrt{\alpha})^2T\lambda_{\min}(n-m)}{mn}}\cdot  \frac{0.81\lambda_{\min}(1-\sqrt{\alpha})^2}{1.21(1+\sqrt{\alpha})^2}\le \frac{mn\left(\frac{C_X}{m}\sigma_x\right)^2}{n-m}
    = \frac{nC_X^2}{m(n-m)\ln\left(1+\frac{\mu^2}{\nu^2}\right)}%\le \frac{nC_X^2}{(n-m)}\frac{mT}{\mu^2n ^2}
    \to 0,
\end{align*}
where the second last step is because: $\frac{m^2T}{n^2}\to\nu$ and $T\to\infty$ leads to $\frac{n}{n-m}\to1$; $m\to\infty$ leads to $\frac{1}{m}\to0$.
%because $x\ln\left(1+\frac{c}{x}\right)$ is an increasing function for $x>0$, and $\lim_{x\to \infty}x\ln\left(1+\frac{c}{x}\right) = c$. \yy{Another way to characterize is $\frac{nC_X^2}{m(n-m)\ln\left(1+\frac{\mu^2}{\nu^2}\right)}\approx \frac{1}{nq(1-q)\ln (1+(\mu/\nu)^2)}\to 0$ as $m\to \infty$, where $q=m/n$.}  
Therefore, with probability going to one when $n\to\infty$, there holds
\begin{align}
    (\mM\mX+\mE_X)^T(\mM\mX+\mE_X)\succeq& 0.81\frac{T\lambda_{\min}(n-m)}{mn}(1-\sqrt{\alpha})^2\mI_p-2\|\mE_X^T\mM\mX\|_2\cdot \mI_p,\\
    \succeq&\frac{T\lambda_{\min}(n-m)}{2mn}(1-\sqrt{\alpha})^2\mI_p.\label{eq: bound cov}
\end{align}

\iffalse
note that, with probability going to one when $n\to\infty$, there holds
\begin{align*}
    &\left(\frac{1}{\sqrt{T}\frac{C_X}{m}\sigma_x}E_X^TMX+\sqrt{T}\frac{C_X}{m}\sigma_x I_p\right)^T\left(\frac{1}{\sqrt{T}\frac{C_X}{m}\sigma_x}E_X^TMX+\sqrt{T}\frac{C_X}{m}\sigma_x I_p\right)\succeq 0,\\
    &X^TM^T MX-\frac{1}{T(\frac{C_X}{m}\sigma_x)^2}X^TM^T E_XE_X^T MX\succeq\frac{T\lambda_{\min}(n-m)}{mn}(1-\sqrt{\alpha})^2I_p.
\end{align*}
Therefore, there holds with probability going to one when $n\to\infty$.
\begin{align*}
    (\mM\mX+\mE_X)^T(\mM\mX+\mE_X)\succeq \frac{T\lambda_{\min}(n-m)}{mn}(1-\sqrt{\alpha})^2I_p.
\end{align*}
\fi
Therefore, there holds
\begin{align}
    &\sqrt{\left\|\left[(\mM\mX+\mE)^T(\mM\mX+\mE)\right]^{-1}\right\|_2} \le \left[(1-\sqrt{\alpha}) \sqrt{\frac{T\lambda_{\min}(n-m)}{2mn}}\right]^{-1},\label{eq: mx mix}
\end{align}
with probability going to one when $n\to\infty$. 
\iffalse
\yy{This implies that for $n/T\to \alpha$, 
\[\sqrt{\left\|\left[(\mM\mX+\mE)^T(\mM\mX+\mE)\right]^{-1}\right\|_2}\le \frac{\sqrt{\alpha}}{(1-\sqrt{\alpha})}\sqrt{\frac{2m}{\lambda_{\min}(n-m)}}=\frac{\sqrt{\alpha}}{(1-\sqrt{\alpha})}\sqrt{\frac{2q}{\lambda_{\min}(1-q)}}, \ \ \ q=m/n,\]
which is an increasing function of $m$ or $q=m/n$, similar to the trend for condition numbers in simulation. 
} 
\fi

What's more, using Eq. \eqref{eq: ex} and central limit theorem, there holds
\begin{align*}
    \|-\mE_X\beta^*+\mE_Y\|_2 & \le 1.1\sqrt{T}\frac{C_X}{m}\sigma_x\|\bm\beta^*\|_2 +1.1\sqrt{T}\frac{C_Y}{m}\sigma_y,
\end{align*}
with probability going to one when $n\to\infty$. Therefore, there holds
\begin{align}
&\sqrt{\left\|\left[(\mM\mX+\mE_X)^T(\mM\mX+\mE_X)\right]^{-1}\right\|_2}\|-\mE_X\beta^*+\mE_Y\|_2 \\ \le &\frac{2C_X\|\bm\beta^*\|_2+2C_Y}{\sqrt{\lambda_{\min}}(1-\sqrt{\alpha})}\frac{1}{\sqrt{\frac{m(n-m)}{n}\ln\left(1+\frac{\mu^2n ^2}{m^2T}\right)}},
\end{align}
with probability going to one when $n\to\infty$. Note that $m^2T/n^2\to \nu^2$ and $T\to \infty$ imply that $(n-m)/n=1-\nu/\sqrt{T}\to 1$ and $m=\nu n/\sqrt{T}$, therefore
\begin{equation}
bias \le \frac{T^{1/4}}{n^{1/2}}\frac{2C_X\|\bm\beta^*\|_2+2C_Y}{\sqrt{\lambda_{\min}}(1-\sqrt{\alpha})} \cdot  \frac{1}{\sqrt{ \nu \ln \left(1+\frac{\mu^2}{\nu^2}\right)}} . \label{eq: eq2}
\end{equation}

\iffalse

\yy{This means that 
\[
bias \preceq \frac{2C_X\|\beta^*\|_2+2C_Y}{\sqrt{\lambda_{\min}}(1-\sqrt{\alpha})}\cdot \frac{\ln^{-1/2}\left(1+\frac{\mu^2}{\nu^2}\right)}{\sqrt{nq(1-q)}}, \ \ \ q=m/n, m^2T/n^2\to \nu^2, n/T\to \alpha, 
\]
as $m,n,T\to \infty$. Bias is an increasing function as $\nu$ grows, so small $\nu$ is preferred in this bound. $q(1-q)\leq 1/4$ where equality holds at $q=1/2$; so it suggests that $m^* = n/2$ where this upper bound is smallest. Or equivalently, $m^2T/n^2\to \nu^2$ and $T\to \infty$ imply that $(n-m)/n=1-\nu/\sqrt{T}\to 1$ and $m=\nu n/\sqrt{T} = \nu \sqrt{\alpha n}$ for $n/T\to \alpha$, therefore
\[
bias \preceq \frac{2C_X\|\beta^*\|_2+2C_Y}{\sqrt{\lambda_{\min}\sqrt{\alpha}}(1-\sqrt{\alpha})} \cdot  \frac{1}{\sqrt{ \nu \ln \left(1+\frac{\mu^2}{\nu^2}\right)}} \cdot n^{-1/4}. 
\]
where for fixed $n$, there is an optimal $\nu^*$ such that the bound is minimized ($\nu \ln \left(1+\frac{\mu^2}{\nu^2}\right)$ is maximized) (show a figure). 
}
\ldh{For fixed $n$, the figure of optimal $\nu^*$ is shown in \cref{fig:simulation_error}. \cref{eq: eq2} could show there is an optimal $\nu^*$ and explain $\nu^*$ increase when $\mu$ increases, which is observed in both simulation experiments and real-world classification tasks. See \cref{fig:simulation_error}   }

\fi

Moreover, consider taking expectation on $\bm\epsilon$ only, there holds
\begin{align*}
    &\E_{\bm\epsilon}\left(\left[{(\mM\mX+\mE_X)^T(\mM\mX+\mE_X)}\right]^{-1}(\mM\mX+\mE_X)^T \mM\bm\epsilon\right)=0,\\
    &\mathrm{var}_{\bm\epsilon}\left(\left[(\mM\mX+\mE_X)^T(\mM\mX+\mE_X)\right]^{-1}(\mM\mX+\mE_X)^T \mM\bm\epsilon\right)\\
    =&\sigma^2\left[(\mM\mX+\mE_X)^T(\mM\mX+\mE_X)\right]^{-1}(\mM\mX+\mE_X)^T \mM\mM^T(\mM\mX+\mE_X)\left[(\mM\mX+\mE_X)^T(\mM\mX+\mE_X)\right]^{-1}\\
    \preceq& 1.21T\frac{n-m}{m n^2}\left[(\mM\mX+\mE_X)^T(\mM\mX+\mE_X)\right]^{-1}(\mM\mX+\mE_X)^T I_n(\mM\mX+\mE_X)\left[(\mM\mX+\mE_X)^T(\mM\mX+\mE_X)\right]^{-1}\\
    = & 1.21T\frac{n-m}{m n^2}\left[(\mM\mX+\mE_X)^T(\mM\mX+\mE_X)\right]^{-1}\\
    \preceq&\frac{4\sigma^2}{n} \frac{(1+\sqrt{\alpha})^2\lambda_{\max}}{(1-\sqrt{\alpha})^2\lambda_{\min}}I_p,
\end{align*}
with probability going to one when $n\to\infty$, where the last step is from Eq. \eqref{eq: mx mix}.% \yy{perhaps it jumps too fast from the second last inequality (containing $m$) to the last one (independent on $m$)?}. \ldh{The eigenvalues are always scale with $std_{M_{i,j}}=\frac{n-m}{mn^2}$}

For simplicity of notations, denote $\zeta = \left[(\mM\mX+\mE_X)^T(\mM\mX+\mE_X)\right]^{-1}(\mM\mX+\mE_X)^T \mM{\bm\epsilon}$. Note that $\zeta_i$ is Gaussian random variable for any given $\mM$ and $\mE_X$, therefore, with probability going to one when $n\to\infty$, there holds
\begin{align*}
    \lim_{n\to\infty}\Prob_{\bm\epsilon}\left(|\zeta_i|<\frac{2\sigma \ln n}{\sqrt{n}} \frac{(1+\sqrt{\alpha})\sqrt{\lambda_{\max}}}{(1-\sqrt{\alpha})\sqrt{\lambda_{\min}}}\right)=1.
\end{align*}
Therefore, there further holds
\begin{align*}
    &\Prob_{\bm\epsilon}\left(\left\|\left[(\mM\mX+\mE_X)^T(\mM\mX+\mE_X)\right]^{-1}(\mM\mX+\mE_X)^T \mM\bm\epsilon\right\|_2> \frac{2\sigma \sqrt{p}\ln n}{\sqrt{n}} \frac{(1+\sqrt{\alpha})\sqrt{\lambda_{\max}}}{(1-\sqrt{\alpha})\sqrt{\lambda_{\min}}}\right)\nonumber\\
    \le& \sum_{i=1}^p\Prob_{\bm\epsilon}\left(|\zeta_i|>\frac{2\sigma \ln n}{\sqrt{n}} \frac{(1+\sqrt{\alpha})\sqrt{\lambda_{\max}}}{(1-\sqrt{\alpha})\sqrt{\lambda_{\min}}}\right)\to 0,
\end{align*}
with probability going to one when $n\to\infty$. Therefore, there holds
\begin{align}
    \left\|\left[(\mM\mX+\mE_X)^T(\mM\mX+\mE_X)\right]^{-1}(\mM\mX+\mE_X)^T \mM\bm\epsilon\right\|_2\le \frac{2\sigma \sqrt{p}\ln n}{\sqrt{n}} \frac{(1+\sqrt{\alpha})\sqrt{\lambda_{\max}}}{(1-\sqrt{\alpha})\sqrt{\lambda_{\min}}}, \label{eq: eq1}
\end{align}
with probability going to one when $n\to\infty$.

Combining Eq. \eqref{eq: eq3}, Eq. \eqref{eq: eq1} and Eq. \eqref{eq: eq2} together, and we will have our final result.
\end{proof}

\section{Inconsistency of $\tilde{\bm\beta}$ with bounded $m$}

We show in this section that $m\to\infty$ is necessary for the estimator $\tilde{\beta}$ defined by
\begin{align*}
    \tilde{\beta} = \left[\tilde{X}^T\tilde{X}\right]^{-1}\tilde{X}^T\tilde{y},
\end{align*}
to be consistent.

The bias term of the estimator $\tilde{\beta}$ is 
\begin{align*}
    \Expect_{\bm\epsilon} [\tilde{\bm\beta}] - \bm\beta^*
    = & \left[(\mM\mX+\mE_X)^T(\mM\mX+\mE_X)\right]^{-1}(\mM\mX+\mE_X)^T (-\mE_X\bm\beta^*+\mE_Y).
\end{align*}

Specifically, we consider the following term in the bias term, that we consider
\begin{align}
    \left[(\mM\mX+\mE_X)^T(\mM\mX+\mE_X)\right]^{-1}\mE_X^T \mE_X\bm\beta^*,
\end{align}
and we will show that such a term will not converge to zero if $m$ is bounded.

Note that by Eq. \eqref{eq: bound mx} and Eq. \eqref{eq: ex} that
\begin{align*}
    (\mM\mX+\mE_X)^T(\mM\mX+\mE_X)&\preceq 2\mX^T\mM^T\mM\mX+2\mE_X^T \mE_X,\\
    &\preceq 3T\left(\frac{n-m}{mn}(1+\sqrt{\alpha})^2\lambda_{\max}+\frac{C_X^2}{m^2\ln(1+\mu^2/\nu^2)}\right)\mI_p,\\
    &\preceq 4T\left(\frac{1}{m}(1+\sqrt{\alpha})^2\lambda_{\max}+\frac{C_X^2}{m^2\ln(1+\mu^2/\nu^2)}\right)\mI_p,
\end{align*}
with probability goes to one when $n\to\infty$. Then, combining with Eq. \eqref{eq: ex} again, there holds with probability goes to one when $n\to\infty$,

\begin{align*}
    &\|\left[(\mM\mX+\mE_X)^T(\mM\mX+\mE_X)\right]^{-1}\mE_X^T \mE_X\bm\beta^*\|_2\\
    \ge& \frac{1}{8}\left(\frac{1}{m}(1+\sqrt{\alpha})^2\lambda_{\max}+\frac{C_X^2}{m^2\ln(1+\mu^2/\nu^2)}\right)^{-1}\frac{C_X^2}{m^2\ln(1+\mu^2/\nu^2)}\|\bm\beta^*\|_2,
\end{align*}
which clearly will not goes to zero if $m$ is bounded, and the estimator $\tilde{\beta}$ will be inconsistent.

% \iffalse

% \begin{remark}
%     In fact, from Eq. \eqref{eq: mx mix}, one can observe that the random matrix effect on $M$ gives the property that the maximum eigenvalue of $\left[(\mM\mX+\mE_X)^T(\mM\mX+\mE_X)\right]^{-1}$ is an increasing function with respect to $m$, while $\frac{C_X}{m}\sigma_x$ and $\frac{C_y}{m}\sigma_y$ are basically decreasing functions with respect to $m$. These two points together gives that the left hand side of Eq. \eqref{eq: eq2}, as a dominant part on the upper bound of $\ell_2$ loss, may show a first increasing then decreasing trend with respect to $m$.
% \end{remark}

% \fi

\section{Details of Experiments}

\subsection{Datasets}\label{sec:dataset}
MNIST \citep{mnist} has 10 categories and each category has 6000 and 1000 28×28 grayscale images for training and testing. The colored images of CIFAR10 and CIFAR100 \citep{cifar10} have a spatial size of 32×32. CIFAR10 owns 10 categories, each with 5000 and 1000 RGB images for training and testing, respectively. CIFAR100 owns 100 categories, each with 500 and 100 RGB images for training and testing, respectively. MiniImagenet \citep{imagenet} dataset contains 60000 samples and 100 classes. For each class, set 500 samples for training and 100 samples for evaluation.  All datasets are publicly available and widely used in the machine learning community. 
% The three datasets do not contain  personally identifiable information or offensive content. 

% \subsection{Feature extractors:}
% We used the ViT/B14 model from CLIP \url{https://github.com/openai/CLIP}, which has 303 million parameters in its image encoder. We found that using CLIP features yielded better accuracy than SimCLR in \cref{fig:dpsgd}.

\subsection{Detailed Settings for our methods and Baselines} \label{sec:detailed_setting}
\textbf{DPMix}: We tried different networks for DPMix. When comparing the utility of different privacy accountants in \cref{tab:comparing_acc_RDP_GDP}, we use the
same network described by \citet{dpmix} to show that GDP accountant could improve the results of \citet{dpmix}. When comparing \namea, we choose the same ScatteringNet as \namea~for MNIST to make a fair comparison.
% and we show that ScatteringNet classifier performs better than CNN on MNIST dataset in \cref{ScatteringNetVSCNN}.
We chose the same CNN described in \citet{dpmix} for CIFAR10/100, because we tried ResNet-50 and ResNet-152 for DPMix on CIFAR10/100 dataset in \cref{larger_model,larger_model_full} and showed larger networks lead to the weaker utility because of overfitting. The experiments in \cref{larger_model,larger_model_full} use modified ResNet-50 and ResNet-152, which have an input size of 32*32*3 to keep the dimension for pixel mixup the same as the network described in \citet{dpmix} to make a fair comparison. While the modification of input size also improves the utility of pre-trained ResNets with 224*224*3 input size. For example, the best test CIFAR10 accuracy during training is only 15.64\% for pre-trained ResNet-50 with $(8,10^{-5})$-DP. If not specified, we set $C_x=C_y=1$.  We train the model with Adam for 200 epochs and batch size 256. The initial learning rate is 0.1 and decays by 10 at epochs 80,120,160. 

\textbf{\namea~and\nameb}: When we conduct mixup at the feature level, we build a linear classifier upon the mixed features.
For MNIST, which contains gray-scale images, we use ScatteringNet \citep{handcraft} described in \cref{sec:network} as the  feature extractor and the output size of Scattering Net is 81*7*7.  For RGB images, we choose ResNet-50 and ResNet-152 pre-trained on unlabeled ImageNet with SimCLRv2 as a feature extractor \citep{simclrv2}. To fit the input size, we resize the images to 224*224 and use features with 2048 and 6144 dimensions before the final classification layer. We set $C_x=C_y=1$ in all experiments. We train the model with Adam for 200 epochs and batch size 256. The initial learning rate is 0.001 and decays by 10 at epoch 80,120 and, 160. 
% \textbf{Baselines and their settings}

\textbf{Non-private baseline}: Training without considering privacy. The setting of training is the same as the corresponding methods except for no mixup or noise. It serves as an upper bound of the utility of DP methods. 

% \textbf{DPPro}: This is a DP data publish algorithm that first uses random projection to reduce the dimension then injects noise. To make a fair comparison, we use DPPro algorithm on the same feature as GDPMix-F and build a logistic regression model on the released dataset. Note that we don't add noise to label for DPPro. So, DPPro only have DP guarantees on features and release the raw labels. We train the classification model with Adam for 200 epochs and batch size 256. The initial learning rate is 0.001 and decays by 10 at epoch 80,120 and, 160.  We also conduct a grid search on the projection dimension from $50,100,200,400,800,1600,3200,6400,12800$ and report the best test accuracy. 

\textbf{P3GM}: P3GM \citep{P3GM} is a data publishing algorithm that first trains a VAE with DPSGD, then releases samples generated from the DP VAE. We use the officially released code of P3GM and follow its default setting. The code only offers the hyper-parameter setting for $(1,10^{-5})$-DP. To evaluate P3GM with different privacy budgets, we only alter the noise scale and keep other hyper-parameter fixed. After getting the generated samples, we use the same ScatteringNet classifier as \namea~for a fair comparison on MNIST dataset. For CIFAR10/100 datasets, we choose the CNN described by \citet{dpmix}. We choose the same optimization configuration except for setting the learning rate to $0.0001$, which offers the best utility for P3GM.

\textbf{DP-MERF} DP-MERF \citep{Dp-merf} first calculates random feature representations of kernel mean embeddings with DP noise and then trains a generator to produce fake samples that match the noisy embeddings. We use the code from \url{https://github.com/ParkLabML/DP-MERF} and its default hyper-parameter setting except change the noise scale to achieve different privacy guarantees.

\textbf{DP-HP} DP-HP \citep{dphp} improves DP-MERF by replacing the random features with Hermite polynomial features, which can use fewer dimensions to accurately approximate the mean embedding of the data distribution compared to random features. Therefore, DP-HP injects less DP noise and is more suitable for releasing private data. We use the code from \url{https://github.com/ParkLabML/DP-HP} and its default hyper-parameter setting except change the noise scale to achieve different privacy guarantees.

\textbf{Generative model for feature release. }
We also provide classification results of P3GM, DP-MERF, and DP-HP on the features extracted by ScatteringNet or ResNet-152 pre-trained on unlabeled ImageNet with SimCLRv2. Specifically, we first conduct feature extraction and then train the generative model in the features space to generate a synthetic feature dataset. Finally, we build a linear classifier on the synthetic feature dataset with the same setting as the one used for \namea(-HS). 

\subsection{Network Structure}\label{sec:network}
In our experiments, we employed the same CNN as \citet{dpmix}. It contains 357,690 parameters, and here we list its detailed structure. 

\begin{table}[h!]
    \centering
    \caption{CNN model used for MNIST, CIFAR10, and CIFAR100, with ReLU activation function. }
    \label{tab:endtoendCNN_MNIST}
    % \vskip 0.1in
    \begin{tabular}{ll}
         Layer type & Configuration\\
         \toprule
         Conv2d & 32 filters of 5x5, stride 1, padding 2 \\
         Max-Pooling & 2x2, stride 2\\
         Conv2d & 48 filters of 5x5, stride 1, padding 2\\
         Max-Pooling & 2x2, stride 2\\
         Fully connected & 100 units \\
         Fully connected & 100 units\\
         Fully connected & 10 units\\
         \bottomrule
    \end{tabular}

\end{table}

As for the ScatteringNet \citep{scatteringnet}, we follow the setting of \citet{handcraft} for all experiments. We set Scattering Network of depth $J = 2$. And use wavelet filters rotated along eight angles. Therefore, for input images with size $H\times W$, the feature maps after transformation are $(81, H/4, W/4)$. We also conducted group normalization \citep{group_normalization} with 27 groups to the features extracted by ScatteringNet. Then a linear classifier will be built on the ScatteringNet feature. We use Kymotio \url{https://www.kymat.io/} under the 3-Clause BSD License for implementation.

Since there are too many variants, we also provide detailed configurations of ResNet-50 and ResNet-152. ResNet-50 in our paper has 24M parameters and  corresponds to the ``ResNet-50 1X without Selective Kernel'' in \citet{simclrv2}. ResNet-152 has 795M parameters and corresponds to the ``ResNet-152 3X with Selective Kernel'' in \citet{simclrv2}. The checkpoint of the corresponding model can be downloaded from \url{gs://simclr-checkpoints/simclrv2/pretrained} under Apache License 2.0.

\subsection{GDP with Poisson Sampling Improves Utility. }
% \textbf{Improving Utility via GDP with Poisson Sampling. }
We give a comparison of different subsampling schemes and accountants. Under typical setting of \namea~where $T=n=50000$, $\epsilon=1$, $\delta=10^{-5}$, and  $\sigma_x=\sigma_y$, we show the noise $\sigma_x/m$ vs. $m$, injected by ``Uniform RDP'', which is used by \citet{dpmix}, ``Poisson MA'' \citep{dpsgd2016}, ``Poisson RDP'' \citep{zhu2019poission}, ``Uniform GDP'' \citep{dong2019gaussian}, and ``Poisson GDP'' (ours) in \cref{fig:RDPvsGDP}. We can see that our analysis uses the smallest noise for all $m$ and is thus the tightest.

\begin{figure}
\begin{center}
\centerline{\includegraphics[width=0.40\columnwidth]{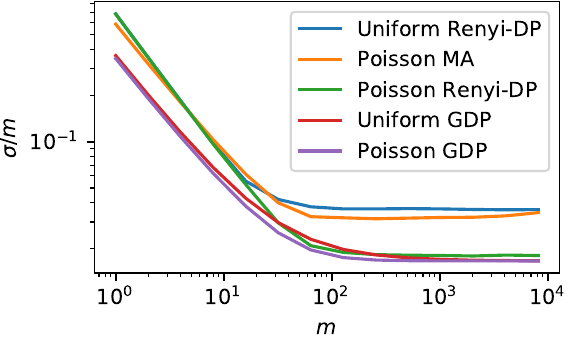}}
    \caption{ Comparison of noise injected to achieve $(1,10^{-5})$-DP by different privacy accountants.
    }
    \label{fig:RDPvsGDP}
\end{center}
\end{figure}

% \begin{wraptable}{r}{0.53\textwidth}
\begin{table}[h]
\small
\centering
\caption{Test accuracy (\%) of DPMix and DPMix with GDP (DPMix-GDP). \label{tab:comparing_acc_RDP_GDP} }
% \vskip 0.15in
\setlength\tabcolsep{2pt}
\begin{tabular}{@{}ccccc@{}}
% \begin{tabular}{lcccccr}
\toprule
             & \multicolumn{2}{c}{MNIST} & \multicolumn{2}{c}{CIFAR10} \\ \midrule
             & DPMix       & DPMix-GDP       & DPMix        & DPMix-GDP        \\
$\epsilon=1$ & 65.82$\pm$5.48& 79.19$\pm$0.38& 18.64$\pm$1.79 & 24.36$\pm$1.25 \\
$\epsilon=2$ & 77.40$\pm$2.28& 82.71$\pm$1.33& 24.12$\pm$3.42 & 30.37$\pm$1.30 \\
$\epsilon=4$ & 81.13$\pm$1.51& 84.09$\pm$0.70& 29.88$\pm$1.04 & 31.97$\pm$0.71 \\
$\epsilon=8$ & 84.41$\pm$0.90& 85.71$\pm$0.28& 31.27$\pm$0.90 & 32.92$\pm$1.03  \\ \bottomrule
\end{tabular}
\vskip -0.1in
\end{table}
% \end{wraptable}
% \label{sec:RDPvsGDP}
We also compare DPMix with Uniform sub-sampled Renyi-DP (DPMix) and Poisson subsampled GDP (DPMix-GDP) in \cref{tab:comparing_acc_RDP_GDP}. The detailed settings are listed in \cref{sec:detailed_setting}. We report the best accuracy with optimal $m^*$. We can see that in all cases, DPMix-GDP is better than DPMix with RDP. In particular, the advantage of DPMix-GDP becomes larger when the privacy budget is tighter. For a fair comparison, we choose GDP as the privacy accountant and report GDP privacy guarantee for DPMix, DPSGD, P3GM, and \namea(-HS).

\subsection{ Experiments about Neural Collapse and Minimum Euclidean Distance}
\label{sec:MED}
First, we give a formal definition of NC metric. Let $x_i$ be the feature, $\mu_G$ be the global mean of features and $\mu_k$ is the mean of features of class $k$. 
The within-class covariance is defined as:
\begin{equation}
    \Sigma_W:=\frac{1}{n}\sum_{i\in[n]}(x_{i}-\mu_{y_i})(x_{i}-\mu_{y_i})^T
\end{equation}
The between-class covariance is defined as:
\begin{equation}
    \Sigma_B:=\frac{1}{K}\sum_{k\in[K]}(\mu_{k}-\mu_G)(\mu_{k}-\mu_G)^T
\end{equation}

The NC metric used in \citet{donoho2020prevalence} is $Tr(\Sigma_W\Sigma_B^\dag/K)$ where $Tr$ is the trace operator, $K$ is the total number of classes, and $\dag$ is Moore–Penrose pseudoinverse.  

$Tr(\Sigma_W\Sigma_B^\dag/K)$ is an indicator of the collapse of within-class variation. When comparing $Tr(\Sigma_W\Sigma_B^\dag/K)$ we clipped the input images and feature of CIFAR10/100 to make them have the same $l_2$-norm. 
ScatteringNet feature extractor is designed for a variety of invariant properties, e.g. (local) translation/rotation/deformation invariant, which will also reduce $Tr(\Sigma_W\Sigma_B^\dag/K)$. We observe that the ScatteringNet reduces the NC metric from 1.53 to 1.16 for MNIST dataset. 

We also calculate the MED for the CIFAR datasets following the experiment setting as described in Sec 2.4 of \cite{gerong2021}. We first clip both features and raw images with $C_x=1$. Then we conduct both input and feature mixup with $m=2$ and $T=n=50000$. Finally, we compute the minimum distance between each mixed sample with clean samples with different labels. This MED is an estimation of ``$\epsilon$'' defined in Figure 6 of \cite{gerong2021}, which is used to characterize the degree of AC. Larger MED corresponds to weaker AC. 

\subsection{Source of Empirical Results in \cref{tab:mnist}}\label{sec:source}
Results of DP-GM, PrivBayes and Ryan's are from Table VII  in \citet{P3GM}.
Results of DP-CGAN, DP-MERF and DP-Sinkhorn are from Table 1 in \citet{DP-Sinkhorn}.
Results of DP-GAN, Pate-GAN, G-PATE, GS-WGAN and DataLens are from Table 1 in \citet{wang2021datalens}.

\section{More Empirical Experiments}

\subsection{Validating \cref{thm: l_2 loss} through simulation}
\label{sec:simulation}
\begin{figure}[h]
% \vskip 0.2in
\centering
\includegraphics[width=0.4\linewidth]{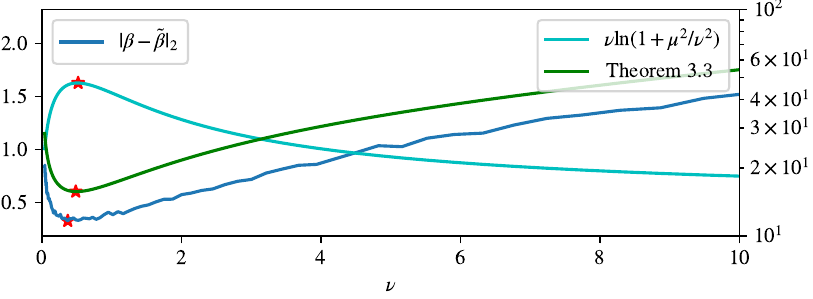}
 \includegraphics[width=0.4\linewidth]{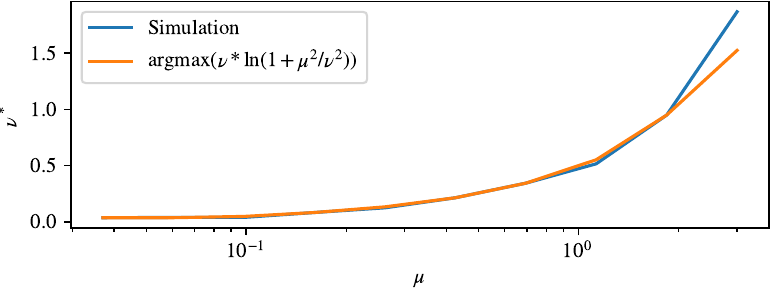}
\caption{Comparison of optimal $\nu^*$ from simulation and \cref{thm: l_2 loss}. Note that the optimal $\nu^*$ corresponds to the optimal mixup degree $m^*$ for fixed $n$ and $T$.}
\label{fig:simulation_error}
% \vskip -0.2in
\end{figure}
We use simulation experiments to validate \cref{thm: l_2 loss} is tight for characterizing optimal $\nu^*$.
Let $X\in \R^{n\times 5}$ and each row of $X$ is sampled from $\sim \mathcal{N}(0,\Sigma_X)$, where $n=300$ and the covariance matrix $\Sigma_{X i,j}=0.3^{|i-j|}$.  We sample $\beta\in \R^{5\times 1}$ from $\sim \mathcal{N}\left(0,\mI \right)$. Let $Y=X\beta+\epsilon$ and each element $\epsilon_i\sim \mathcal{N}\left(0,1\right)$. {Each column of $X$ and $Y$ is subtracted by its mean to ensure that each column's sum is zero.} For \namea, we set $T=3n$, and $\mu=1$. We choose $C_x=\max_i \|X_i\|_2$,$C_y=\max_i \|Y_i\|_2$ such that the clipping operation never takes effect. We run simulation experiments with different $m$ and compare the estimation error $\|\beta-\tilde{\beta}\|_2$ with \cref{thm: l_2 loss} in \cref{fig:simulation_error} (Upper).  
% The results are shown in left figure of \cref{fig:simulation_error}. We could observe that both simulation error and \cref{fig:simulation_error} have a first decrease then increase trends and the optimal $\nu^*$s are very close for the two cases. Further, maximizing $\nu+\ln\left(1+{\mu^2}/{\nu^2}\right)$ also gives similar $\nu^*$. 
\cref{fig:simulation_error} Lower studies the effect of $\mu$ to $\nu^*$ by changing $\mu\in[3^{-3},3]$. We report medians of 20 independent simulations.
% We find that the $\nu^*$ from simulation and \cref{thm: l_2 loss} are still very close and have an increasing trend w.r.t. $\mu$. We also observe this phenomenon in real-world datasets (see \cref{fig:m}).  

\subsection{Hyper-parameter Tuning of \namea}\label{sec:hyperparameter}
\textbf{Keeping a balance between noise on features and labels by tuning $\lambda$:}
The detailed grid search results are shown in Appendix \cref{fig:lambda}. There are different optimal $\lambda$ for each setting. MNIST and CIFAR10 prefer $\lambda\in[1,4]$. CIFAR100 dataset prefers $\lambda\in[0.5,2]$ since CIFAR100 has more categories and will be more sensitive to the noise on labels. When searching optimal $\lambda$ is not feasible in practice, we think $\lambda\in[1,2]$ could be a good choice. For example, comparing with optimal settings, the maximal loss on test accuracy is 1.49\%, 1.08\%, and 0.46\% for MNIST, CIFAR10, and CIFAR100, respectively when we set $\lambda=1$.  

\textbf{Choosing mixup degree $m$:} 
One feature of DPMix is that it mixup $m$ samples to reduce the operation's sensitivity and the DP noise. We conduct a gird search to find optimal $m$ with $\lambda=1$; the results are shown in \cref{fig:m}. There are three key observations. First, in all cases, including DPMix and \namea, the utility has an increasing-decreasing trend, and there is an optimal $m^*$ that can maximize the utility. 
% On the one hand, small $m$ leads to too large noise and makes it hard to train classifiers. On the other hand, large $m$ will mixup too many samples and will also damage the utility. 
Second, for the same task, smaller privacy budgets tend to have smaller optimal $m$. 
% For example, in experiments of ``GDPMix-F Scattering on MNIST '', optimal $m$ for $\epsilon=1,2,4,8$ is $32,64,128,256$.
% That shows that if we search suitable $m$ in different experiments, the utility can be further improved compared with a fixed $m$.
Third, when tuning $m$ is not feasible, setting an empirical value for $m$ still works and will not lose too much utility. From \cref{fig:m}, we can see that for all three datasets, the $m^*$ is between 32 and 256. We may choose $m=64$ to be an empirical choice. Then the maximal loss in test accuracy is 0.74\%, 0.46\%, and 3.40\% for MNIST, CIFAR10, and, CIFAR100, respectively, and that should be accepted.

 \begin{table*}[h!]
\centering
\caption{CIFAR10 test accuracy (\%) of DPMix and \namea~(Ours) with different models. We show the convergence test accuracy before ``+'' and the improvement of reporting best accuracy during training after ``+''.}
\label{larger_model_full}
% \vskip 0.1in
\small
\begin{tabular}{llllll}
\toprule
$\epsilon$       & 1           & 2           & 4           & 8           & Non-private    \\
\midrule
DPMix            & 24.36+5.60  & 30.37+2.07  & 31.97+1.72  & 32.92+3.01  & 79.22 \\
DPMix ResNet-50   & 14.48+10.62 & 18.73+7.98  & 12.61+8.69  & 15.52+16.19 & 94.31 \\
DPMix ResNet-152  & 20.77+3.99  & 17.54+11.52 & 17.06+12.50 & 15.35+15.31 & 95.92 \\ \midrule
Ours ResNet-50  & 75.58+0.36  & 77.31+0.20  & 79.02+0.14  & 80.75+0.04  & 94.31 \\
Ours ResNet-152 & 87.74+0.29  & 89.01+0.13  & 90.21+0.11  & 90.83+0.04  & 95.92\\
\bottomrule
\end{tabular}
\end{table*}

\begin{figure*}[ht]
% \vskip 0.2in
\begin{center}
\centerline{
\includegraphics[width=0.3\textwidth]{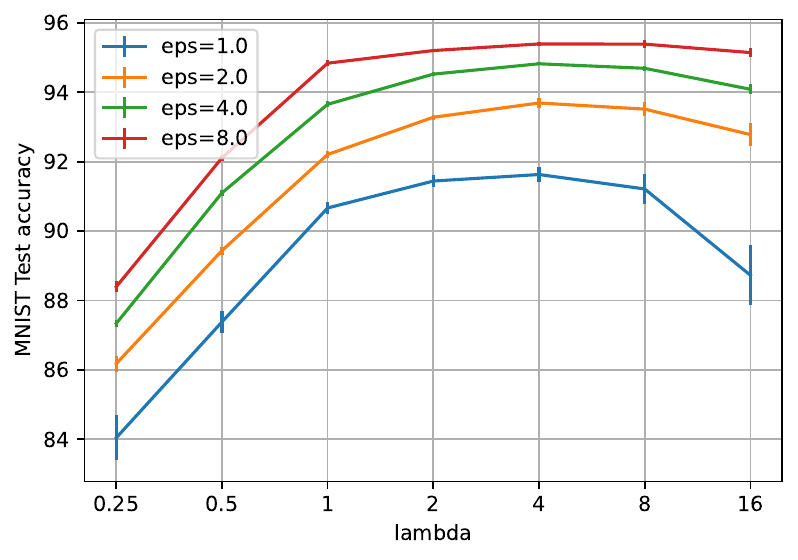} \includegraphics[width=0.3\textwidth]{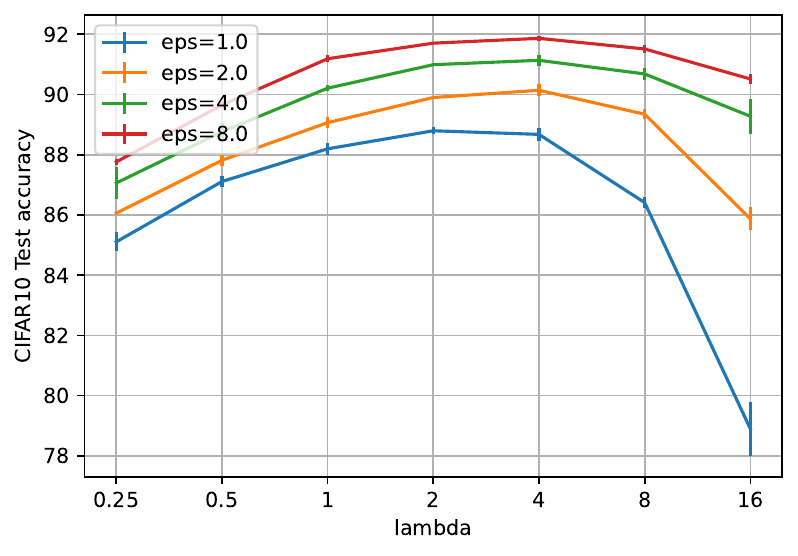} \includegraphics[width=0.3\textwidth]{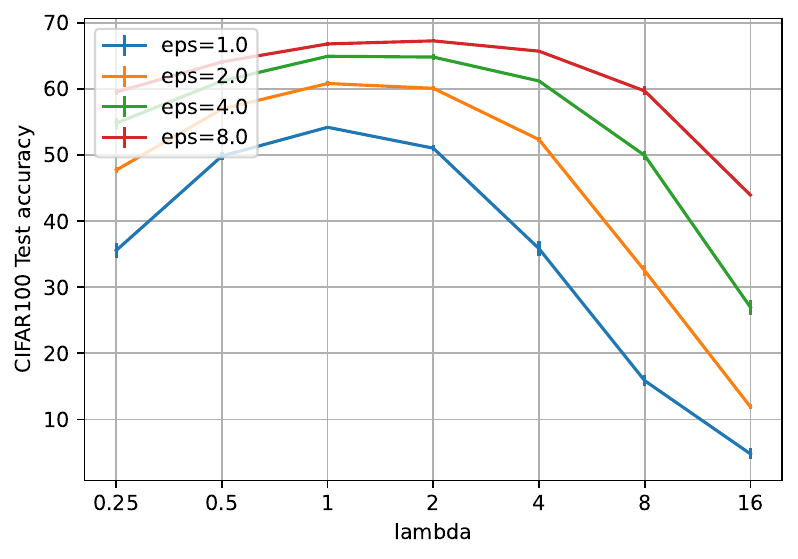}
}
\caption{These figures show how utility changed w.r.t. $\lambda$ under different settings. }
\label{fig:lambda}
\end{center}
% \vskip -0.2in
\end{figure*}

\begin{table}[h!]
\small
\centering
\caption{The table shows MNIST test accuracy (\%) of DPMix using CNN or ScatteringNet as  classifier, we show the best test accuracy with $m^*$. ScatteringNet classifier performs better }
\vskip 0.1in
\label{ScatteringNetVSCNN}
\begin{tabular}{@{}ccc@{}}
\toprule
             & DPMix with CNN      & DPMix with  ScatteringNet   \\ \midrule
$\epsilon=1$ & 79.19$\pm$0.38 &82.16$\pm$0.54 \\
$\epsilon=2$ & 82.71$\pm$1.33 &83.28$\pm$0.53 \\
$\epsilon=4$ & 84.09$\pm$0.70 &84.90$\pm$0.26 \\
$\epsilon=8$ & 85.71$\pm$0.28 &86.00$\pm$0.28 \\
% NP           & 99.2        & 93.6        & 99.3        & 99.2        \\ 
\bottomrule
\end{tabular}
% \vskip -0.2in
\end{table}

\subsection{Membership inference attack against \namea~and \nameb} \label{app:membership}
Due to space limitations, we are unable to include the complete results of the membership inference attack in the main text. However, in this section, we provide a comprehensive presentation of the attack results in \cref{tab:membershipfull}, which also includes a performance comparison of DPSGD. It is important to note that all the methods listed in \cref{tab:membershipfull} offer a differential privacy (DP) guarantee, resulting in their achieved Area Under the Curve (AUC) values being very close to perfect protection, i.e., 0.5. Consequently, smaller privacy budgets correspond to reduced utility but also smaller AUC values. Notably, in the low $\epsilon$ range (i.e., $\epsilon \leq 0.5$), \nameb~demonstrates favorable utility while maintaining an AUC that is almost on par with perfect protection.

\begin{table*}[h]
\caption{Membership inference on CIFAR100}
\label{tab:membershipfull}
\centering
\setlength\tabcolsep{3.2pt}
\begin{tabular}{@{}ccccccc@{}} \toprule
               & \multicolumn{2}{c}{DPSGD}          & \multicolumn{2}{c}{\namea} & \multicolumn{2}{c}{\nameb} \\
               & Test Acc       & AUC               & Test Acc           & AUC                  & Test Acc           & AUC                  \\ \midrule
$\epsilon=8$   & 72.77$\pm$0.15 & 0.5097$\pm$0.0002 & 67.15$\pm$0.29     & 0.5078$\pm$0.0004    & 73.39$\pm$0.20     & 0.5117$\pm$0.0001    \\
$\epsilon=4$   & 72.40$\pm$0.14 & 0.5096$\pm$0.0002 & 64.32$\pm$0.11     & 0.5073$\pm$0.0006    & 72.54$\pm$0.08     & 0.5117$\pm$0.0004    \\
$\epsilon=2$   & 71.37$\pm$0.22 & 0.5092$\pm$0.0003 & 60.61$\pm$0.42     & 0.5059$\pm$0.0007    & 71.88$\pm$0.14     & 0.5107$\pm$0.0003    \\
$\epsilon=1$   & 68.13$\pm$0.39 & 0.5091$\pm$0.0009 & 54.32$\pm$0.50     & 0.5052$\pm$0.0014    & 71.72$\pm$0.09     & 0.5090$\pm$0.0003    \\
$\epsilon=0.5$ & 59.53$\pm$0.68 & 0.5052$\pm$0.0012 & 43.21$\pm$0.84     & 0.5048$\pm$0.0008    & 70.23$\pm$0.04     & 0.5096$\pm$0.0004    \\
$\epsilon=0.2$ & 38.90$\pm$0.50 & 0.5039$\pm$0.0016 & 12.54$\pm$0.58     & 0.5015$\pm$0.0011    & 65.78$\pm$0.41     & 0.5066$\pm$0.0004    \\
$\epsilon=0.1$ & 18.25$\pm$0.58 & 0.5039$\pm$0.0009 & 2.72$\pm$0.46      & 0.4984$\pm$0.0026    & 57.53$\pm$0.64     & 0.5056$\pm$0.0009   \\ \bottomrule
\end{tabular}
\end{table*}

\subsection{Comparing DPSGD and NeuraMixGDP-HS on the ImageNet dataset}
the ImageNet dataset which contains 1.28M samples of 1000 classes. We set $\delta=1e-7$
 and report test accuracy with different $\epsilon$ in the Table below. We only evaluate CLIP feature extractor since SimCLRv2 is pretrained on the ImageNet dataset. The Table shows that our methods achieve comparable utility to DPSGD when $\epsilon$
 is large and much better utility when $\epsilon$
 is smaller, demonstrating our potential of releasing large-scale datasets.

 \begin{table}[h]
 \centering
 \caption{Comparing DPSGD and NeuraMixGDP-HS on the ImageNet dataset}
\begin{tabular}{@{}llllll@{}}
\toprule
      & $\epsilon=0.1$ & $\epsilon=0.2$ & $\epsilon=0.5$ & $\epsilon=1.0$ & $\epsilon=2.0$ \\ \midrule
DPSGD & 59.07$\pm$1.25 & 72.06$\pm$0.42 & 74.51$\pm$0.59 & 76.90$\pm$0.31 & 77.40$\pm$0.21 \\
Ours  & 74.21$\pm$0.34 & 76.16$\pm$0.36 & 77.30$\pm$0.24 & 77.50$\pm$0.30 & 77.97$\pm$0.18 \\ \bottomrule
\end{tabular}
\end{table}

\subsection{Avg-Mix  out performs RW-Mix on CIFAR10 dataset}
\cref{fig:random_weight} shows CIFAR100 test accuracy of Avg-Mix and RW-Mix with $(\epsilon=1,\delta=1e-5)$-DP. We set $C_x=C_y=1$,$T=50000$ and conduct a grid search on $m\in [1,4096]$ and $\lambda\in[0.25,16]$. We also conduct experiments on CIFAR10 dataset with above settings. \cref{fig:random_weight_cifar10} shows Avg-Mix consistently out performs RW-Mix duo to lower sensitivity on CIFAR10 dataset. 
\begin{figure}[h]
    \centering
\includegraphics{}
    \caption{Avg-Mix  out performs RW-Mix on CIFAR10 dataset}
    \label{fig:random_weight_cifar10}
\end{figure}

\subsection{More cases of Figure 2}
Figure 2 shows the distribution of mixed label $\tilde y$. We conduct experiments by varing the mixup degree $m\in[128,256,512,1024,2048,4096]$ and class sample rate $p\in{0.1 0.3 0.5}$ on CIFAR10 dataset. We find in all cases, $\tilde y$ from Avg-Mix PS are concentrated on $1/K$ and overwhelmed by DP noise while HS produces strong class components, which are distinguishable from DP noise.

\begin{figure}[h]
    \centering
\includegraphics[width=0.80\columnwidth]{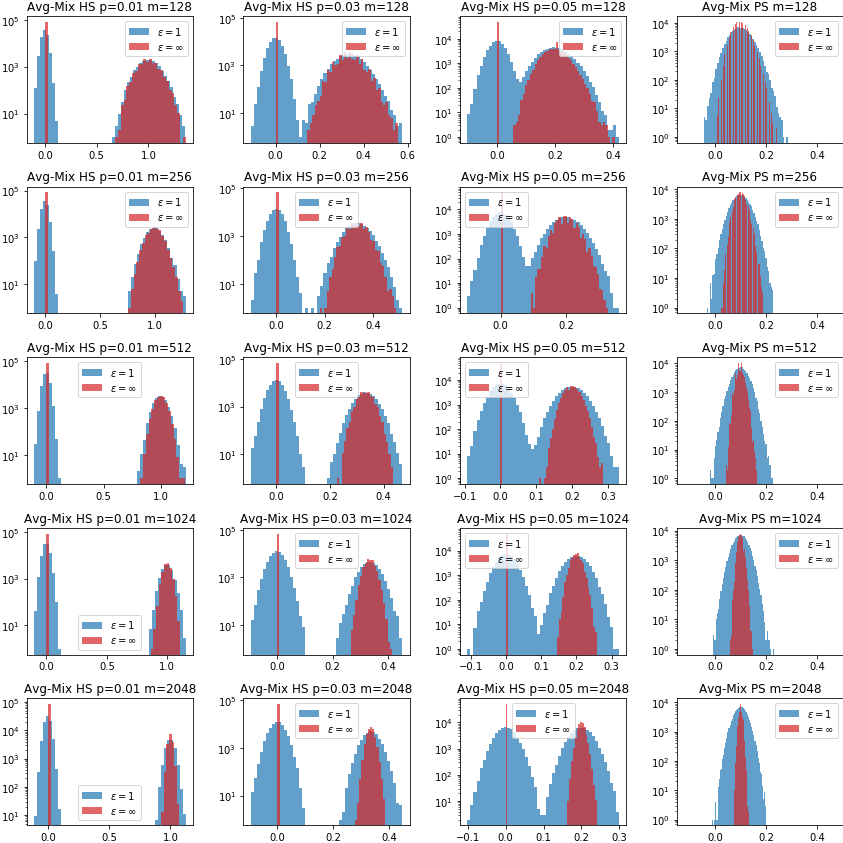}
    \caption{More cases of Figure 2.}
    \label{fig:more_figure2}
\end{figure}

\section{Model Inversion Attack}\label{sec:modelinversion}
This section introduces the background of model inversion attacks and implementation details. Some visualization results are also shown here.  
\citet{he2019modelinversion} first introduced regularized Maximum Likelihood Estimation (rMLE) to recover the input data in the split learning framework. Under the white-box attack setting, the attackers know intermediate features and the neural network with parameters. Since \namea(-HS) also releases features and does not assume the feature extractor is private, white-box MIA is a proper attack method for \namea(-HS). We will modify the attack algorithm to adopt mixup setting and show we can defend this type of attack using extensive experiments.
 Let us first recall the white-box attack by \citet{he2019modelinversion}. Let $\vx_0$ be the original data; the original MIA uses an optimization algorithm to find $\hat x$ as the recovered input.
\begin{equation*}
    \hat \vx = \argmin_\vx \|f_1(\theta_1,\vx_0)-f_1(\theta_1,\vx) \|_2^2+\lambda TV(\vx)
\end{equation*}
Where $f_1(\theta_1,\cdot)$ is the feature extractor, $TV$ is the total variation loss proposed in \citet{tvloss}.

For simplicity, we denote the norm clipping and average process as  $\bar f_1$. For example, $\bar f_1(\theta_1,\vx_0)$ represent $\frac{1}{m} \sum_{i=1}^{m} \left( f_1(\theta_1,x_{0}^{(i)})/ \max(1, \| f_1(\theta_1,x_{0}^{(i)}) \|_2/C ) \right) $. Let $\bar{TV}(\vx)$ denotes $\frac{1}{m}\sum_{i=1}^{m} TV(\vx^{(i)})$. Then, recall that $\bar \vx_t$ is in the released feature in \namea(-HS), then the following attack could use the following process to recover the private data. 

\begin{equation*}
    \hat \vx = \argmin_\vx \|\bar \vx_t-\bar f_1(\theta_1,\vx) \|_2^2+\lambda \bar{TV}(\vx)
\end{equation*}

% Again, for simplicity, we denote the clipping. averaging and perturbing process as $\tilde{f_1}$. For example, $\tilde f_1(\theta_1,x_0)$ represent $\bar f_1(\theta_1,x_0) + e $ where each dimension of $e$ is from $\N(0,\sigma_{DP}^2)$.

We implement such an attack on MNIST and CIFAR10 datasets with ScatteringNet features since MIA towards a larger model like ResNet-152 could be hard even without feature mixup and DP noise \citep{he2019modelinversion} (See Appendix \cref{fig:mia_resnet} for MIA on ResNet-152 features).  We use random images sampled from uniform distribution as the initialization of $x$ and use Adam \citep{kingma2015adam} with $\lambda=0.0001$, the learning rate of 0.1, and maximal steps of 5000 to obtain $\hat \vx$.

\begin{figure*}[h!]
    \centering
    \begin{tabular}{c}
        \includegraphics[height = 3.2cm]{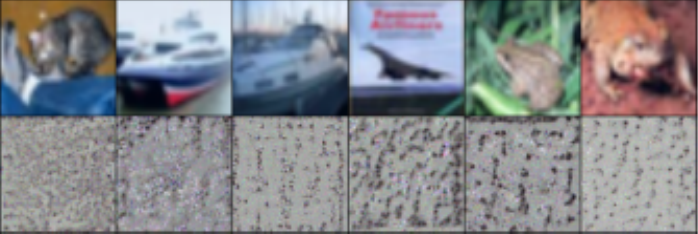}\\
    \end{tabular}
    \caption{Attacking from features without mixup and DP noise of ResNet-152 on CIFAR10 dataset. MIA fails to attack such a deep model with 152 layers. This observation meets the one in \citet{he2019modelinversion}  }
    \label{fig:mia_resnet}
\end{figure*}

\begin{figure*}[h!]
    \centering
    \begin{tabular}{cccc}
        \includegraphics[height = 3.2cm]{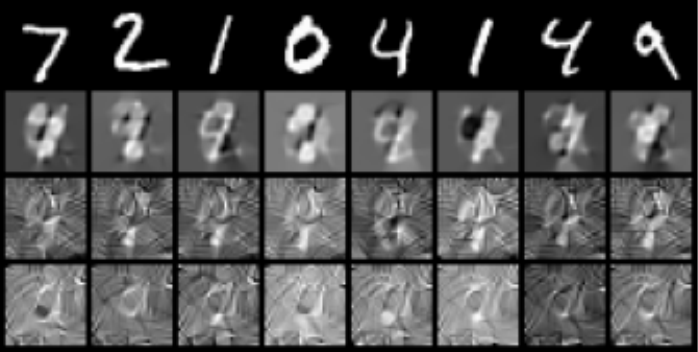}\\
        \includegraphics[height = 3.2cm]{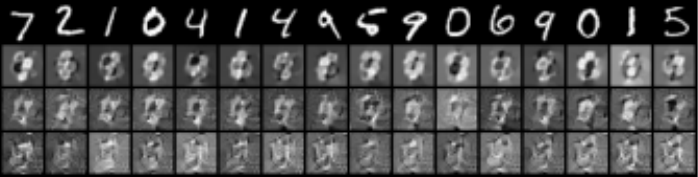}\\
        \includegraphics[height = 3.2cm]{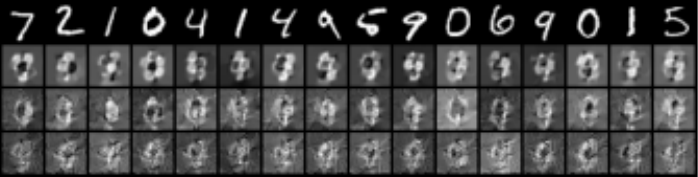}\\
        \includegraphics[height = 3.2cm]{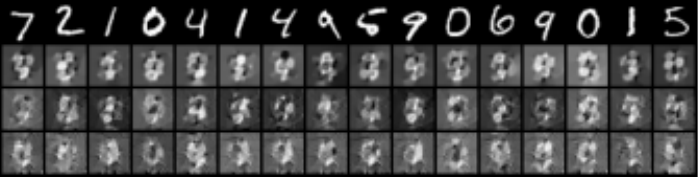}\\
    \end{tabular}
    \caption{Attacking from features of Scattering net on MNIST  dataset. The first row corresponds to raw images, and the following three correspond to recovered images by MIA with $\sigma=0$, $\epsilon=8$, and $\epsilon=1$, respectively. Due to the limited space, we present $m=8,16,32,64$ cases here and for $m\geq 16$ cases, we only show the nearest neighbors of the first 16  raw images.     }
    \label{fig:mixed_16_mnist}
\end{figure*}
% \newpage
\begin{figure*}[h!]
    \centering
    \begin{tabular}{cccc}
        \includegraphics[height = 3.2cm]{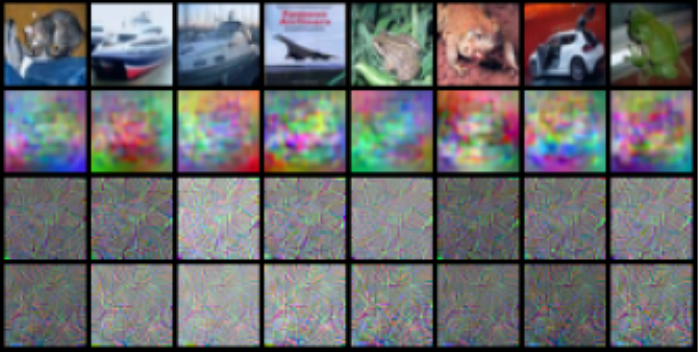}\\
        \includegraphics[height = 3.2cm]{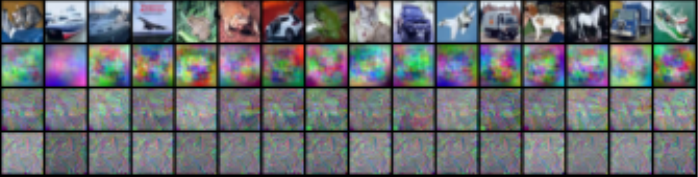}\\
        \includegraphics[height = 3.2cm]{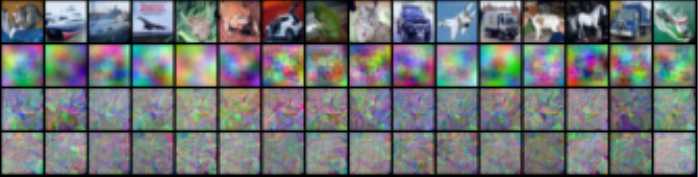}\\
        \includegraphics[height = 3.2cm]{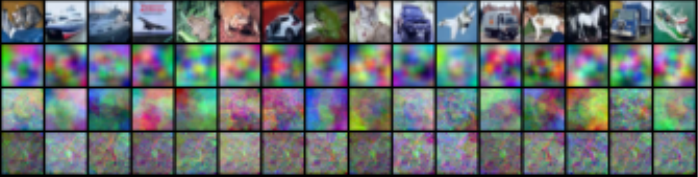}\\
    \end{tabular}
    \caption{Attacking from features of Scattering net on CIFAR10 dataset. The first row corresponds to raw images, and the following three correspond to recovered images by MIA with $\sigma=0$, $\epsilon=8$, and $\epsilon=1$, respectively. Due to the limited space, we present $m=8,16,32,64$ cases here, and for $m\geq16$ cases, we only show the nearest neighbors of the first 16  raw images.     }
    \label{fig:mixed_16_cifar10}
\end{figure*}

\end{document}